\documentclass[10pt]{article}
\usepackage[preprint]{tmlr}   % preprint mode: authors shown, no "under review" header

\usepackage[american]{babel}
\usepackage{mathtools}
\usepackage{booktabs}
\usepackage{subcaption}
\usepackage{amssymb}
\usepackage{amsthm}
\usepackage{multirow}
\usepackage{graphicx}
\usepackage{hyperref}
\usepackage{url}

\newcommand{\E}{\mathbb{E}}
\newcommand{\LLR}{\mathrm{LLR}}
\newcommand{\Normal}{\mathcal{N}}
\newcommand{\Exp}{\mathrm{Exp}}
\newcommand{\Student}{t}
\newcommand{\NIG}{\mathrm{NIG}}

\newtheorem{proposition}{Proposition}
\newtheorem{corollary}{Corollary}

\makeatletter
\newcommand{\blfootnote}[1]{%
  \begingroup
    \stepcounter{footnote}%
    \renewcommand{\thefootnote}{}%
    \renewcommand{\@makefnmark}{}%
    \long\def\@makefntext##1{\parindent 1em\noindent ##1}%
    \footnotetext{#1}%
  \endgroup
}
\makeatother

\title{Exponential-Family Membership Inference:\\ From LiRA and RMIA to BaVarIA}

\author{\name Rickard Br\"annvall \email rickard.brannvall@ri.se \\
      \addr RISE Research Institutes of Sweden}

\begin{document}
\maketitle

\blfootnote{Supported by Vinnova, Sweden's innovation agency, through the project LEAKPRO II: Operational Privacy Risk Management for AI Systems (ref.\ 2025-03066). Code available at: \url{https://anonymous.4open.science/r/exponential-family-mia-C2B7}.}

\begin{abstract}
Membership inference attacks (MIAs) are becoming standard tools for auditing the privacy of machine learning models.
The leading attacks---LiRA \citep{carlini2022membership} and RMIA \citep{zarifzadeh2024lowcost}---appear to use distinct scoring strategies, while the recently proposed BASE \citep{lassila2025base} was shown to be equivalent to RMIA, making it difficult for practitioners to choose among them.
We show that all three are instances of a single exponential-family log-likelihood ratio framework,
differing only in their distributional assumptions and the number of parameters estimated per data point.
This unification reveals a hierarchy (BASE1--4) that connects RMIA and LiRA as endpoints of a spectrum of increasing model complexity, and yields a practical rule---match the attack's complexity to the available shadow-model budget.
Within this framework, we identify variance estimation as a primary bottleneck at small shadow-model budgets and propose BaVarIA, a Bayesian variance inference attack that replaces threshold-based parameter switching with conjugate normal-inverse-gamma priors.
BaVarIA yields a Student-$t$ predictive (BaVarIA-$t$) or a Gaussian with stabilized variance (BaVarIA-$n$), providing stable performance without per-dataset hyperparameter tuning.
Across 12 testbeds and 7 shadow-model budgets, BaVarIA is a drop-in replacement for LiRA that matches or, on average, improves upon it. The gains are largest in the practically important low-shadow-model and offline regimes: offline, the Bayesian prior replaces LiRA's heuristic and outperforms it on 10 of 12 testbeds.
\end{abstract}

\section{Introduction}\label{sec:intro}

Membership inference attacks (MIAs) determine whether a specific data point was used to train a machine learning model \citep{shokri2017membership}.
Beyond their role as privacy attacks, MIAs serve as empirical auditing tools: they provide lower bounds on the privacy leakage of trained models \citep{nasr2023tight, steinke2024privacy}, complementing theoretical guarantees from differential privacy \citep{dwork2006calibrating}.
As with any auditing technique, the same methods could in principle be misused; we focus on the defensive application throughout (see Appendix~\ref{app:broader_impact}).

The current landscape of score-based MIAs presents practitioners with several competing approaches.
LiRA \citep{carlini2022membership} fits per-point Gaussian models to shadow-model log-odds and computes a likelihood-ratio score.
RMIA \citep{zarifzadeh2024lowcost} uses a population-level reference to avoid per-point parameter estimation.
BASE \citep{lassila2025base} centers the target loss against a pooled shadow summary.
\citet{lassila2025base} proved that BASE and RMIA are equivalent, but the connection between these pooled, location-only methods and LiRA's per-point class-conditional Gaussian has not been established.

We clarify the relationship by showing that LiRA, RMIA, and BASE are all instances of a single framework.
The observation behind this is simple: each attack implicitly assumes a parametric distribution for a scalar summary statistic (loss, confidence, or log-odds) under the IN and OUT membership hypotheses, and then computes the corresponding log-likelihood ratio (LLR).
Different distributional assumptions and parameter-sharing constraints recover different attacks.

This unification yields three contributions:

\textbf{(1) A unifying framework (Section~\ref{sec:framework}).}
We formalize the exponential-family LLR and show that it specializes to known attacks under specific distributional assumptions (Exponential, Gaussian).
We define the BASE hierarchy (BASE1--4), a family of Gaussian LLR attacks with progressively relaxed parameter sharing, connecting RMIA/BASE at one end to LiRA at the other.
The hierarchy also yields two intermediate attacks between these endpoints and turns the choice of attack into a rule we test empirically: match the attack's complexity to the shadow-model budget.

\textbf{(2) BaVarIA: Bayesian Variance Inference Attack (Section~\ref{sec:bavaria}).}
As a secondary, applied contribution---a plug-in for LiRA rather than a replacement for RMIA---we address a concrete weakness: LiRA's performance degrades at small shadow-model budgets $K$ because per-point variance estimates become unreliable.
We replace maximum likelihood estimation with conjugate normal-inverse-gamma (NIG) Bayesian inference, yielding two variants: BaVarIA-$t$ (Student-$t$ predictive) and BaVarIA-$n$ (Gaussian with Bayesian variance).
Both provide Bayesian shrinkage that moves from global to per-point estimates as $K$ grows, with all posterior updates available in closed form.
In the large-$K$ limit both recover online LiRA exactly. The same prior also gives a natural attack for the offline setting, where it improves on LiRA.

\textbf{(3) Empirical evaluation (Section~\ref{sec:experiments}).}
We evaluate all methods across 12 testbeds (image and tabular), 7 shadow-model budgets ($K \in \{4, \ldots, 254\}$), and 32 experimental replicates.
On average, BaVarIA-$n$ matches or improves over LiRA at $K \geq 16$, while BaVarIA-$t$ gives the strongest AUC overall. The hierarchy's predicted complexity-versus-budget ordering is supported for TPR at low FPR on all 12 testbeds at both budget extremes. 
We also validate findings on an independent shadow-model collection.

% ==================================================================
\section{Background}\label{sec:background}

\paragraph{Setup.}
Let $D = \{(x_i, y_i)\}_{i=1}^N$ be a dataset.
A target model $f(\cdot; \theta_0)$ is trained on a subset $S_0 \subset D$, and $K$ shadow models $f(\cdot; \theta_k)$, $k = 1, \ldots, K$, are each trained on subsets $S_k \subset D$.
For a fixed point $i$, the membership indicator $m_{i,k} \in \{0,1\}$ records whether $(x_i, y_i) \in S_k$.
Membership is known for shadows ($k > 0$) and unknown for the target ($k = 0$).

We observe the per-point loss $\ell_{i,k} = \ell(f(x_i; \theta_k), y_i)$, the predicted confidence $p_{i,k}$ for the true label, and form scalar statistics such as the rescaled logit $\varphi_{i,k} = \log(p_{i,k} / (1 - p_{i,k}))$.
Under cross-entropy loss, $\ell = -\log p$, so all three quantities are monotone transforms of the same signal.

\paragraph{Optimal membership score.}
\citet{sablayrolles2019white} showed that the Bayes-optimal membership score is the log-likelihood ratio
\begin{equation}\label{eq:llr}
\LLR(z) = \log \frac{p(z \mid m=1)}{p(z \mid m=0)},
\end{equation}
where $z$ is any sufficient scalar statistic of the model output.
Their analysis further shows that the optimal attack depends on the model only through the loss, so black-box access to a scalar output statistic is sufficient---justifying the restriction to scalar statistics throughout this paper.
All score-based MIAs can be viewed as estimating this quantity under different modeling assumptions.

\paragraph{LiRA.}
For each point $i$, LiRA \citep{carlini2022membership} models the rescaled logit $\varphi_{i,k} \mid m = m' \sim \Normal(\mu_{i,m'}, \sigma_{i,m'}^2)$ with parameters estimated from shadow models.
Let $\mathcal{K}_{i,m} = \{k > 0 : m_{i,k} = m\}$.
The LiRA score is
\begin{equation}\label{eq:lira}
\text{LiRA}_i = \frac{(\varphi_{i,0} - \hat\mu_{i,0})^2}{2\hat\sigma_{i,0}^2} - \frac{(\varphi_{i,0} - \hat\mu_{i,1})^2}{2\hat\sigma_{i,1}^2} + \log \frac{\hat\sigma_{i,0}}{\hat\sigma_{i,1}},
\end{equation}
where $\hat\mu_{i,m}$ and $\hat\sigma_{i,m}^2$ are the MLE from $\{\varphi_{i,k}\}_{k \in \mathcal{K}_{i,m}}$.

\paragraph{RMIA and BASE.}
RMIA \citep{zarifzadeh2024lowcost} computes a likelihood ratio against a population reference, avoiding per-point parameter estimation.
BASE \citep{lassila2025base} scores each point by centering the target loss against a pooled shadow summary:
\begin{equation}\label{eq:base}
\text{BASE}_i = -\ell_{i,0} - \log\!\left(\tfrac{1}{K} \textstyle\sum_{k=1}^K e^{-\ell_{i,k}}\right).
\end{equation}
\citet{lassila2025base} proved that BASE and RMIA produce equivalent ROC curves at $\gamma = 1$, where $\gamma$ denotes RMIA's acceptance threshold on the ratio of the target's probability to the population reference.

% ==================================================================
\section{Exponential-Family Framework}\label{sec:framework}

We now show that the attacks of Section~\ref{sec:background} are instances of a single parametric framework.
For a fixed point $i$, assume the scalar statistic $z_{i,k} \mid m_{i,k} = m$ follows an exponential-family distribution
\begin{equation}\label{eq:expfam}
p(z \mid m) = h(z) \exp\!\big(\eta_m^\top T(z) - A(\eta_m)\big),
\end{equation}
with sufficient statistics $T(z)$, natural parameters $\eta_m$, base measure $h(z)$, and log-partition function $A$.
The LLR~\eqref{eq:llr} then takes the form
\begin{equation}\label{eq:expfam_llr}
\LLR(z) = (\eta_1 - \eta_0)^\top T(z) - \big(A(\eta_1) - A(\eta_0)\big),
\end{equation}
which is an affine function of $T(z)$.
Under any exponential-family model, the optimal membership score is thus a linear combination of the sufficient statistics, plus a constant offset.

\subsection{Distributional Specializations}

We consider two primary distributional families; others (Beta, Gamma) are developed in Appendix~\ref{app:derivations}.

\paragraph{Exponential model (loss statistic).}
Assume $\ell_{i,k} \mid m \sim \Exp(\lambda_m)$.
The LLR is
\begin{equation}\label{eq:exp_llr}
\LLR(\ell) = \log \frac{\lambda_1}{\lambda_0} - (\lambda_1 - \lambda_0)\, \ell,
\end{equation}
which is linear in the target loss without any structural constraint---the one-parameter exponential family yields an affine LLR automatically.

\paragraph{Gaussian model (log-odds statistic).}
Assume $\varphi_{i,k} \mid m \sim \Normal(\mu_m, \sigma_m^2)$.
The LLR is
\begin{equation}\label{eq:gauss_llr}
\LLR(\varphi) = \frac{(\varphi - \mu_0)^2}{2\sigma_0^2} - \frac{(\varphi - \mu_1)^2}{2\sigma_1^2} + \log \frac{\sigma_0}{\sigma_1}.
\end{equation}
Under equal variances $\sigma_0^2 = \sigma_1^2 = \sigma^2$, this reduces to
\begin{equation}\label{eq:gauss_equal_var}
\LLR(\varphi) = \frac{\mu_1 - \mu_0}{\sigma^2} \left(\varphi - \frac{\mu_1 + \mu_0}{2}\right),
\end{equation}
which is linear in $\varphi$.

Table~\ref{tab:expfam} summarizes the distributional families and their LLR forms.

\begin{table}%[t]
\centering
\caption{Exponential-family LLR-based attacks.}\label{tab:expfam}
\small
\begin{tabular}{lccc}
\toprule
Model & Statistic $z$ & Params & LLR form \\
\midrule
Exponential & loss $\ell$ & 2 & linear in $\ell$ \\
Gaussian & log-odds $\varphi$ & 4 & $\varphi, \varphi^2$ \\
\quad (equal var.) & log-odds $\varphi$ & 3 & linear in $\varphi$ \\
\midrule
\multicolumn{4}{l}{\emph{Additional families (Appendix~\ref{app:derivations}):}} \\
Gamma & loss $\ell$ & 4 &  $\ell, \log\ell$ \\
Beta & confidence $p$ & 4 & $\log p, \log(1{-}p)$ \\
\bottomrule
\end{tabular}
\end{table}

\subsection{The BASE Hierarchy}\label{sec:base_hierarchy}

We define a hierarchy of four attacks by progressively relaxing parameter-sharing constraints.
BASE1--4 follow from the Gaussian LLR~\eqref{eq:gauss_llr} under decreasing levels of parameter sharing; BASE1 can also be derived from any one-parameter exponential family, including the exponential~\eqref{eq:exp_llr}, and Gamma models (Appendix~\ref{app:base_formulas}).
All variants operate on a scalar statistic $z_{i,k}$ (loss, log-odds, or confidence) and differ only in how parameters are estimated from shadow models.
The derivation chain and two formal corollaries (LiRA $=$ BASE4, BASE $=$ BASE1) are given in Appendix~\ref{app:base_formulas}; here we summarize.

\paragraph{BASE1 (pooled centering).}
Under a constant LLR slope, only the per-point centering affects the ranking; pooling all shadows to estimate this center gives:
\begin{equation}\label{eq:base1}
\text{BASE1}_i = z_{i,0} - \hat{z}_{i,\text{pool}},
\end{equation}
where $\hat{z}_{i,\text{pool}}$ is estimated from all $K$ shadows.
Setting $z = -\ell$ with log-sum-exp centering recovers the BASE score~\eqref{eq:base}; the same result follows independently from the exponential and Gamma models (Appendix~\ref{app:base_formulas}).

\paragraph{BASE2 (separate means, constant variance).}
Estimating separate IN and OUT means but sharing one variance across all samples, the LLR reduces (up to the irrelevant constant variance) to
\begin{equation}\label{eq:base2}
\text{BASE2}_i \propto (\hat\mu_{i,1} - \hat\mu_{i,0})\left(z_{i,0} - \tfrac{1}{2}(\hat\mu_{i,1} + \hat\mu_{i,0})\right),
\end{equation}
with per-point IN/OUT means $\hat\mu_{i,1},\hat\mu_{i,0}$ and a shared variance whose value does not affect the ranking (ROC and AUC are invariant to it). This isolates the contribution of class-mean separation alone, with no per-sample variance estimation.

\paragraph{BASE3 (separate means, pooled variance).}
Relaxing the constant-gap assumption, we estimate means separately for IN and OUT shadows while pooling variance:
\begin{equation}\label{eq:base3}
\text{BASE3}_i = \frac{\hat\mu_{i,1} - \hat\mu_{i,0}}{\hat\sigma_i^2} \left(z_{i,0} - \frac{\hat\mu_{i,1} + \hat\mu_{i,0}}{2}\right).
\end{equation}

\paragraph{BASE4 (class-conditional parameters).}
Removing all parameter-sharing constraints and substituting MLE into the Gaussian LLR~\eqref{eq:gauss_llr} gives four per-point parameters $(\hat\mu_{i,m}, \hat\sigma_{i,m}^2)$ for $m \in \{0,1\}$:
\begin{equation}\label{eq:base4}
\text{BASE4}_i = \frac{(z_{i,0} - \hat\mu_{i,0})^2}{2\hat\sigma_{i,0}^2} - \frac{(z_{i,0} - \hat\mu_{i,1})^2}{2\hat\sigma_{i,1}^2} + \log \frac{\hat\sigma_{i,0}}{\hat\sigma_{i,1}}.
\end{equation}

\begin{proposition}[Equivalences]\label{prop:equiv}
Under the standard implementations:
(a) BASE1 on loss with log-sum-exp centering is ROC-equivalent to RMIA at $\gamma = 1$;
(b) BASE4 on rescaled logits coincides with LiRA~\eqref{eq:lira}.
\end{proposition}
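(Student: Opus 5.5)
Both parts reduce to direct substitution into earlier formulas, so the plan is to unfold definitions and check that the resulting scores agree, either pointwise or up to a strictly increasing map.

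\emph{Part (b).} Set $z_{i,k}=\varphi_{i,k}$, the rescaled logit, in \eqref{eq:base4}. In both BASE4 and LiRA~\eqref{eq:lira}, $(\hat\mu_{i,m},\hat\sigma_{i,m}^2)$ are the MLEs computed from $\{\varphi_{i,k}\}_{k\in\mathcal K_{i,m}}$, that is, the class-conditional sample mean and the biased sample variance. With identical estimators, the two displayed expressions agree term by term, so BASE4 equals LiRA for every $i$, not merely up to a monotone map. The only point to check is what ``standard implementation'' means. If an implementation uses unbiased variances, or the fixed-variance offline fallback, the same estimator choice enters both scores, so the identity still holds. I would state this as an assumption that both sides use matching estimators.

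\emph{Part (a).} Set $z=-\ell$ and take the log-sum-exp pooled center $\hat z_{i,\text{pool}}=\log\bigl(\tfrac1K\sum_k e^{-\ell_{i,k}}\bigr)$. Then \eqref{eq:base1} becomes exactly \eqref{eq:base}, so BASE1 $=$ BASE pointwise. The result of \citet{lassila2025base}, quoted in Section~\ref{sec:background}, gives ROC-equivalence of BASE and RMIA at $\gamma=1$. Composing these two facts yields (a).

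To make (a) self-contained, I would also sketch why the equivalence holds, recalling the RMIA construction:
\begin{itemize}
\item Under cross-entropy, $e^{-\ell_{i,0}}=p_{i,0}$.
\item Hence $\exp(\text{BASE}_i)=p_{i,0}/\bar p_i$, where $\bar p_i=\tfrac1K\sum_k p_{i,k}$ is RMIA's reference probability $p(x_i)$.
\item At $\gamma=1$, the RMIA score is the fraction of population points $z$ with $\tfrac{p(x_i\mid\theta_0)/p(x_i)}{p(z\mid\theta_0)/p(z)}\ge 1$.
\item That fraction is the empirical CDF of the population ratios evaluated at $\exp(\text{BASE}_i)$. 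It is therefore a nondecreasing function of $\text{BASE}_i$, and ROC curves are invariant under monotone maps.
\end{itemize}

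The main obstacle is in this last step. The empirical CDF is only weakly monotone, so distinct BASE scores falling between consecutive population ratios produce ties in RMIA. Strict ROC-equivalence therefore needs either the tie-handling convention used by \citet{lassila2025base} or the limit of a large population set. Since that result is cited, I would defer to its precise conditions rather than re-derive them.
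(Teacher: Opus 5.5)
Your route is the paper's. For (a), you substitute $z=-\ell$ with the log-sum-exp center to recover the BASE score \eqref{eq:base} pointwise, which is Corollary~\ref{cor:base_equiv}, and then cite \citet{lassila2025base}. For (b), you substitute the per-class MLEs into the Gaussian LLR, which is Corollary~\ref{cor:lira_base4}. Your extra sketch for (a) is correct and makes explicit what the paper leaves to the citation: RMIA at $\gamma=1$ is the empirical CDF of the population ratios $p(z\mid\theta_0)/p(z)$ evaluated at $\exp(\text{BASE}_i)$. That is one nondecreasing map shared by all points scored against the same target, and ties are the only obstruction.

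The one thing to fix is your scoping remark in (b). You claim that under LiRA's fixed-variance fallback ``the same estimator choice enters both scores, so the identity still holds.'' That is false for the objects defined here. BASE4 is the raw per-point MLE plug-in at every $K$. The standard LiRA implementation instead replaces the per-point variances by a single global variance whenever $K<64$. The two scores therefore differ in that regime; the paper measures $\max|\Delta\text{AUC}|=0.039$ at $K=8$. This is why the paper's proof restricts (b) to $K\ge 64$, where deployed LiRA uses the per-point biased ($1/n$) MLE and matches \eqref{eq:base4} exactly. Your ``matching estimators'' assumption is exactly what fails for $K<64$. State (b) instead as two claims: an identity with formula \eqref{eq:lira} as written, and an identity with deployed LiRA only for $K\ge 64$. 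The offline fallback is also beside the point. Offline LiRA is the equal-variance shifted-mean score \eqref{eq:lira_offline}, not a four-parameter score, and the proposition concerns the online setting.
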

Part~(a): \citet{lassila2025base} proved that the original BASE score~\eqref{eq:base} is ROC-equivalent to RMIA; Corollary~\ref{cor:base_equiv} in the appendix shows that BASE is an instance of BASE1.
Part~(b) follows from substituting MLE into the Gaussian LLR~\eqref{eq:gauss_llr} (Corollary~\ref{cor:lira_base4}). This identity holds for $K \ge 64$; at smaller $K$ LiRA substitutes a global variance (Section~\ref{sec:smallk}), so deployed LiRA and raw BASE4 diverge in this region.  

The hierarchy BASE1 $\to$ BASE2 $\to$ BASE3 $\to$ BASE4 thus connects RMIA and LiRA as endpoints of a spectrum of increasing model complexity.
Moving from BASE1 to BASE4 trades robustness for expressiveness: BASE1 estimates no per-point dispersion parameters (maximally pooled), while BASE4 estimates four per-point parameters (maximally flexible).
The intermediate variants BASE2 and BASE3 offer controlled points in between; empirically, BASE2 (class-mean separation only) tracks BASE3 on AUC and outperforms BASE4 at small $K$ in the low-FPR tail, while BASE3 competes with BASE4 at moderate $K$ (Section~\ref{sec:experiments}).

\subsection{When Does More Structure Help?}

The hierarchy reveals a bias-variance tradeoff in the estimation of the LLR.
RMIA/BASE does not require an explicit distributional assumption in its original formulation; we show that it is equivalent to an exponential model with a pooled mean estimate.
LiRA/BASE4 models the full Gaussian with separate parameters per class, exploiting variance differences that carry strong membership signal, but requiring enough data to estimate all four parameters reliably.

This tradeoff predicts that LiRA should outperform RMIA when $K$ is large enough for per-point Gaussian estimation, while RMIA should be preferable at very small $K$ where Gaussian parameters are poorly estimated.

% ==================================================================
\section{BaVarIA: Bayesian Variance Inference Attack}\label{sec:bavaria}

The framework of Section~\ref{sec:framework} shows that LiRA is a Gaussian plug-in LLR with per-point MLE parameters.
When $K$ is small, these estimates---especially the variance---become unreliable, motivating a Bayesian treatment.

\subsection{The Small-$K$ Problem}\label{sec:smallk}

With $K$ shadow models and balanced membership, each point has approximately $K/2$ IN and $K/2$ OUT observations.
At $K = 8$, per-class variance is estimated from $\sim\!4$ samples---insufficient for reliable estimation.
\citet{carlini2022membership} address this with a hard switch: when fewer than $\sim\!32$ observations are available per class (i.e., $K < 64$ total shadow models), per-point variances are replaced by a single global variance computed across all points---the default in the released implementation, so our comparisons are against LiRA as deployed.
This approach has two limitations.
First, it is discontinuous: the attack behavior changes abruptly at the threshold.
Second, it is all-or-nothing: it cannot combine partial per-point information with global information.
A natural alternative is to smoothly interpolate between global and per-point estimates based on the available evidence.

\subsection{NIG Prior and Posterior Predictive}

We place a normal-inverse-gamma (NIG) prior on the class-conditional parameters:
\begin{equation}\label{eq:nig_prior}
(\mu_m, \sigma^2) \sim \NIG(\mu_{\varnothing,m}, \kappa_\varnothing, \alpha_\varnothing, \beta_{\varnothing,m}),
\end{equation}
where subscript $\varnothing$ denotes the prior---the hyperparameters before observing any point-specific data---estimated via empirical Bayes from pooled shadow statistics (details in Appendix~\ref{app:bavaria}); the class-specific parameters $\mu_{\varnothing,m}$ and $\beta_{\varnothing,m}$ are estimated separately for each class $m$, while the strength parameters $\kappa_\varnothing$ and $\alpha_\varnothing$ are shared.
Given $n_m = |\mathcal{K}_{i,m}|$ shadow observations with sample mean $\bar{z}_m$ and sum of squares $S_m$, the posterior is $\NIG(\mu'_{i,m}, \kappa'_{i,m}, \alpha'_{i,m}, \beta'_{i,m})$ with standard conjugate updates 
\begin{align}
\mu'_{i,m} &= \frac{\kappa_\varnothing \mu_{\varnothing,m} + n_m \bar{z}_m}{\kappa_\varnothing + n_m} \notag
\\
\kappa'_{i,m} &= \kappa_\varnothing + n_m, \quad 
\alpha'_{i,m} = \alpha_\varnothing + \tfrac{n_m}{2}, \notag
\\
\beta'_{i,m} &= \beta_{\varnothing,m} + \tfrac{S_m}{2} + \frac{\kappa_\varnothing n_m (\bar{z}_m - \mu_{\varnothing,m})^2}{2(\kappa_\varnothing + n_m)} 
\label{eq:nig_update1}
\end{align}

The posterior predictive for a new observation $z$ under class $m$ is a Student-$t$ distribution, $z \mid m \sim \Student\!\left(\nu_{i,m}, \tilde\mu_{i,m}, \tilde\sigma_{i,m}^2\right)$
\begin{equation}\label{eq:student_t}
\nu_{i,m} = 2\alpha'_{i,m},\; \tilde\mu_{i,m} = \mu'_{i,m},\; \tilde\sigma_{i,m}^2 = \frac{\beta'_{i,m}(\kappa'_{i,m} + 1)}{\alpha'_{i,m} \kappa'_{i,m}}.
\end{equation}

\subsection{Two BaVarIA Variants}

\paragraph{BaVarIA-$t$ (Student-$t$ predictive).}
The LLR using the posterior-predictive Student-$t$ densities is
\begin{equation}\label{eq:bavaria_t}
\LLR(z) = \log \frac{t_{\nu_{i,1}}\!\left(\frac{z - \tilde\mu_{i,1}}{\tilde\sigma_{i,1}}\right)}{t_{\nu_{i,0}}\!\left(\frac{z - \tilde\mu_{i,0}}{\tilde\sigma_{i,0}}\right)} - \log \frac{\tilde\sigma_{i,1}}{\tilde\sigma_{i,0}},
\end{equation}
where $t_\nu$ is the standard Student-$t$ density with $\nu$ degrees of freedom.
The heavier tails of the $t$-distribution absorb parameter uncertainty, providing stability in the small-$K$ regime.

\paragraph{BaVarIA-$n$ (Gaussian with Bayesian variance).}
To isolate the effect of better variance estimation from the heavier-tailed predictive, we define a hybrid variant:
use MLE means $\hat\mu_{i,m}$ (as in LiRA) but replace the MLE variance with the NIG posterior mean $\E[\sigma^2 \mid \text{data}] = \beta'_{i,m} / (\alpha'_{i,m} - 1)$, then compute the standard Gaussian LLR~\eqref{eq:gauss_llr}.
This provides Bayesian shrinkage of the variance toward the global prior, serving as a continuous relaxation of the hard-switch strategy, while preserving the Gaussian LLR form.

% ==================================================================
\section{Experiments}\label{sec:experiments}

\subsection{Setup}

\paragraph{Datasets.}
We use 12 testbeds based on public datasets spanning image and tabular domains:
6 image (CIFAR-10/100 \citep{krizhevsky2009learning}, CINIC-10 \citep{darlow2018cinic}, each with ResNet-18 \citep{he2016deep} and WideResNet \citep{zagoruyko2016wide})
and 6 tabular (Location, Purchase100, Texas100, each with 3- and 4-layer MLPs).

\paragraph{Protocol.}
For each dataset, we train $K + 1$ models using the ``Design~B'' protocol of \citet{carlini2022membership}: one target model and $K$ shadow models, each trained on a random half of the dataset.
We evaluate at $K \in \{4, 8, 16, 32, 64, 128, 254\}$ and average over 32 replicates (rotated target).

\paragraph{Methods.}
We compare LiRA (= BASE4), RMIA (= BASE1), BASE3, BaVarIA-$n$, and BaVarIA-$t$.

\paragraph{Metrics.}
We report AUC and TPR at FPR $= 0.01$ (ROC curves over the full FPR range in Appendix~\ref{app:curves}).

\subsection{Main Results}

\begin{figure*}%[t]
\centering
\begin{subfigure}{\textwidth}\centering\includegraphics[width=0.82\linewidth]{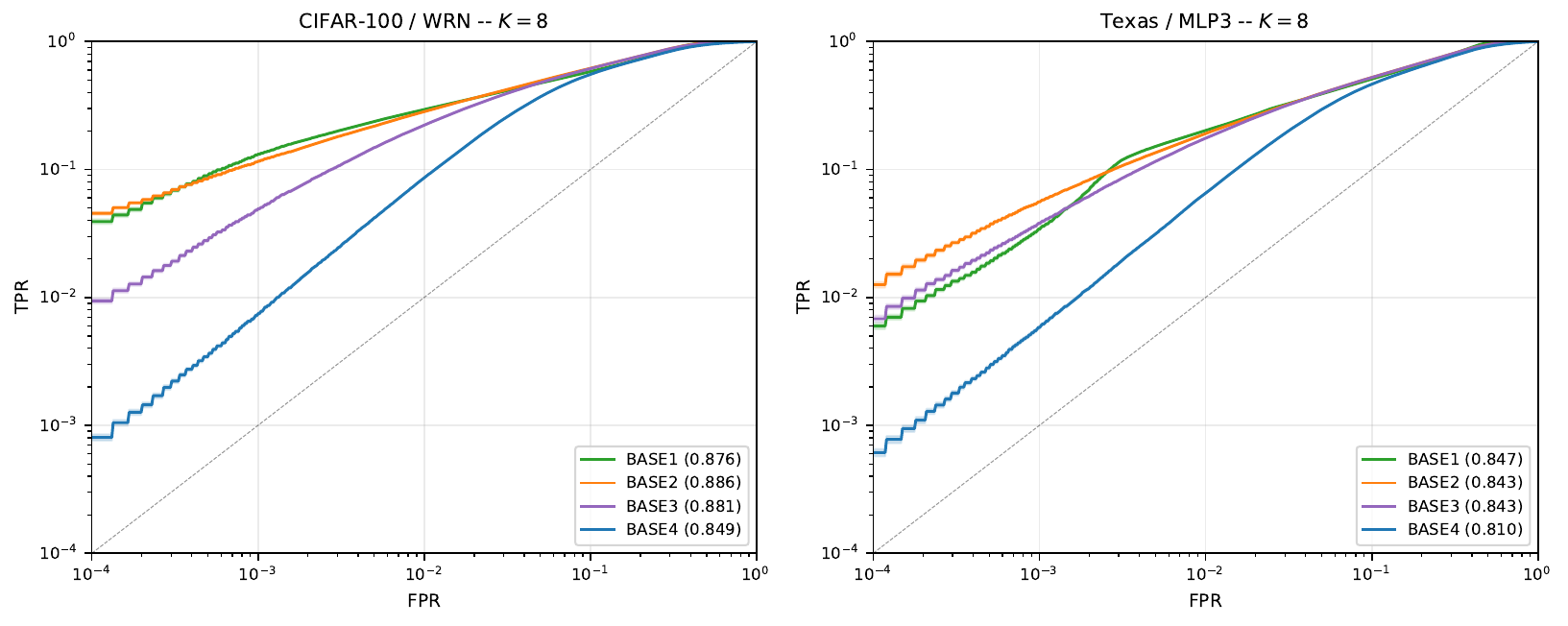}\caption{$K = 8$: simpler families lead in the low-FPR tail.}\end{subfigure}\\[3pt]
\begin{subfigure}{\textwidth}\centering\includegraphics[width=0.82\linewidth]{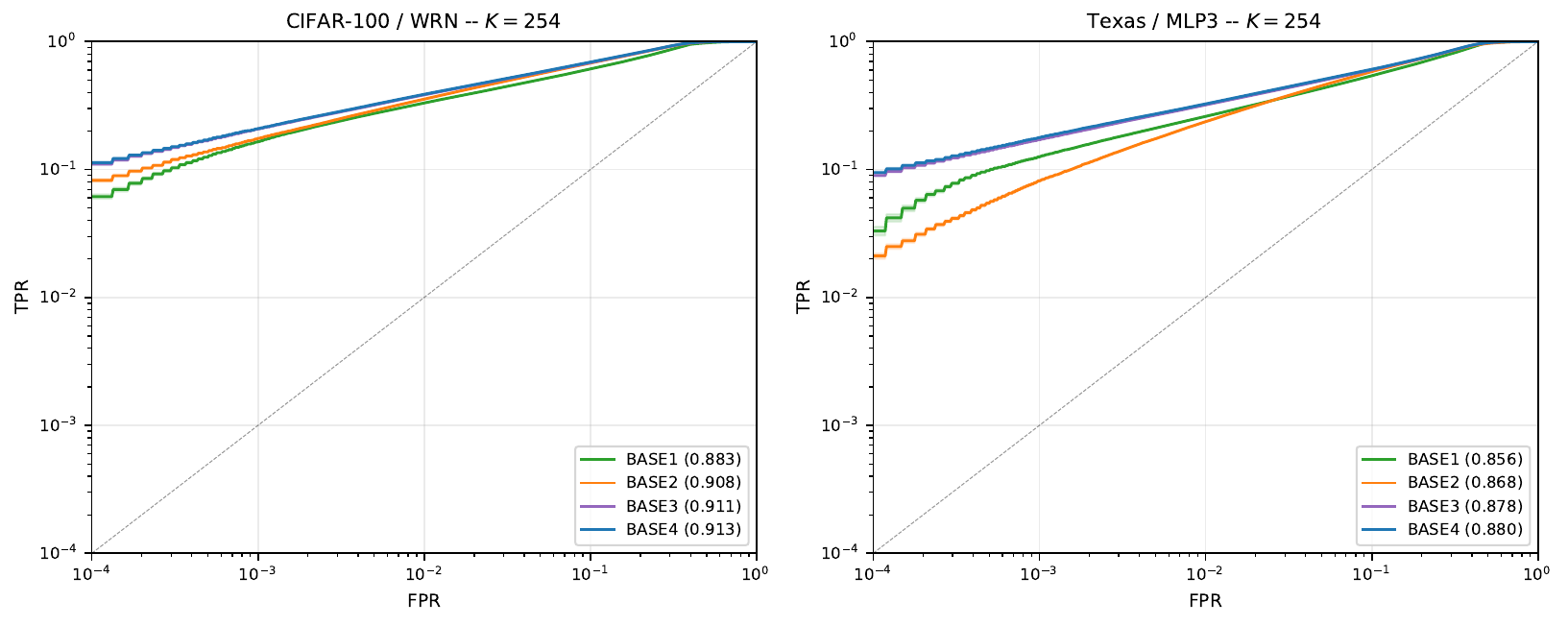}\caption{$K = 254$: richer families lead, reversing the small-$K$ ordering.}\end{subfigure}
\caption{Online ROC for the BASE hierarchy (BASE1 $=$ RMIA at $\gamma{=}1$; BASE4 $=$ LiRA for $K\ge64$) on CIFAR-100/WRN (left) and Texas/MLP3 (right) at the two budget extremes $K \in \{8, 254\}$, mean over 32 replicates. The method ordering reverses with $K$: simpler families win at small $K$, richer families at large $K$ (the intermediate $K=64$ and the continuous crossover are in Figure~\ref{fig:scaling}). 
}\label{fig:roc}
\end{figure*}

\begin{figure*}%[t]
\centering
\includegraphics[width=0.9\linewidth]{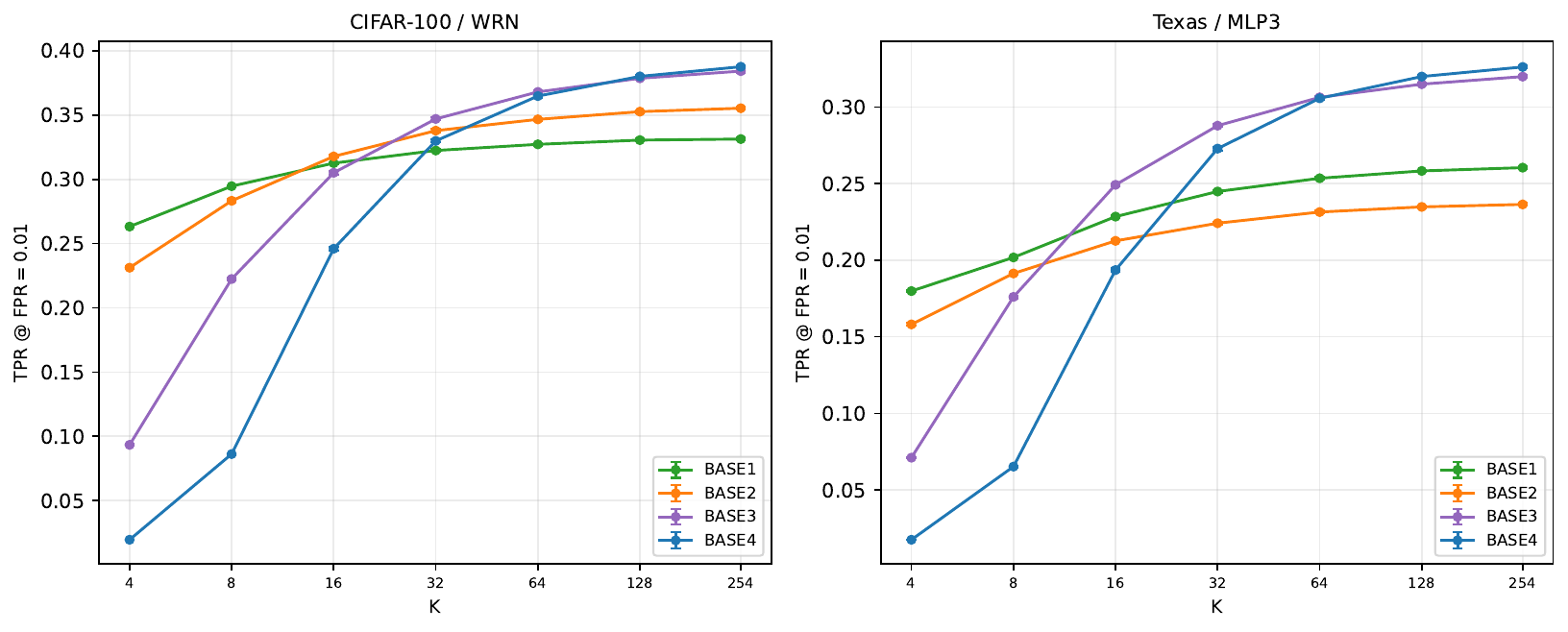}
\caption{TPR @ FPR$=0.01$ vs.\ shadow-model budget $K$ for the BASE hierarchy (CIFAR-100/WRN and Texas/MLP3), mean $\pm 1$ SE over 32 replicates. Simpler families lead at small $K$ and richer families at large $K$; BASE2 (separate IN/OUT means, constant variance) is the informative two-parameter intermediate.}\label{fig:scaling}
\end{figure*}

\begin{table*}%[h]
\centering
\footnotesize
\begin{tabular}{lcccccc}
\toprule
 & \multicolumn{3}{c}{$K=8$ \emph{(simple wins)}} & \multicolumn{3}{c}{$K=254$ \emph{(complex wins)}} \\
\cmidrule(lr){2-4} \cmidrule(lr){5-7}
Testbed & AUC & TPR@.01 & TPR@.001 & AUC & TPR@.01 & TPR@.001 \\
\midrule
CIFAR-10 / WRN & \textbf{+1.00}$^{\star}$ & \textbf{+1.00}$^{\star}$ & \textbf{+1.00}$^{\star}$ & \textbf{+1.00}$^{\star}$ & \textbf{+1.00}$^{\star}$ & \textbf{+1.00}$^{\star}$ \\
CIFAR-100 / WRN & +0.72$^{\star}$ & \textbf{+1.00}$^{\star}$ & \textbf{+1.00}$^{\star}$ & \textbf{+1.00}$^{\star}$ & \textbf{+1.00}$^{\star}$ & \textbf{+1.00}$^{\star}$ \\
CINIC-10 / WRN & +0.07$^{\star}$ & +0.74$^{\star}$ & +0.79$^{\star}$ & \textbf{+1.00}$^{\star}$ & \textbf{+1.00}$^{\star}$ & \textbf{+0.96}$^{\star}$ \\
Location / MLP3 & \textit{-0.53} & +0.75$^{\star}$ & \textbf{+0.99}$^{\star}$ & \textbf{+1.00}$^{\star}$ & \textbf{+1.00}$^{\star}$ & \textbf{+1.00}$^{\star}$ \\
Purchase / MLP3 & \textbf{+1.00}$^{\star}$ & \textbf{+1.00}$^{\star}$ & \textbf{+1.00}$^{\star}$ & +0.53$^{\star}$ & +0.80$^{\star}$ & +0.65$^{\star}$ \\
Texas / MLP3 & \textbf{+1.00}$^{\star}$ & \textbf{+1.00}$^{\star}$ & +0.65$^{\star}$ & \textbf{+1.00}$^{\star}$ & +0.85$^{\star}$ & +0.73$^{\star}$ \\
CIFAR-10 / RN18 & \textbf{+1.00}$^{\star}$ & \textbf{+1.00}$^{\star}$ & \textbf{+1.00}$^{\star}$ & \textit{-0.50} & +0.62$^{\star}$ & +0.87$^{\star}$ \\
CIFAR-100 / RN18 & -0.41 & +0.37$^{\star}$ & +0.82$^{\star}$ & \textbf{+1.00}$^{\star}$ & \textbf{+1.00}$^{\star}$ & \textbf{+1.00}$^{\star}$ \\
CINIC-10 / RN18 & -0.34 & +0.55$^{\star}$ & +0.90$^{\star}$ & \textbf{+1.00}$^{\star}$ & \textbf{+1.00}$^{\star}$ & \textbf{+1.00}$^{\star}$ \\
Location / MLP4 & -0.45 & +0.36$^{\star}$ & \textbf{+0.90}$^{\star}$ & \textbf{+1.00}$^{\star}$ & \textbf{+0.99}$^{\star}$ & \textbf{+0.93}$^{\star}$ \\
Purchase / MLP4 & \textbf{+1.00}$^{\star}$ & \textbf{+0.93}$^{\star}$ & +0.82$^{\star}$ & \textbf{+0.90}$^{\star}$ & \textbf{+0.92}$^{\star}$ & +0.87$^{\star}$ \\
Texas / MLP4 & +0.57$^{\star}$ & \textbf{+1.00}$^{\star}$ & \textbf{+1.00}$^{\star}$ & \textbf{+1.00}$^{\star}$ & \textbf{+0.97}$^{\star}$ & +0.76$^{\star}$ \\
\midrule
\textbf{Mean} & +0.39 & +0.81 & \textbf{+0.90} & +0.83 & \textbf{+0.93} & +0.90 \\
\textbf{Median} & +0.65 & \textbf{+0.97} & \textbf{+0.94} & \textbf{+1.00} & \textbf{+1.00} & \textbf{+0.95} \\
\bottomrule
\end{tabular}
\caption{Weighted concordance $S \in [-1, +1]$ of the observed method ordering with the BASE hierarchy's prediction (\emph{simple wins} at $K=8$: BASE1\,$>$\,BASE2\,$>$\,BASE3\,$>$\,BASE4; \emph{complex wins} at $K=254$: the reverse). $S=+1$: all pairs agree; $-1$: fully reversed. \textbf{Bold}: $S \ge 0.9$; \emph{italic}: $S \le -0.5$. $\star$: ordering prediction is significantly supported at 95\% confidence (bootstrap percentile over the 32 replicates lies above~0).}
\label{tab:summary32_concordance}
\end{table*}

Figure~\ref{fig:roc} shows online ROC curves for the BASE hierarchy at the two budget extremes: at $K = 8$ the simpler families (BASE1, BASE2) lead in the low tail (FPR $\le 0.01$)---pooled estimation avoids the IN/OUT split that starves per-class variance estimates---whereas at $K = 254$ the richer families (BASE3, BASE4 $=$ LiRA) lead once there is enough data to estimate per-point parameters. Within the simpler pair the ordering is less clear-cut at very low FPR (FPR $\approx 0.001$), where BASE2 can hold up better than the fully pooled BASE1 (e.g.\ Texas; Appendix~\ref{app:base_tables}).
Figure~\ref{fig:scaling} traces this crossover from simpler to richer across the full budget range: the reversal occurs around $K = 32$--$64$, the budget at which per-class variance estimation becomes reliable, and BASE2 is the informative two-parameter intermediate, tracking BASE3 throughout.
Table~\ref{tab:summary32_concordance} quantifies the reversal with a weighted concordance score $S \in [-1, +1]$ between the observed and predicted method orderings ($+1$: all pairs agree, $-1$: fully reversed; defined in Appendix~\ref{app:base_tables}); for TPR at low FPR the predicted ordering is supported on all $12/12$ testbeds at both budget extremes, with bootstrap 95\% confidence intervals excluding 0 (Appendix~\ref{app:base_tables}). 
This supports reading the hierarchy as a practitioner guide: calibrate the attack family to the available shadow-model budget---simpler at small $K$, richer once the budget supports per-point variance estimation.

\subsection{BaVarIA Ablation}
\label{sec:bavariaablation}

\begin{figure*}%[t]
\centering
\begin{subfigure}{\textwidth}\centering\includegraphics[width=0.8\linewidth]{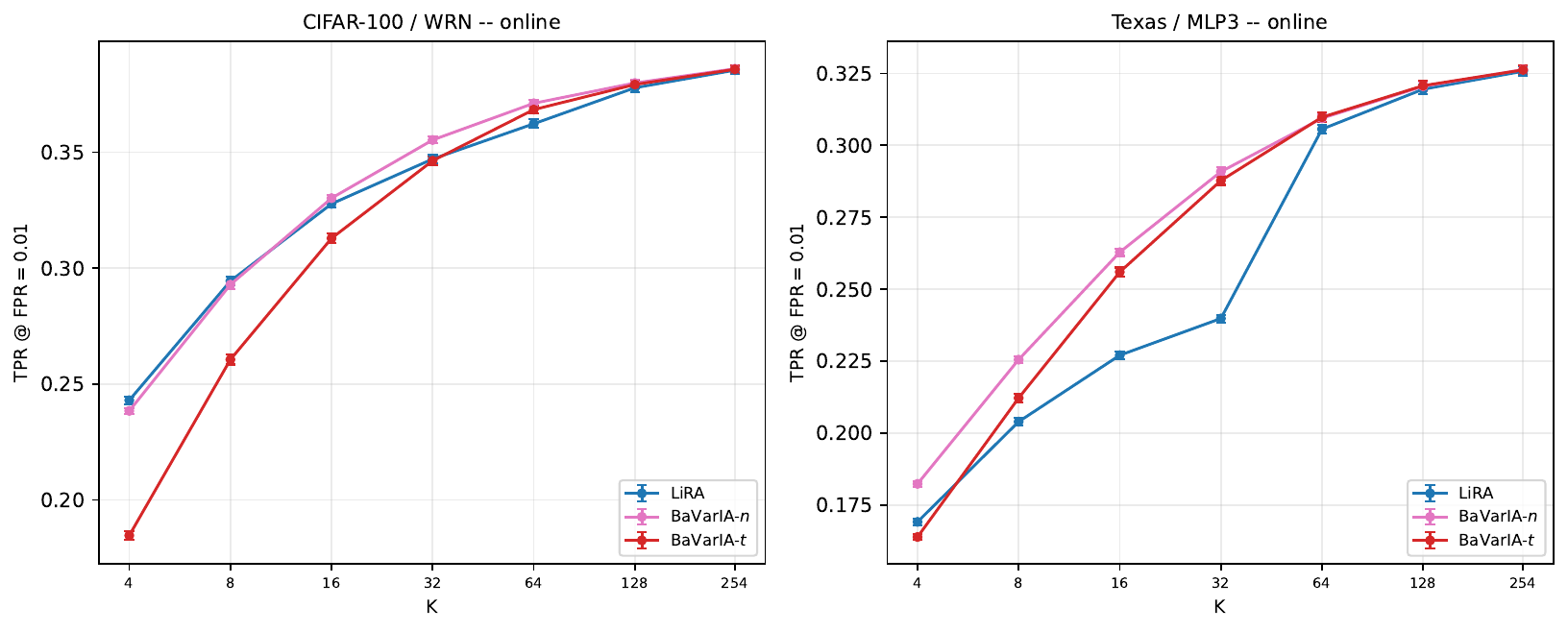}\caption{Online.}\end{subfigure}\\[3pt]
\begin{subfigure}{\textwidth}\centering\includegraphics[width=0.8\linewidth]{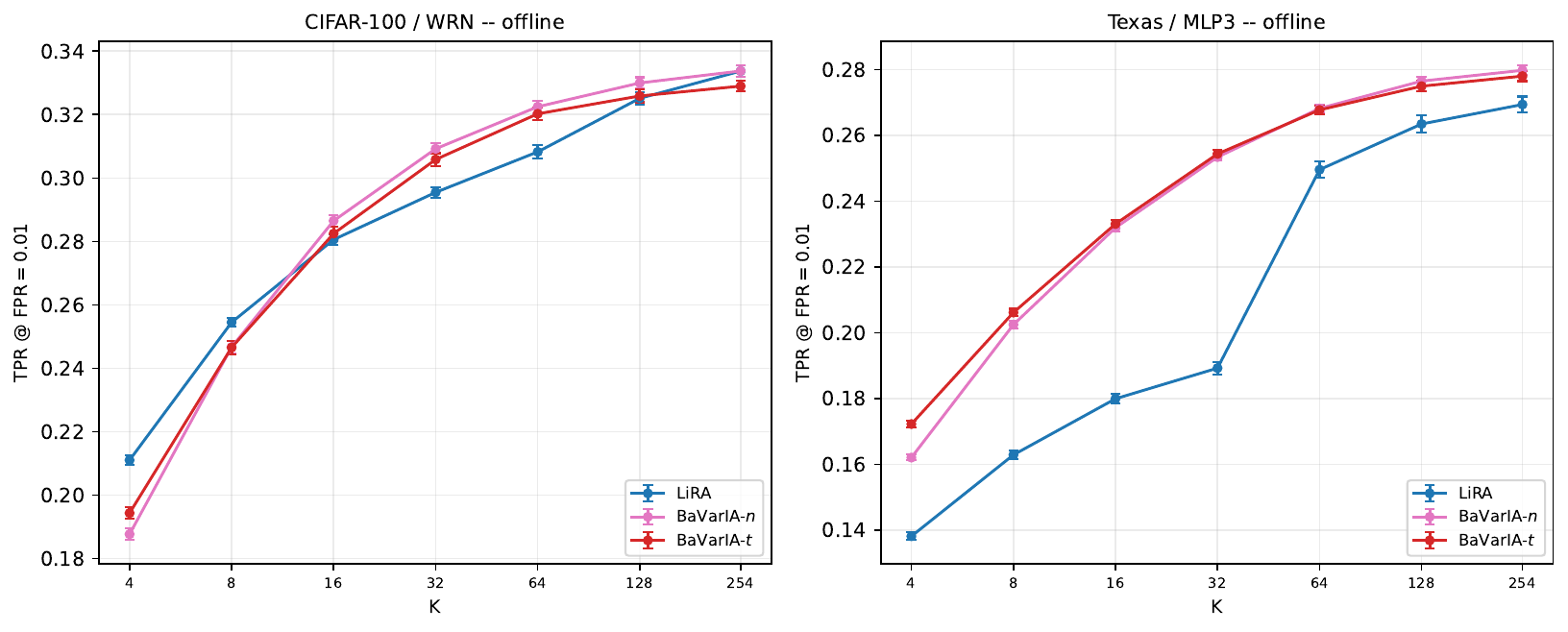}\caption{Offline.}\end{subfigure}
\caption{Online (top) and offline (bottom) TPR @ FPR $=0.01$ vs.\ $K$ for LiRA, BaVarIA-$n$, BaVarIA-$t$ on CIFAR-100/WRN and Texas/MLP3, mean $\pm 1$ SE over 32 replicates. Online, BaVarIA's gain peaks near the $K=64$ cutover and vanishes as $K \to \infty$; offline, the gain persists at every $K$ because the NIG prior substitutes for the absent IN class.}\label{fig:ablation}
\end{figure*}

\begin{table*}%[t]
\centering
\footnotesize
\begin{tabular}{l ccc ccc}
\toprule
 & \multicolumn{3}{c}{Online} & \multicolumn{3}{c}{Offline} \\
\cmidrule(lr){2-4} \cmidrule(lr){5-7}
Testbed & LiRA & BaVarIA-$n$ & BaVarIA-$t$ & LiRA & BaVarIA-$n$ & BaVarIA-$t$ \\
\midrule
CIFAR-10 / WRN   & 0.129 & \textbf{0.134} & 0.133          & \textbf{0.087} & 0.074 & 0.069 \\
CIFAR-100 / WRN  & 0.362 & \textbf{0.371} & 0.368          & 0.308 & \textbf{0.322} & 0.320 \\
CINIC-10 / WRN   & 0.180 & \textbf{0.187} & 0.187          & 0.123 & \textbf{0.181} & 0.178 \\
Location / MLP3  & 0.423 & \textbf{0.440} & 0.439          & 0.327 & \textbf{0.412} & 0.405 \\
Purchase / MLP3  & 0.031 & 0.035 & \textbf{0.036}          & 0.023 & \textbf{0.026} & 0.023 \\
Texas / MLP3     & 0.306 & 0.309 & \textbf{0.310}          & 0.250 & \textbf{0.268} & 0.268 \\
CIFAR-10 / RN18  & 0.039 & \textbf{0.042} & \textbf{0.042} & \textbf{0.038} & 0.032 & 0.031 \\
CIFAR-100 / RN18 & 0.663 & 0.669 & \textbf{0.669}          & 0.579 & 0.663 & \textbf{0.664} \\
CINIC-10 / RN18  & 0.288 & \textbf{0.293} & 0.291          & 0.223 & \textbf{0.277} & 0.273 \\
Location / MLP4  & 0.352 & \textbf{0.364} & 0.361          & 0.299 & \textbf{0.333} & 0.326 \\
Purchase / MLP4  & 0.053 & 0.059 & \textbf{0.059}          & 0.038 & \textbf{0.042} & 0.039 \\
Texas / MLP4     & 0.358 & \textbf{0.359} & 0.359          & 0.294 & \textbf{0.321} & 0.316 \\
\bottomrule
\end{tabular}
\caption{Online and offline TPR @ FPR$=0.01$ at $K=64$, per testbed, mean over 32 replicates; best per row \emph{within each setting} in bold. BaVarIA-$n$ beats LiRA on $12/12$ testbeds online and $10/12$ offline (offline, LiRA wins only CIFAR-10).
The BaVarIA-$n$ vs.\ LiRA differences are significant at $95\%$ confidence (Appendix~\ref{app:significance}).}
\label{tab:bavaria32_tpr001_K64}
\end{table*}

Figure~\ref{fig:ablation} shows BaVarIA's plug-in gain over LiRA online and offline, and---by contrasting the two variants---separates its two components, since BaVarIA-$n$ uses the same Gaussian LLR as LiRA (isolating the variance-estimation effect) while BaVarIA-$t$ adds the Student-$t$ predictive on top (the tail effect).

\emph{Better variance estimation} (BaVarIA-$n$): the NIG posterior mean shrinks noisy per-point variances toward a global prior, giving consistent improvement at $K \geq 16$ and converging to the MLE at large $K$---e.g.\ $+0.018$ in TPR@0.01 over LiRA at $K = 32$, averaged over 12 testbeds.

\emph{Heavier tails} (BaVarIA-$t$): the Student-$t$ predictive improves AUC at all $K$ (e.g.\ $\Delta\text{AUC} = +0.009$ at $K = 4$) but, when $K$ is small, its thick tails ($\nu \approx 6$ at $K = 4$) inflate scores for members and non-members alike, hurting TPR at very low FPR ($\Delta\text{TPR@0.01} = -0.016$ vs.\ $+0.003$ for BaVarIA-$n$).
This is a ranking-precision tradeoff: accounting for parameter uncertainty improves global ranking (AUC) at the cost of tail precision. Per-dataset results are in Appendix~\ref{app:bavaria_results}.

Table~\ref{tab:bavaria32_tpr001_K64} gives the per-testbed picture at $K = 64$, LiRA's variance-rule cutover.
BaVarIA-$n$ beats or ties LiRA on $12/12$ testbeds online and $10/12$ offline (column-mean averages, online/offline: LiRA $0.266$/$0.216$, BaVarIA-$n$ $0.272$/$0.246$, BaVarIA-$t$ $0.271$/$0.242$). The advantage is significant at $95\%$ confidence on every winning testbed (paired Wilcoxon signed-rank, Holm-corrected; Appendix~\ref{app:significance}, Table~\ref{tab:bavaria32_significance}).

The offline gain is the larger and, unlike the online gain, does not vanish as $K$ grows: LiRA's offline score discards the IN-class likelihood (heuristic rescaling of an OUT-only score), whereas BaVarIA substitutes the NIG conjugate prior, giving an offline attack at LiRA-equivalent per-point cost.
Appendix~\ref{app:offline} reports the full offline evaluation across all 12 testbeds and every method (Tables~\ref{tab:offline_auc}--\ref{tab:offline_tpr}): no single method dominates, but all remain viable offline.

\subsection{Distributional Diagnostics}

Per-datapoint Anderson-Darling tests (applied to each point's ${\sim}128$ IN and OUT shadow log-odds separately) reveal a range of departures from Gaussianity across datasets.
On CIFAR-10 WideResNet, only ${\sim}6\%$ of datapoints reject normality at the 5\% level---close to the nominal rate---indicating that per-datapoint Gaussianity is a good approximation.
On CIFAR-100 ResNet, $74\%$ of IN distributions reject normality, with heavy tails visible in the QQ plots (Appendix~\ref{app:qq}).
Despite these deviations, the Gaussian-family methods (BASE3, BASE4/LiRA, BaVarIA) lead the alternatives at moderate-to-large $K$.
This robustness is expected: the LLR score is evaluated at the \emph{target} model's output, which typically lies in the bulk of the distribution where the Gaussian approximation is accurate.
These results suggest that moderate departures from Gaussianity do not preclude effective use of LiRA or BaVarIA in practice.

\subsection{Cross-Validation}

To verify that our findings generalize beyond a single training pipeline, we cross-validated all methods on an independent shadow-model collection trained without data augmentation or dropout (224--1,480 shadows per dataset).
Models without augmentation exhibit substantially higher vulnerability on image datasets (e.g., CIFAR-10 ResNet AUC increases from 0.613 to 0.874), consistent with the known protective effect of data augmentation.
However, the qualitative ordering is preserved across both data sources: the Gaussian-family attacks lead---BaVarIA and LiRA (with BASE4 $\equiv$ LiRA at $K \geq 64$) at the top, then BASE3---while the pooled BASE1 ($=$ RMIA) trails. Within the BaVarIA pair, BaVarIA-$t$ is marginally stronger on AUC, while at low FPR (TPR@0.01) the two variants are within noise of each other on the independent collection.
At $K = 64$ on the independent data, BaVarIA-$n$ beats LiRA on TPR@0.01 in 10 of 12 testbeds, with the exceptions being the two CIFAR-100 testbeds where LiRA retains a marginal edge ($|\Delta\text{TPR}| < 0.003$).
This supports our findings across the training recipes and regularization settings tested (Appendix~\ref{app:crossval}).

\subsection{Defended Models (DP-SGD)}\label{sec:dpsgd}
To assess the methods against a privacy-preserving defense, we retrained the entire pipeline (target and shadow models) with DP-SGD \citep{abadi2016deep}---per-sample gradient clipping plus Gaussian noise---and ran LiRA, BASE/RMIA, and BaVarIA on the resulting models for four testbeds at $K=64$ across a noise-multiplier grid $\sigma \in \{0, 0.01, 0.1, 1.0\}$.
As the DP-compatible pipeline changes the underlying models (GroupNorm in place of BatchNorm, retuned batch size and learning rate; Appendix~\ref{app:dpsgd}), absolute risk numbers are not comparable to the main-paper testbeds; Table~\ref{tab:dpsgd_main} is read across methods within a row.

Table~\ref{tab:dpsgd_main} reports $\sigma = 1.0$, the only setting in our $\sigma$-grid with a non-vacuous privacy guarantee.
The low-FPR attack channel is all but closed---every method lands within $0.005$ of the $0.01$ random-guessing rate, at a heavy cost in accuracy---and in that compressed regime BaVarIA-$n$ and LiRA are indistinguishable while BASE/RMIA scores best on all four testbeds.
This is consistent with what the framework already predicts for small $K$: pooling wins when there is too little signal for per-point variance estimation.

Across the rest of the grid two further patterns emerge (Appendix~\ref{app:dpsgd}).
First, BaVarIA-$n$'s advantage over LiRA persists under noise---it matches or beats LiRA throughout the grid in both metrics (Tables~\ref{tab:auc} and~\ref{tab:tpr01})---with margins that are small throughout and have vanished by $\sigma = 1.0$.
Second, BASE/RMIA progressively overtakes the Gaussian methods as the injected noise increases, so the $\sigma = 1.0$ ordering above is the endpoint of a trend rather than an isolated result.

\begin{table}
\centering
\caption{Online TPR @ FPR$=0.01$ under DP-SGD at noise multiplier $\sigma=1.0$ ($K=64$, mean over 32 replicates). Best per row in bold; TPR $=0.01$ is the random-guessing level at this FPR. The full grid $\sigma \in \{0, 0.01, 0.1, 1.0\}$, privacy budgets, utility, and ROC curves in Appendix~\ref{app:dpsgd}.}\label{tab:dpsgd_main}
\small
\begin{tabular}{lr ccc}
\toprule
Testbed & $\varepsilon$ & LiRA & BaVarIA-$n$ & BASE/RMIA \\
\midrule
CIFAR-10 / WRN  & 6.7 & .010 & .011 & \textbf{.013} \\
CIFAR-100 / WRN & 6.1 & .009 & .009 & \textbf{.012} \\
Location / MLP3 & 8.9 & .012 & .012 & \textbf{.015} \\
Purchase / MLP3 & 5.7 & .010 & .011 & \textbf{.012} \\
\bottomrule
\end{tabular}
\end{table}

% ==================================================================
\section{Related Work}\label{sec:related}

\paragraph{Membership inference attacks.}
MIA was introduced by \citet{shokri2017membership}, who trained binary classifiers on shadow-model outputs to distinguish members from non-members.
Subsequent work simplified the attack surface: \citet{yeom2018privacy} showed that thresholding the loss suffices, and \citet{sablayrolles2019white} proved that the Bayes-optimal score is the log-likelihood ratio under IN vs.\ OUT, deriving a global-threshold (MALT) to per-sample (MAST) approximation hierarchy.
BaVarIA's NIG prior moves along this axis: a global prior (MALT-like) at small $K$, per-point MLE (MAST-like) at large $K$.
\citet{carlini2022membership} operationalized this insight with LiRA, fitting per-point Gaussian models to shadow-model log-odds and achieving state-of-the-art results at the time, albeit at the cost of many shadow models.
\citet{ye2022enhanced} and \citet{watson2022importance} developed calibrated variants that account for difficulty heterogeneity across data points.
\citet{salem2019ml} demonstrated that relaxed threat models (fewer shadow models, different architectures) still yield effective attacks.
More recently, \citet{zarifzadeh2024lowcost} introduced RMIA, which compares the target output to a population reference rather than estimating per-point parameters, achieving competitive performance with lower computational cost.
\citet{lassila2025base} introduced BASE in the context of MIA on graph neural networks, which simplifies for i.i.d.\ data to the conventional MIA setup, and proved that BASE and RMIA are ROC-equivalent, simplifying the picture.
Our framework unifies these approaches: LiRA, RMIA, and BASE are instances of exponential-family LLR testing with different distributional assumptions and parameter-sharing constraints.

\paragraph{Recent attacks and their pipeline stage.}
Several recent works improve distinct stages of the score-based MIA pipeline and are largely orthogonal to the likelihood-ratio computation that our BASE hierarchy and BaVarIA govern.
GLiRA \citep{galichin2024glira} and IMIA \citep{imia2025} target \emph{shadow-model acquisition}: GLiRA distills shadow models in the architecture-unknown setting, while IMIA trains target-informed imitative shadows. Both use a standard LiRA-style score---so any BASE or BaVarIA score can be substituted for their LLR without changing their training procedure. IMIA's main scoring variant is, on re-derivation, a quadratic in the IN/OUT means with constant variance; it coincides with our BASE2 point (separate means, constant variance), consistent with the IMIA paper attributing its gains to the shadow-training procedure rather than the scoring rule.
RAPID \citep{he2024rapid} is a standalone learned attack over two features: a difficulty-calibrated score---the loss minus a reference-model mean, a location correction closely related to BASE---together with the raw loss, which it reuses to offset calibration errors on high-loss non-members. Its calibrated component could in principle be replaced by any score in our hierarchy, but substituting a variance-aware score (LiRA or BaVarIA-$n$) yields diminishing returns in the low-FPR tail: adding RAPID's raw-loss feature lifts a plain mean-only calibration substantially yet barely moves---or slightly reduces---the tail metrics of an already variance-aware score in our experiments (Appendix~\ref{app:rapid}, Table~\ref{tab:rapid_varianceaware}), consistent with the raw-loss correction addressing a weakness specific to mean-only calibration.
Summarizing in our terms, GLiRA and IMIA change how shadow outputs are \emph{acquired}, while RAPID adds a learned correction to a location-calibrated score; the BASE hierarchy and BaVarIA govern the \emph{likelihood-ratio computation} that all of them ultimately rely on.

\paragraph{Privacy auditing.}
MIAs serve as empirical lower bounds on privacy leakage, complementing theoretical guarantees from differential privacy \citep{dwork2006calibrating}.
\citet{nasr2023tight} demonstrated that well-calibrated MIAs can produce tight auditing bounds that approach the theoretical $(\varepsilon, \delta)$-DP guarantee.
\citet{steinke2024privacy} showed that meaningful auditing is possible with only a single training run, reducing the computational burden.
\citet{song2021systematic} provided a systematic evaluation framework for comparing MIA methods across diverse settings.
Our work contributes the new BaVarIA attack and a method-selection criterion that calibrates the choice of attack to the available shadow-model budget $K$.

\paragraph{Bayesian methods and robust estimation.}
Conjugate Bayesian analysis of Gaussian parameters via normal-inverse-gamma priors is classical \citep{murphy2007conjugate, gelman2013bayesian}, and the resulting Student-$t$ predictive is well understood.
Shrinkage estimation \citep{james1961estimation} and empirical Bayes methods \citep{morris1983parametric, efron2010large} provide established frameworks for borrowing strength across related estimation problems.
Our contribution applies this machinery to variance stabilization in membership inference: the NIG prior shrinks per-point variance estimates toward a global value, with the effect diminishing as the per-point sample grows, replacing the hard switch of \citet{carlini2022membership} with a continuous Bayesian alternative.

The exponential-family framework also accommodates Gamma and Beta distributional assumptions, and a discriminative extension (ELSA, the Exponential-family Log-likelihood-ratio Score Attack) that learns the LLR feature map; these are developed in Appendices~\ref{app:distributions} and~\ref{app:elsa}.
ELSA recovers the Gaussian LLR discriminatively but does not improve on the generative estimators; we therefore foreground the variance-stabilizing BaVarIA in the main text.

% ==================================================================
\section{Conclusion}\label{sec:conclusion}

We presented an exponential-family framework that unifies the leading membership inference attacks---LiRA, RMIA, and the recently proposed BASE---as instances of log-likelihood ratio testing under different distributional assumptions.
The resulting BASE hierarchy (BASE1--4) reveals these attacks as points on a spectrum from maximal pooling (RMIA/BASE) to full per-point estimation (LiRA/BASE4).
Within this framework, we identified variance estimation as the primary source of degradation at small $K$ and proposed BaVarIA, which replaces LiRA's threshold-based variance switching with conjugate NIG Bayesian inference.
The ablation of BaVarIA-$n$ (Gaussian with Bayesian variance) and BaVarIA-$t$ (Student-$t$ predictive) separates the effects of better variance estimation from heavier-tailed predictives, giving practitioners guidance: BaVarIA-$n$ for low-FPR auditing, BaVarIA-$t$ for general AUC.

\paragraph{Limitations.}
Our evaluation uses Design~B resampling~\citep{carlini2022membership} (each shadow model trains on a random half of the dataset; Section~\ref{sec:experiments}) with 32 replicates, which provides stable estimates for AUC and TPR@0.01 but limits precision for TPR@0.001---we note that the replicates are not fully independent as all draw their target and shadow models from a single shared pool.
All methods assume access to shadow models trained on data from the same distribution; out-of-distribution transfer is not evaluated.
The NIG hyperparameters are set via simple empirical Bayes defaults; more sophisticated hierarchical estimation could further improve small-$K$ performance.
Our evaluation covers CIFAR-scale image models and mid-sized tabular classifiers; behavior at larger scales (e.g., ImageNet-scale vision models or language models) is left to future work.

\paragraph{Practical recommendation.}
Our main contribution is the BASE hierarchy as a guide for calibrating a privacy-risk audit to the available shadow-model (computational) budget: simpler attacks (RMIA/BASE) at small budgets and in the low-FPR tail, richer Gaussian attacks (LiRA, BaVarIA) once the budget allows per-class variance estimation.
Use BaVarIA-$n$ as a drop-in replacement for LiRA: 
it is rarely worse, and often better (especially at small and intermediate $K$, where BaVarIA's Bayesian shrinkage is most effective),
requires no per-dataset hyperparameter tuning, and runs in comparable time.
It additionally provides an offline attack that outperforms LiRA on 10 of 12 testbeds across the shadow-model budgets tested (exceptions: the two CIFAR-10 testbeds; Appendix~\ref{app:significance}), as the NIG prior at $n_\mathrm{in}=0$ supplies the IN-class likelihood that LiRA's offline heuristic discards.
For applications where AUC is the primary metric, BaVarIA-$t$ provides a small additional improvement at all shadow budgets.

\paragraph{Future work.}
The exponential-family framework naturally extends to multi-output statistics (e.g., full softmax vectors rather than scalar confidence).
Discriminative feature-map learning (ELSA, Appendix~\ref{app:elsa})---for instance with automatic relevance determination for feature selection---is a natural extension; whether it can surpass the generative estimators given sufficient shadow models remains open.

% ==================================================================
\bibliographystyle{tmlr}
\bibliography{main}

\newpage
\appendix
\section*{Supplementary Material}

% ------------------------------------------------------------------
\section{Broader Impact Statement}\label{app:broader_impact}
Membership inference is dual-use: the techniques an adversary uses to test whether a record was in a training set are also the standard tool for auditing privacy leakage and validating defenses---the defensive use we target throughout. Our contribution is unifying and diagnostic rather than a new attack capability: we show that LiRA, RMIA, and BASE are instances of a single likelihood-ratio framework and add a variance-stabilized estimator (BaVarIA) whose advantage is confined to the low-shadow-model regime, building only on threat models and primitives already public. The benefit is cheaper, better-calibrated audits that help practitioners detect and remediate leakage before deployment, and our DP-SGD evaluation (Appendix~\ref{app:dpsgd}) shows the same tools quantifying the protection a defense confers. The risk is that stronger or cheaper attacks could be misused against deployed models; standard mitigations apply---differential privacy, limiting model and confidence-score exposure, and responsible disclosure when auditing third-party systems. On balance, we believe clearer, better-calibrated auditing methods strengthen privacy practice more than they enable misuse.

% ------------------------------------------------------------------
\section{Exponential-Family Derivations}\label{app:derivations}

This appendix states the log-likelihood ratio implied by each distributional family and discusses parameter estimation when the families are used directly for MIA.  Derivations of the BASE hierarchy are in Appendix~\ref{app:base_formulas}.

\subsection{Generic Form}

Recall from Section~\ref{sec:framework} that each distributional family models a scalar statistic $z$ under the two membership classes $m \in \{0,1\}$, where $m = 1$ denotes IN and $m = 0$ denotes OUT.
We assume $z \mid m$ follows an exponential-family distribution
\[
p(z \mid m) = h(z)\,\exp\!\bigl(\eta_m^\top T(z) - A(\eta_m)\bigr),
\]
with sufficient statistics $T(z)$, natural parameters $\eta_m$, base measure $h(z)$, and log-partition function~$A$~\eqref{eq:expfam}.
The log-likelihood ratio for membership is
\[
\LLR(z) = (\eta_1 - \eta_0)^\top T(z) - \bigl(A(\eta_1) - A(\eta_0)\bigr),
\]
restating~\eqref{eq:expfam_llr}.
The LLR is affine in $T(z)$: the optimal membership score is a linear combination of the sufficient statistics plus a constant offset.
If some components of $T(z)$ are nonlinear in $z$ (e.g., $z^2$ in the Gaussian), the LLR becomes nonlinear in $z$ unless the corresponding natural parameters are shared across classes.
The following subsections specialize this result to four distributional families, identifying the sufficient statistics, natural parameters, and LLR for each.

\subsection{Exponential Specialization}

The exponential model operates on the loss statistic $z = \ell$, where $\ell = -\log p$ is the cross-entropy loss and $p$ is the model's predicted confidence for the ground truth label (see Table~\ref{tab:expfam}).

For $\ell \mid m \sim \Exp(\lambda_m)$ with rate $\lambda_m > 0$, the sufficient statistic is $T(\ell) = \ell$, the natural parameter is $\eta_m = -\lambda_m$, and the log-partition function is $A(\eta_m) = -\log\lambda_m$.
Substituting into~\eqref{eq:expfam_llr}:
\[
\LLR(\ell) = -(\lambda_1 - \lambda_0)\ell - (-\log\lambda_1 + \log\lambda_0) = \log\frac{\lambda_1}{\lambda_0} - (\lambda_1 - \lambda_0)\ell.
\]
The LLR is affine in~$\ell$ with slope $\lambda_0 - \lambda_1$.
Since the exponential family has a single sufficient statistic ($\ell$ itself), the LLR is automatically affine---no parameter constraints are needed to obtain a linear score.

\subsection{Gamma Specialization}

Like the exponential, the Gamma model operates on the loss statistic $z = \ell$.

For $\ell \mid m \sim \text{Gamma}(\kappa_m, \vartheta_m)$ with shape $\kappa_m > 0$ and scale $\vartheta_m > 0$, the sufficient statistics are $T(\ell) = (\ell,\, \log\ell)$, the natural parameters are $\eta_m = (-1/\vartheta_m,\; \kappa_m - 1)$, and the log-partition function is $A(\eta_m) = \log\Gamma(\kappa_m) + \kappa_m\log\vartheta_m$.
Substituting into~\eqref{eq:expfam_llr} gives the general Gamma LLR:
\[
\LLR(\ell) = \left(\frac{1}{\vartheta_0} - \frac{1}{\vartheta_1}\right)\ell + (\kappa_1 - \kappa_0)\log\ell - \log\frac{\Gamma(\kappa_1)\,\vartheta_1^{\kappa_1}}{\Gamma(\kappa_0)\,\vartheta_0^{\kappa_0}}.
\]
The LLR has two variable terms: one linear in $\ell$ (driven by the scale difference $1/\vartheta_0 - 1/\vartheta_1$) and one linear in $\log\ell$ (driven by the shape difference $\kappa_1 - \kappa_0$).
The constant offset depends on the normalizing constants of both class-conditional distributions.

Under shared scale $\vartheta_0 = \vartheta_1 = \vartheta$, the linear-in-$\ell$ term vanishes and the LLR simplifies to
\[
\LLR(\ell) = (\kappa_1 - \kappa_0)\log\ell - \log\frac{\Gamma(\kappa_1)\,\vartheta^{\kappa_1}}{\Gamma(\kappa_0)\,\vartheta^{\kappa_0}},
\]
which is affine in $\log\ell$ with slope $\kappa_1 - \kappa_0$.
Since $\ell = -\log p$, we have $\log\ell = \log(-\log p)$, so the shared-scale Gamma LLR operates on the log-loss rather than on the loss or confidence directly.

Conversely, under shared shape $\kappa_0 = \kappa_1 = \kappa$, the $\log\ell$ term vanishes and the LLR simplifies to
\[
\LLR(\ell) = \left(\frac{1}{\vartheta_0} - \frac{1}{\vartheta_1}\right)\ell - \kappa\log\frac{\vartheta_1}{\vartheta_0},
\]
which is affine in $\ell$ with slope $1/\vartheta_0 - 1/\vartheta_1$.
This shared-shape case is structurally identical to the exponential LLR: both are linear in the loss itself.

These two specializations illustrate a general pattern.
The shared-scale case (affine in $\log\ell$) retains only the shape difference between classes, while the shared-shape case (affine in $\ell$) retains only the scale difference and mirrors the exponential LLR.

\subsection{Beta Specialization}

The Beta model operates on the confidence statistic $z = p$, where $p$ is the model's predicted probability for the ground truth label.
Under cross-entropy loss, $p = e^{-\ell}$, so confidence and loss are related by a monotone transform (see Table~\ref{tab:expfam}).

For $p \mid m \sim \text{Beta}(\alpha_m, \beta_m)$ with shape parameters $\alpha_m, \beta_m > 0$, the sufficient statistics are $T(p) = (\log p,\, \log(1-p))$, the natural parameters are $\eta_m = (\alpha_m - 1,\; \beta_m - 1)$, and the log-partition function is $A(\eta_m) = \log B(\alpha_m, \beta_m)$, where $B$ is the Beta function.
Substituting into~\eqref{eq:expfam_llr}:
\[
\LLR(p) = (\alpha_1 - \alpha_0)\log p + (\beta_1 - \beta_0)\log(1 - p) - \log\frac{B(\alpha_1, \beta_1)}{B(\alpha_0, \beta_0)}.
\]
The LLR is linear in $(\log p, \log(1-p))$, providing a natural two-parameter extension of the exponential model.
With two shape parameters per class (four total), the Beta family can capture asymmetric behavior near $p = 0$ and $p = 1$ independently.

\subsection{Gaussian Specialization}

The Gaussian model operates on the rescaled logit statistic $z = \varphi$, where $\varphi = \log(p/(1-p))$ is the log-odds of the ground truth label confidence~$p$.
The rescaled logit maps $p \in (0,1)$ to $\varphi \in (-\infty, +\infty)$, making the Gaussian a natural model for this statistic (see Table~\ref{tab:expfam}).
This is the distributional model underlying LiRA~\citep{carlini2022membership}.

For $\varphi \mid m \sim \Normal(\mu_m, \sigma_m^2)$, the sufficient statistics are $T(\varphi) = (\varphi,\, \varphi^2)$, the natural parameters are $\eta_m = \bigl(\mu_m/\sigma_m^2,\; -1/(2\sigma_m^2)\bigr)$, and the log-partition function is $A(\eta_m) = \mu_m^2/(2\sigma_m^2) + \log\sigma_m$.
Substituting into~\eqref{eq:expfam_llr} and simplifying gives the full Gaussian LLR:
\[
\LLR(\varphi) = \frac{(\varphi - \mu_0)^2}{2\sigma_0^2} - \frac{(\varphi - \mu_1)^2}{2\sigma_1^2} + \log\frac{\sigma_0}{\sigma_1}.
\]
This is the full Gaussian LLR~\eqref{eq:gauss_llr}, written out here for reference.
The LLR is quadratic in $\varphi$ whenever $\sigma_0 \neq \sigma_1$, because the $\varphi^2$ coefficient $\frac{1}{2\sigma_0^2} - \frac{1}{2\sigma_1^2}$ is nonzero.
This nonlinearity allows the Gaussian model to capture differences in the spread of shadow statistics between IN and OUT, not just shifts in location.

Under equal variances $\sigma_0^2 = \sigma_1^2 = \sigma^2$, the quadratic terms cancel and the LLR simplifies to
\[
\LLR(\varphi) = \frac{\mu_1 - \mu_0}{\sigma^2}\left(\varphi - \frac{\mu_1 + \mu_0}{2}\right),
\]
restating~\eqref{eq:gauss_equal_var}.
This is linear in $\varphi$: the slope is proportional to the mean gap $\mu_1 - \mu_0$ and inversely proportional to the variance, while the intercept is set by the midpoint of the two class means.

\subsection{Parameter Estimation}\label{app:param_est}

When using these distributional families for membership inference, we estimate class-conditional parameters from shadow-model statistics.
For a given audit point~$i$, the $K$ shadow models produce statistics $z_{i,1}, \ldots, z_{i,K}$, each labeled by membership $m_{i,k} \in \{0, 1\}$, where $m = 1$ denotes \emph{IN} (point~$i$ was in the training set of shadow model~$k$) and $m = 0$ denotes \emph{OUT}.
Let $\mathcal{K}_{i,m} = \{k : m_{i,k} = m\}$ be the index set for each class.
Parameters are estimated separately per class and per audit point.

\paragraph{Exponential.}
The exponential model has one parameter per class: the rate $\lambda_{i,m}$.
The MLE is the reciprocal of the sample mean loss:
\[
\hat\lambda_{i,m} = \frac{1}{\bar\ell_{i,m}}, \qquad \text{where} \quad \bar\ell_{i,m} = \frac{1}{|\mathcal{K}_{i,m}|} \sum_{k \in \mathcal{K}_{i,m}} \ell_{i,k}.
\]

\paragraph{Gamma.}
The Gamma model has two parameters per class: shape $\kappa_{i,m}$ and scale $\vartheta_{i,m}$.
No closed-form MLE exists.
The profile log-likelihood yields the implicit equation
\[
\log\hat\kappa_{i,m} - \psi(\hat\kappa_{i,m}) = \log\bar\ell_{i,m} - \overline{\log\ell}_{i,m},
\]
where $\psi$ is the digamma function and
\[
\overline{\log\ell}_{i,m} = \frac{1}{|\mathcal{K}_{i,m}|}\sum_{k\in\mathcal{K}_{i,m}}\log\ell_{i,k}
\]
is the sample mean of the log-losses.
This equation is solved numerically (e.g.\ Newton--Raphson), after which the scale estimate follows as $\hat\vartheta_{i,m} = \bar\ell_{i,m}/\hat\kappa_{i,m}$.

For initialization, method-of-moments estimates are convenient:
\[
\hat\kappa_{i,m}^{\text{MoM}} = \frac{\bar\ell_{i,m}^2}{s_{i,m}^2}, \qquad
\hat\vartheta_{i,m}^{\text{MoM}} = \frac{s_{i,m}^2}{\bar\ell_{i,m}},
\]
where $s_{i,m}^2 = |\mathcal{K}_{i,m}|^{-1}\sum_{k\in\mathcal{K}_{i,m}}(\ell_{i,k} - \bar\ell_{i,m})^2$ is the sample variance.

\paragraph{Beta.}
The Beta model has two parameters per class: $\alpha_{i,m}$ and $\beta_{i,m}$.
The MLE requires solving the coupled digamma equations
\begin{align*}
\psi(\hat\alpha_{i,m}) - \psi(\hat\alpha_{i,m} + \hat\beta_{i,m}) &= \overline{\log p}_{i,m}, \\
\psi(\hat\beta_{i,m}) - \psi(\hat\alpha_{i,m} + \hat\beta_{i,m}) &= \overline{\log(1-p)}_{i,m},
\end{align*}
where
\[
\overline{\log p}_{i,m} = \frac{1}{|\mathcal{K}_{i,m}|}\sum_{k\in\mathcal{K}_{i,m}} \log p_{i,k}, \qquad
\overline{\log(1{-}p)}_{i,m} = \frac{1}{|\mathcal{K}_{i,m}|}\sum_{k\in\mathcal{K}_{i,m}} \log(1-p_{i,k}).
\]
These are solved numerically, using probability-weighted moment (PWM) estimators for initialization.

\paragraph{Gaussian.}
The Gaussian model has two parameters per class: mean $\mu_{i,m}$ and variance $\sigma_{i,m}^2$.
The MLEs are the sample mean and variance:
\[
\hat\mu_{i,m} = \frac{1}{|\mathcal{K}_{i,m}|} \sum_{k \in \mathcal{K}_{i,m}} z_{i,k}, \qquad
\hat\sigma_{i,m}^2 = \frac{1}{|\mathcal{K}_{i,m}|} \sum_{k \in \mathcal{K}_{i,m}} (z_{i,k} - \hat\mu_{i,m})^2.
\]
The BASE hierarchy (Appendix~\ref{app:base_formulas}) introduces parameter-sharing constraints that reduce the number of free parameters.
BASE3 pools the variance across the two classes while estimating means separately.
BASE2 keeps the separately estimated means but replaces the per-point variance with a single global constant, isolating class-mean separation.
BASE1 collapses to a single pooled centering estimate, discarding all per-class variance information.

\paragraph{Robust estimation.}
Standard MLE estimates can be sensitive to outlier shadow statistics---for instance, a single anomalously low loss can heavily influence the exponential rate estimate.
Several robust alternatives exist:
\begin{itemize}
\item \emph{Trimmed means:} discard the most extreme fraction of shadow statistics before computing the sample mean.
\item \emph{Winsorized variance:} clamp extreme values to a quantile threshold before computing variance, reducing the influence of heavy tails.
\item \emph{Median-based estimators:} replace the sample mean with the median, which has a bounded influence function.
\item \emph{Interquartile range (IQR):} use the IQR as a robust spread estimate in place of the standard deviation, avoiding sensitivity to extreme tails.
\item \emph{Log-sum-exp averaging:} estimate the pooled center via $\hat{z}_{i,\text{pool}} = -\log\bigl(\frac{1}{K}\sum_{k=1}^K e^{-\ell_{i,k}}\bigr)$ instead of the arithmetic mean.  This is equivalent to the softmin and provides a natural robust centering in the confidence domain.
\end{itemize}
We explored trimmed means, winsorized variance, and median-based alternatives in our experiments, but did not find that they improved MIA performance over standard MLE.
The exception is log-sum-exp averaging for BASE1, which provides a modest improvement and is used in the standard BASE implementation~\citep{lassila2025base}.

% ------------------------------------------------------------------
\section{BASE Hierarchy: Derivations and Formulas}\label{app:base_formulas}

We now fix a target point $i$ and write $\LLR_i$ for its per-datapoint membership log-likelihood ratio.
The $K$ shadow statistics $z_{i,1},\ldots,z_{i,K}$ are split by membership label $m_{i,k} \in \{0,1\}$.
Each BASE variant follows from the exponential-family LLR (Appendix~\ref{app:derivations}) under specific parameter-sharing constraints.
We present the hierarchy top-down, from the most flexible (BASE4) to the most constrained (BASE1).

\emph{Sign convention.}
Since the IN model has been trained on the target point while the OUT model has not, we generally expect higher confidence for IN:
$p_{\text{IN}} > p_{\text{OUT}}$, equivalently $-\ell_{\text{IN}} > -\ell_{\text{OUT}}$ and $\varphi_{\text{IN}} > \varphi_{\text{OUT}}$.
Throughout the derivations below we therefore choose the scalar statistic $z \in \{p,\, -\ell,\, \varphi\}$ so that the IN--OUT mean difference $\Delta\mu \coloneqq \mu_{i,1} - \mu_{i,0}$ is positive.
This convention ensures that the LLR slope is positive and membership scores increase with the target statistic.

% ---- Gaussian derivation chain (paragraphs, not corollaries) ----

\paragraph{BASE4 (full Gaussian, 4 per-point parameters).\label{cor:base4}}
Assume $z_{i,k} \mid m \sim \Normal(\mu_{i,m},\, \sigma_{i,m}^2)$ with all four parameters $(\mu_{i,0}, \sigma_{i,0}^2, \mu_{i,1}, \sigma_{i,1}^2)$ free.
Substituting maximum-likelihood estimates into the Gaussian LLR~\eqref{eq:gauss_llr} gives the BASE4 score~\eqref{eq:base4}:
\[
\text{BASE4}_i = \frac{(z_{i,0} - \hat\mu_{i,0})^2}{2\hat\sigma_{i,0}^2} - \frac{(z_{i,0} - \hat\mu_{i,1})^2}{2\hat\sigma_{i,1}^2} + \log \frac{\hat\sigma_{i,0}}{\hat\sigma_{i,1}}.
\]

\paragraph{BASE3 (equal variance, 3 per-point parameters).}\label{cor:base3}
Constrain $\sigma_{i,0}^2 = \sigma_{i,1}^2 = \sigma_i^2$.
The equal-variance Gaussian LLR~\eqref{eq:gauss_equal_var} is linear in~$z$:
\[
\LLR_i(z) = \frac{\mu_{i,1} - \mu_{i,0}}{\sigma_i^2}\!\left(z_{i,0} - \frac{\mu_{i,1} + \mu_{i,0}}{2}\right).
\]
Substituting MLE estimates yields the BASE3 score~\eqref{eq:base3}:
\[
\text{BASE3}_i = \frac{\hat\mu_{i,1} - \hat\mu_{i,0}}{\hat\sigma_i^2}\!\left(z_{i,0} - \frac{\hat\mu_{i,1} + \hat\mu_{i,0}}{2}\right),
\]
with the estimators
\[
\hat\mu_{i,m} = \frac{1}{|\mathcal{K}_{i,m}|}\sum_{k \in \mathcal{K}_{i,m}} z_{i,k}, \qquad
\hat\sigma_i^2 = \frac{1}{|\mathcal{K}_{i,0}| + |\mathcal{K}_{i,1}|}\sum_{m \in \{0,1\}}\;\sum_{k \in \mathcal{K}_{i,m}}(z_{i,k} - \hat\mu_{i,m})^2.
\]

\paragraph{BASE2 (separate means, constant variance; 2 per-point parameters).}\label{cor:base2}
We obtain BASE2 by replacing BASE3's per-point pooled variance $\hat\sigma_i^2$ with a single global constant $\sigma^2$ shared across all data points, while retaining the separately estimated IN/OUT means.
The equal-variance Gaussian LLR~\eqref{eq:gauss_equal_var} then has a slope denominator that is constant across points; since the global $\sigma^2 > 0$ does not affect the ranking, BASE2 reduces to
\[
\text{BASE2}_i \propto (\hat\mu_{i,1} - \hat\mu_{i,0})\!\left(z_{i,0} - \frac{\hat\mu_{i,1} + \hat\mu_{i,0}}{2}\right),
\]
with separately estimated means
\[
\hat\mu_{i,m} = \frac{1}{|\mathcal{K}_{i,m}|}\sum_{k \in \mathcal{K}_{i,m}} z_{i,k}, \qquad m \in \{0,1\}.
\]
This is BASE3 with the per-point variance removed; it isolates the effect of class-mean separation and coincides with IMIA's quadratic scoring variant \citep{imia2025}, whose quadratic terms cancel under the shared IN/OUT variance to leave this linear score.

\paragraph{BASE1 (mean--variance proportionality, 1 per-point parameter).}\label{cor:base1_gauss}
Finally, suppose the means satisfy $\mu_{i,m} = a_m\,\sigma_i^2$ with constants $a_0, a_1$ independent of~$i$.
Then $\mu_{i,1} - \mu_{i,0} = (a_1 - a_0)\,\sigma_i^2$, so the equal-variance LLR slope simplifies to the constant $a_1 - a_0$:
\[
\LLR_i(z) = (a_1 - a_0)\!\left(z_{i,0} - \bar\mu_i\right),
\]
where $\bar\mu_i = (\mu_{i,1} + \mu_{i,0})/2$.
Since $a_1 - a_0 > 0$ is constant across points, the ranking depends only on $z_{i,0} - \bar\mu_i$, yielding the BASE1 score~\eqref{eq:base1}:
\[
\text{BASE1}_i = z_{i,0} - \hat{z}_{i,\text{pool}},
\]
where $\hat{z}_{i,\text{pool}}$ estimates the centering $\bar\mu_i$ from all $K$ shadow statistics for point~$i$.

% ---- Corollary 1: LiRA = BASE4 ----

\begin{corollary}[LiRA $=$ BASE4]\label{cor:lira_base4}
Setting $z = \varphi$ (rescaled logits) in the BASE4 formula above and substituting MLE parameters $(\hat\mu_{i,m}, \hat\sigma_{i,m}^2)$ into~\eqref{eq:gauss_llr} recovers the LiRA score~\eqref{eq:lira}.
\end{corollary}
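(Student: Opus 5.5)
The corollary is essentially an identity check, so the plan is a direct symbol-by-symbol comparison of the BASE4 score~\eqref{eq:base4} with the LiRA score~\eqref{eq:lira}. First I fix a point $i$ and set $z_{i,k} = \varphi_{i,k}$ for all $k \in \{0,1,\dots,K\}$, so that the target statistic $z_{i,0}$ becomes $\varphi_{i,0}$. Then I check that the index sets used to estimate parameters coincide. BASE4 uses $\mathcal{K}_{i,m} = \{k : m_{i,k}=m\}$ over shadow models (Appendix~\ref{app:param_est}). LiRA uses $\mathcal{K}_{i,m} = \{k>0 : m_{i,k}=m\}$. Since membership labels are only defined and known for $k>0$, these are the same sets.

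Next I verify that the estimators agree. In Appendix~\ref{app:param_est} the Gaussian MLEs are the per-class sample mean and the $1/|\mathcal{K}_{i,m}|$-normalized sample variance of $\{z_{i,k}\}_{k\in\mathcal{K}_{i,m}}$. LiRA, as stated in Section~\ref{sec:background}, uses exactly these MLEs computed from $\{\varphi_{i,k}\}_{k\in\mathcal{K}_{i,m}}$. Hence $(\hat\mu_{i,m},\hat\sigma^2_{i,m})$ are the same numbers in both scores for $m\in\{0,1\}$.

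Finally, I substitute these quantities into the Gaussian LLR~\eqref{eq:gauss_llr}. With $\varphi=\varphi_{i,0}$ and $(\mu_m,\sigma_m)=(\hat\mu_{i,m},\hat\sigma_{i,m})$, this reproduces~\eqref{eq:lira} term by term: the two quadratic terms and the $\log(\hat\sigma_{i,0}/\hat\sigma_{i,1})$ term all match. I will also recall that~\eqref{eq:gauss_llr} follows from the exponential-family form~\eqref{eq:expfam_llr} with $T(\varphi)=(\varphi,\varphi^2)$ and $A(\eta_m)=\mu_m^2/(2\sigma_m^2)+\log\sigma_m$, where the shared $-\tfrac12\log 2\pi$ terms cancel.

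The only real obstacle is scoping the identity rather than doing algebra. Deployed LiRA replaces $\hat\sigma_{i,m}^2$ with a global variance when $K<64$, so I will state the corollary for the per-point-variance version of LiRA, which is the version written in~\eqref{eq:lira}. I will note that the identity therefore holds for deployed LiRA in the regime $K\ge 64$, in line with the remark after Proposition~\ref{prop:equiv}. I will also record the degenerate case $\hat\sigma_{i,m}=0$: both scores are then undefined in the same way, so the equality is understood on the set where both variances are positive.
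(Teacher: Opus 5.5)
Your proposal is correct and is essentially the paper's proof: substituting the per-class MLEs $(\hat\mu_{i,m},\hat\sigma_{i,m}^2)$ into the Gaussian LLR~\eqref{eq:gauss_llr} at $z=\varphi$ yields~\eqref{eq:base4}, which matches~\eqref{eq:lira} term for term. Your extra bookkeeping is consistent with the paper's remarks after Proposition~\ref{prop:equiv} and in its equivalence-verification appendix: the matching index sets, the $1/|\mathcal{K}_{i,m}|$ variance normalization, and the restriction of deployed LiRA to the per-point-variance regime $K\ge 64$.
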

\begin{proof}
Direct substitution of MLE $(\hat\mu_{i,m}, \hat\sigma_{i,m}^2)$ into~\eqref{eq:gauss_llr} gives~\eqref{eq:base4}, which equals~\eqref{eq:lira}.
\end{proof}

% ---- Corollary 2: BASE [Lassila] = BASE1 ----

\begin{corollary}[BASE $=$ BASE1]\label{cor:base_equiv}
Setting $z = -\ell$ in the BASE1 score and estimating the centering via the log-sum-exp average gives the BASE score of \citet{lassila2025base}:
\[
\text{BASE}_i = -\ell_{i,0} - \log\!\left(\frac{1}{K}\sum_{k=1}^K e^{-\ell_{i,k}}\right).
\]
\end{corollary}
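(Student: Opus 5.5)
The plan is to substitute directly into the BASE1 score~\eqref{eq:base1} and check that the result matches~\eqref{eq:base}. The corollary is essentially definitional once the centering estimator is fixed, so the work lies in making the statistic and estimator choices explicit and confirming they are admissible within the BASE1 derivation.

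First, fix the statistic $z = -\ell$. This is consistent with the sign convention of Appendix~\ref{app:base_formulas}: IN models have lower loss, so $-\ell_{\text{IN}} > -\ell_{\text{OUT}}$, the mean gap $\Delta\mu$ is positive, and the BASE1 slope is positive. The BASE1 score then reads $\text{BASE1}_i = -\ell_{i,0} - \hat z_{i,\text{pool}}$.

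Second, choose the centering estimator $\hat z_{i,\text{pool}}$ to be the log-sum-exp average of the shadow statistics, written in the $z$ domain:
\[
\hat z_{i,\text{pool}} = \log\!\Big(\tfrac{1}{K}\textstyle\sum_{k=1}^K e^{z_{i,k}}\Big) = \log\!\Big(\tfrac{1}{K}\textstyle\sum_{k=1}^K e^{-\ell_{i,k}}\Big).
\]
This is the log of the mean confidence $p_{i,k} = e^{-\ell_{i,k}}$ over all $K$ shadows. It is the loss-domain softmin from the robust-estimation paragraph of Appendix~\ref{app:param_est}, with its sign flipped to match $z = -\ell$. Substituting gives exactly $-\ell_{i,0} - \log\big(\frac1K\sum_k e^{-\ell_{i,k}}\big)$, which is~\eqref{eq:base}.

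The only step needing care, and the main potential obstacle, is justifying that this estimator is a legitimate choice of $\hat z_{i,\text{pool}}$. The BASE1 derivation lets $\hat z_{i,\text{pool}}$ be any estimate of the center $\bar\mu_i$ from all $K$ shadows, not necessarily the arithmetic mean. Log-sum-exp qualifies because it is a pooled, permutation-invariant location summary of the same $K$ statistics. It reduces to the common value when all $z_{i,k}$ are equal, and it lies between $\min_k z_{i,k}$ and $\max_k z_{i,k}$. It therefore satisfies the only requirement the derivation imposes.

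To close, I will note that the constant slope $a_1 - a_0 > 0$ dropped in BASE1 is a positive constant common to all points. The identity thus holds up to a monotone rescaling, which is all that ROC equivalence requires. I will also remark that the $\Exp$ and Gamma routes of Appendix~\ref{app:base_formulas} give the same centered form.
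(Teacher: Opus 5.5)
Your proposal is correct and follows the paper's proof exactly: you substitute $z=-\ell$ into $z_{i,0}-\hat z_{i,\text{pool}}$ and take the confidence-domain log-sum-exp centering $\hat z_{i,\text{pool}}=\log\bigl(K^{-1}\sum_k e^{-\ell_{i,k}}\bigr)$, which the paper motivates only as the Jensen-stable counterpart of the arithmetic average. Your admissibility remarks are extra rather than necessary, since the corollary is a definitional substitution and the identity with BASE1 is exact, so no rescaling caveat is needed; note also that the min--max bounds do not make log-sum-exp an unbiased estimate of $\bar\mu_i$, because by Jensen it lies above the arithmetic mean.
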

\begin{proof}
Substitute $z = -\ell$ into the BASE1 score $z_{i,0} - \hat{z}_{i,\text{pool}}$.
For the centering estimate, replace the arithmetic loss-domain average with its Jensen-stable confidence-domain counterpart $\hat{z}_{i,\text{pool}} = \log(K^{-1}\sum_k e^{-\ell_{i,k}})$, giving the standard BASE formula.
\end{proof}

% ---- Independent derivation: BASE1 from the Exponential model ----

\paragraph{Independent derivation: BASE from the exponential model.}\label{cor:base1_exp}
Assume $\ell_{i,k} \mid m \sim \Exp(\lambda_{i,m})$ with a constant rate gap $\lambda_{i,1} - \lambda_{i,0} = \Delta\lambda > 0$ across data points.
The exponential LLR~\eqref{eq:exp_llr} for point~$i$ is
\[
\LLR_i(\ell) = \log\frac{\lambda_{i,1}}{\lambda_{i,0}} - (\lambda_{i,1} - \lambda_{i,0})\,\ell_{i,0},
\]
which is exactly affine in the target loss.
Under the constant-gap assumption, the slope $-\Delta\lambda$ is common to all points.
Introduce the pooled rate $\lambda_{i,\text{pool}} = (\lambda_{i,1} + \lambda_{i,0})/2$ and rewrite the intercept:
\[
\log\frac{\lambda_{i,1}}{\lambda_{i,0}} = \log\frac{\lambda_{i,\text{pool}} + \Delta\lambda/2}{\lambda_{i,\text{pool}} - \Delta\lambda/2} \;\approx\; \frac{\Delta\lambda}{\lambda_{i,\text{pool}}},
\]
where the approximation holds for $\Delta\lambda / \lambda_{i,\text{pool}} \ll 1$.
Substituting yields the approximate LLR
\[
\LLR_i(\ell) \;\approx\; \Delta\lambda\!\left(\frac{1}{\lambda_{i,\text{pool}}} - \ell_{i,0}\right).
\]
Since $\Delta\lambda > 0$ is constant, the ranking depends only on $\lambda_{i,\text{pool}}^{-1} - \ell_{i,0}$.
The pooled rate inverse $\lambda_{i,\text{pool}}^{-1}$ is the expected loss under the pooled model, estimated by the sample mean $\hat\ell_{i,\text{pool}} = K^{-1}\sum_k \ell_{i,k}$ or the Jensen-stable log-sum-exp average.
The latter choice yields
\[
\text{BASE}_i = -\ell_{i,0} - \log\!\left(\frac{1}{K}\sum_{k=1}^K e^{-\ell_{i,k}}\right).
\]

% ---- Independent derivation: BASE1 from the Gamma model ----

\paragraph{Independent derivation: BASE from the Gamma model.}\label{cor:base1_gamma}
Consider a Gamma model on the \emph{confidence}: $p_{i,k} \mid m \sim \text{Gamma}(\kappa_{i,m},\, \vartheta_i)$ with a per-point scale $\vartheta_i > 0$ shared across membership classes.
Assume a constant shape gap $\kappa_{i,1} - \kappa_{i,0} = \Delta\kappa > 0$ across data points.
Under the shared-scale Gamma model, the density is $f(p \mid \kappa, \vartheta) = [\Gamma(\kappa)\,\vartheta^\kappa]^{-1}\,p^{\kappa-1}\,e^{-p/\vartheta}$, and the LLR for the target confidence $p_{i,0}$ is
\[
\LLR_i(p_{i,0}) = (\kappa_{i,1} - \kappa_{i,0})\log p_{i,0} - (\kappa_{i,1} - \kappa_{i,0})\log\vartheta_i - \log\frac{\Gamma(\kappa_{i,1})}{\Gamma(\kappa_{i,0})}.
\]
Since $\log p_{i,0} = -\ell_{i,0}$, this becomes
\[
\LLR_i = -\Delta\kappa\,\ell_{i,0} - \Delta\kappa\,\log\vartheta_i - \log\frac{\Gamma(\kappa_{i,1})}{\Gamma(\kappa_{i,0})}.
\]
The slope $-\Delta\kappa$ is constant across points, so the ranking depends on $-\ell_{i,0} - \log\vartheta_i$ (the log-Gamma ratio varies slowly for moderate shape variation).
The shared scale $\vartheta_i$ is estimated by the pooled mean of the shadow confidences:
\[
\hat\vartheta_i = \frac{1}{K}\sum_{k=1}^K p_{i,k} = \frac{1}{K}\sum_{k=1}^K e^{-\ell_{i,k}}.
\]
Substituting $\hat\vartheta_i$ and dropping the constant $\Delta\kappa$ gives
\[
\text{LLR}_i \;\propto\; -\ell_{i,0} - \log\!\left(\frac{1}{K}\sum_{k=1}^K e^{-\ell_{i,k}}\right),
\]
recovering the BASE score with log-sum-exp centering.

% ---- Unifying remark ----

\emph{Remark (unifying principle).}
All three BASE derivations share a common structure: the distributional LLR is affine in a scalar statistic with a slope that is constant across data points, so membership inference reduces to pooled centering.
More generally, in any one-parameter exponential family with sufficient statistic $T(z)$, the LLR is
\[
\LLR_i(z) = (\eta_{i,1} - \eta_{i,0})\,T(z_{i,0}) - \bigl(A(\eta_{i,1}) - A(\eta_{i,0})\bigr).
\]
If the parameter gap $\eta_{i,1} - \eta_{i,0} = \Delta\eta$ is constant across data points, the slope is constant and the score ranking reduces to $T(z_{i,0})$ minus a per-point centering term---exactly the BASE1 structure.
The three derivations above are instances with $T(\ell) = \ell$ (exponential), $T(z) = z$ (Gaussian with mean--variance proportionality), and $T(p) = \log p = -\ell$ (shared-scale Gamma on confidence).
In each case, centering in the confidence domain via $K^{-1}\sum e^{-\ell_{i,k}}$ yields the log-sum-exp average of the standard BASE implementation~\citep{lassila2025base}.

% ------------------------------------------------------------------
\section{BaVarIA Details}\label{app:bavaria}

\subsection{NIG Posterior Updates}

The NIG prior $(\mu, \sigma^2) \sim \NIG(\mu_{\varnothing,m}, \kappa_\varnothing, \alpha_\varnothing, \beta_{\varnothing,m})$ has density
\[
p(\mu, \sigma^2) = \frac{\sqrt{\kappa_\varnothing}}{\sigma\sqrt{2\pi}} \frac{\beta_{\varnothing,m}^{\alpha_\varnothing}}{\Gamma(\alpha_\varnothing)} \left(\frac{1}{\sigma^2}\right)^{\alpha_\varnothing + 1} \exp\!\left(-\frac{2\beta_{\varnothing,m} + \kappa_\varnothing(\mu - \mu_{\varnothing,m})^2}{2\sigma^2}\right).
\]

For a fixed data point $i$ and class $m$, after observing $n_m = |\mathcal{K}_{i,m}|$ shadow statistics with sample mean $\bar{z}_m$ and sum of squared deviations $S_m = \sum_{k \in \mathcal{K}_{i,m}}(z_{i,k} - \bar{z}_m)^2$, the posterior hyperparameters $(\mu'_{i,m}, \kappa'_{i,m}, \alpha'_{i,m}, \beta'_{i,m})$ are given by~\eqref{eq:nig_update1}.

\subsection{Empirical Bayes Hyperparameters}

The prior is estimated separately for each class $m \in \{0, 1\}$ via empirical Bayes, pooling shadow statistics across all points:
\begin{itemize}
\item $\mu_{\varnothing,m}$: global mean of the statistic across all class-$m$ shadow observations.
\item $\kappa_\varnothing$: controls the strength of the mean prior; set to 1 (weakly informative).
\item $\alpha_\varnothing$: controls the strength of the variance prior; set to 2 (weakly informative, ensuring $\E[\sigma^2]$ is finite).
\item $\beta_{\varnothing,m}$: set to $\sigma^{2}_{m,\text{global}} \cdot (\alpha_\varnothing - 1)$, so that $\E[\sigma^2] = \beta_{\varnothing,m} / (\alpha_\varnothing - 1)$ matches the observed class-$m$ global variance. With $\alpha_\varnothing = 2$, this gives $\beta_{\varnothing,m} = \sigma^{2}_{m,\text{global}}$.
\end{itemize}
The strength parameters $\kappa_\varnothing$ and $\alpha_\varnothing$ are shared across classes.
These defaults are used throughout all experiments. No per-dataset tuning is performed.

\subsection{Convergence to LiRA}

As the number of shadow observations $n_m = |\mathcal{K}_{i,m}|$ grows, the NIG posterior concentrates around the maximum likelihood estimates.
Specifically, from the update equations~\eqref{eq:nig_update1}
\begin{itemize}
\item The posterior mean converges to the sample mean: $\mu'_{i,m} = \frac{\kappa_\varnothing \mu_{\varnothing,m} + n_m \bar{z}_m}{\kappa_\varnothing + n_m} \to \bar{z}_m$ as $n_m \to \infty$, since the prior contribution $\kappa_\varnothing \mu_{\varnothing,m}$ becomes negligible.
\item The posterior variance converges to the MLE variance: $\beta'_{i,m}/\alpha'_{i,m} \to S_m/n_m = \hat\sigma_{m}^2$, so the Bayesian shrinkage vanishes.
\item The degrees of freedom $\nu_{i,m} = 2\alpha'_{i,m} = 2\alpha_\varnothing + n_m \to \infty$, so the Student-$t$ predictive~\eqref{eq:student_t} converges to a Gaussian.
\end{itemize}
Consequently, BaVarIA-$n$ reduces to LiRA (same Gaussian LLR with MLE parameters), and BaVarIA-$t$ also reduces to LiRA since the Student-$t$ converges to the Gaussian.
The prior becomes non-informative, and all three methods produce identical scores.

% ------------------------------------------------------------------
\section{Full Experimental Results}\label{app:full_results}

All experiments use Design~B resampling \citep{carlini2022membership} with 32 replicates across 12 testbeds and 7 shadow-model budgets ($K \in \{4, 8, 16, 32, 64, 128, 254\}$).
The full per-testbed scaling and ROC curves are collected in Appendix~\ref{app:curves}: BASE-hierarchy scaling (Figures~\ref{fig:online_base_auc}--\ref{fig:online_base_tpr_rn}), online-vs-offline BaVarIA curves (Figures~\ref{fig:offline_auc}--\ref{fig:offline_tpr_rn}), and ROC curves  (Figures~\ref{fig:roc_base_k8}--\ref{fig:roc_bav_offline_k64}).
Tables below show one representative architecture per data source (WideResNet for image, MLP4 for tabular); the second variant follows the same trends.

\subsection{Verification of Equivalences}

\paragraph{BASE1 = RMIA.}
BASE1 on loss with log-sum-exp averaging reproduces RMIA at $\gamma = 1$ exactly ($\max|\Delta\text{AUC}| = 0$ across all $K$ and datasets), confirming the equivalence of \citet{lassila2025base}.

\paragraph{BASE4 = LiRA.}
BASE4 on rescaled logits with MLE parameters (denominator $n$, i.e., the biased variance estimator) reproduces LiRA exactly at $K \geq 64$ ($\max|\Delta\text{AUC}| < 10^{-6}$). At $K < 64$, LiRA's hard switch substitutes a global variance for the per-point MLE, breaking the equivalence ($\max|\Delta\text{AUC}| = 0.039$ at $K = 8$).

\subsection{BASE Hierarchy at Multiple Shadow Budgets}\label{app:base_tables}

Tables~\ref{tab:base32_tpr001_K8}, \ref{tab:base32_tpr001_K64}, and~\ref{tab:base32_tpr001_K254} report TPR@0.01 for the BASE hierarchy (BASE1--BASE4) across all 12 testbeds at $K \in \{8, 64, 254\}$; LiRA $=$ BASE4 and RMIA $=$ BASE1 by construction (LiRA additionally applies a global-variance safeguard at small $K$, discussed below). The tables show the ordering reversal used throughout: at $K = 8$ the simpler families (BASE1, BASE2) lead in the low-FPR tail while raw per-class MLE (BASE4) collapses (e.g.\ $0.027$ on CIFAR-10/WRN, estimated from ${\sim}4$ observations per class), and by $K = 254$ the richer families lead as per-class variances converge.
\paragraph{Weighted concordance score.} For a given testbed and metric, let $m_a$ be the metric value of BASE method $a \in \{1,2,3,4\}$. Over the $\binom{4}{2}=6$ method pairs we set
\begin{equation}
S = \frac{\sum_{a<b} s_{ab}\,|m_a - m_b|}{\sum_{a<b} |m_a - m_b|} \in [-1,+1],
\end{equation}
where $s_{ab}=+1$ if the observed order of the pair matches the hierarchy's prediction---%
at small $K$ (\emph{simple wins}), the reverse at large $K$ (\emph{complex wins})---and $s_{ab}=-1$ otherwise. Weighting each pair by its gap $|m_a-m_b|$ keeps near-ties from dominating; $S=+1$ when every pair agrees with the prediction and $S=-1$ under a full reversal.
Table~\ref{tab:summary32_concordance} reports $S$ for each testbed and budget. We attach a percentile bootstrap 95\% confidence interval (CI) to each $S$ by resampling the 32 replicates and recomputing $S$ on the resampled means; a $\star$ marks cells whose CI excludes~0. The tail metrics (TPR@0.01, TPR@0.001) are significantly supported on all 12 testbeds at both budgets; only the AUC columns are mixed---four unsupported cells at $K=8$ (the noisy small-budget regime) and one at $K=254$ (CIFAR-10/RN18).

At very low FPR (FPR $\approx 0.001$) the ordering within the simpler pair is less stable at $K=8$: BASE2 holds up better than the fully pooled BASE1 on about half of the testbeds shown in Figure~\ref{fig:roc_base_k8} (most clearly Texas/MLP3, whose TPR@0.001 concordance $S=+0.65$ is its weakest cell), while BASE1 remains ahead on the rest. A plausible reading is that the class-mean separation BASE2 retains, and BASE1 discards, matters more at very low FPR. Differences are small and do not affect our conclusions about the hierarchy.

\paragraph{LiRA's hard-switch discontinuity.}
Conventional LiRA implementations switch from a single global variance to per-point variance estimation at $K = 64$ (when at least 32 observations are available per class). This binary switch produces a visible kink in LiRA's scaling curve---the step gain from $K = 32$ to $K = 64$ is anomalously large relative to the otherwise monotonically decreasing marginal returns at adjacent $K$ (Figures~\ref{fig:online_base_tpr}--\ref{fig:online_base_tpr_rn}), most pronounced for TPR@0.01 where the variance term in the Gaussian LLR most influences the tail. BaVarIA-$n$'s advantage over LiRA correspondingly peaks at $K = 32$ (where LiRA is still constrained to global variance) and drops sharply at $K = 64$; BaVarIA avoids the discontinuity entirely, as the NIG posterior smoothly shifts weight from the global prior to per-point data as $K$ grows (Figure~\ref{fig:ablation}).

\begin{table}%[h]
\centering
\footnotesize
\begin{tabular}{lcccc}
\toprule
Testbed & BASE1 & BASE2 & BASE3 & BASE4 \\
\midrule
CIFAR-10 / WRN & \textbf{0.098} & 0.086 & 0.072 & 0.027 \\
CIFAR-100 / WRN & \textbf{0.294} & 0.283 & 0.223 & 0.086 \\
CINIC-10 / WRN & 0.111 & \textbf{0.156} & 0.109 & 0.041 \\
Location / MLP3 & 0.225 & \textbf{0.297} & 0.215 & 0.110 \\
Purchase / MLP3 & \textbf{0.028} & 0.024 & 0.019 & 0.011 \\
Texas / MLP3 & \textbf{0.202} & 0.191 & 0.177 & 0.065 \\
CIFAR-10 / RN18 & \textbf{0.039} & 0.026 & 0.022 & 0.014 \\
CIFAR-100 / RN18 & 0.351 & \textbf{0.566} & 0.463 & 0.254 \\
CINIC-10 / RN18 & 0.133 & \textbf{0.215} & 0.161 & 0.062 \\
Location / MLP4 & 0.139 & \textbf{0.217} & 0.211 & 0.083 \\
Purchase / MLP4 & 0.036 & \textbf{0.039} & 0.028 & 0.013 \\
Texas / MLP4 & \textbf{0.233} & 0.231 & 0.216 & 0.084 \\
\bottomrule
\end{tabular}
\caption{TPR @ FPR$=0.01$, $K=8$, redefined BASE hierarchy. Mean over 32 replicates. Best per row in bold. LiRA $=$ BASE4, RMIA $=$ BASE1.}
\label{tab:base32_tpr001_K8}
\end{table}

\begin{table}%[h]
\centering
\footnotesize
\begin{tabular}{lcccc}
\toprule
Testbed & BASE1 & BASE2 & BASE3 & BASE4 \\
\midrule
CIFAR-10 / WRN & 0.111 & 0.126 & \textbf{0.136} & 0.128 \\
CIFAR-100 / WRN & 0.326 & 0.345 & \textbf{0.366} & 0.362 \\
CINIC-10 / WRN & 0.129 & 0.180 & \textbf{0.182} & 0.179 \\
Location / MLP3 & 0.274 & 0.378 & 0.398 & \textbf{0.434} \\
Purchase / MLP3 & 0.037 & 0.037 & \textbf{0.040} & 0.031 \\
Texas / MLP3 & 0.253 & 0.232 & \textbf{0.306} & 0.305 \\
CIFAR-10 / RN18 & \textbf{0.047} & 0.043 & 0.046 & 0.039 \\
CIFAR-100 / RN18 & 0.425 & 0.636 & 0.645 & \textbf{0.664} \\
CINIC-10 / RN18 & 0.175 & 0.272 & 0.277 & \textbf{0.287} \\
Location / MLP4 & 0.195 & 0.280 & \textbf{0.364} & 0.351 \\
Purchase / MLP4 & 0.048 & 0.059 & \textbf{0.063} & 0.053 \\
Texas / MLP4 & 0.284 & 0.279 & 0.355 & \textbf{0.357} \\
\bottomrule
\end{tabular}
\caption{TPR @ FPR$=0.01$, $K=64$, redefined BASE hierarchy. Mean over 32 replicates. Best per row in bold. LiRA $=$ BASE4, RMIA $=$ BASE1.}
\label{tab:base32_tpr001_K64}
\end{table}

\begin{table}%[h]
\centering
\footnotesize
\begin{tabular}{lcccc}
\toprule
Testbed & BASE1 & BASE2 & BASE3 & BASE4 \\
\midrule
CIFAR-10 / WRN & 0.113 & 0.132 & 0.142 & \textbf{0.143} \\
CIFAR-100 / WRN & 0.330 & 0.354 & 0.383 & \textbf{0.385} \\
CINIC-10 / WRN & 0.131 & 0.183 & 0.189 & \textbf{0.192} \\
Location / MLP3 & 0.286 & 0.387 & 0.421 & \textbf{0.452} \\
Purchase / MLP3 & 0.038 & 0.040 & \textbf{0.045} & 0.043 \\
Texas / MLP3 & 0.261 & 0.237 & 0.320 & \textbf{0.326} \\
CIFAR-10 / RN18 & 0.048 & 0.047 & \textbf{0.051} & 0.050 \\
CIFAR-100 / RN18 & 0.434 & 0.645 & 0.660 & \textbf{0.682} \\
CINIC-10 / RN18 & 0.181 & 0.280 & 0.290 & \textbf{0.304} \\
Location / MLP4 & 0.206 & 0.286 & \textbf{0.383} & 0.381 \\
Purchase / MLP4 & 0.050 & 0.062 & \textbf{0.069} & 0.066 \\
Texas / MLP4 & 0.290 & 0.284 & 0.371 & \textbf{0.376} \\
\bottomrule
\end{tabular}
\caption{TPR @ FPR$=0.01$, $K=254$, redefined BASE hierarchy. Mean over 32 replicates. Best per row in bold. LiRA $=$ BASE4, RMIA $=$ BASE1.}
\label{tab:base32_tpr001_K254}
\end{table}

\subsection{Distributional Diagnostics}\label{app:qq}

Figure~\ref{fig:qq_diagnostic} shows QQ plots of per-datapoint standardized shadow log-odds against the $\Normal(0,1)$ reference, and Table~\ref{tab:ad_normality} reports Anderson-Darling (AD) rejection rates at the $5\%$ level for all 12 testbeds.
Rejection rates range from near-nominal (${\sim}6\%$ on CIFAR-10 WRN) to over $80\%$ (Location MLP3 IN class), with the largest departures concentrated in the IN class---consistent with heavier tails visible in the QQ plots for CIFAR-100 ResNet ($74\%$ IN rejection).

Despite these departures, the Gaussian-family methods lead at moderate-to-large $K$ (Section~\ref{sec:experiments}).
The LLR score is evaluated at the target model's output, which typically lies in the bulk of the distribution where the Gaussian approximation remains accurate; tail departures affect the density estimate but not the ranking of most data points.

\begin{figure}
\centering
\includegraphics[width=\linewidth]{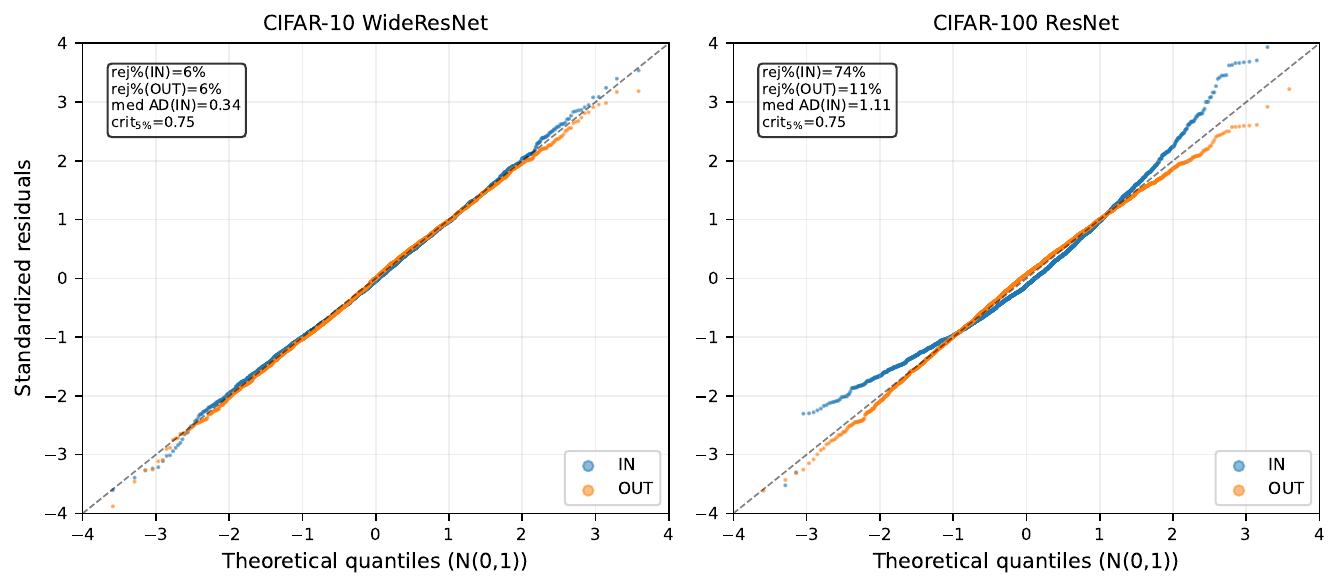}
\caption{Per-datapoint QQ diagnostic: standardized residuals of shadow log-odds (per data point, per class) pooled across datapoints and plotted against $\Normal(0,1)$ quantiles. Annotations show per-datapoint Anderson-Darling (AD) rejection rates. CIFAR-10 WRN is well-approximated by a Gaussian (${\sim}6\%$ AD rejection); CIFAR-100 ResNet shows heavier tails in the IN class ($74\%$ AD rejection).}\label{fig:qq_diagnostic}
\end{figure}

\begin{table}
\centering
\caption{Per-datapoint Anderson-Darling normality test on shadow log-odds ($K = 254$, $n = 1{,}000$ datapoints per class). Rejection rate at the $5\%$ significance level (critical value $0.748$). Gaussianity holds well for WideResNet on CIFAR-10/100 but is strongly violated for Location MLP3 and CINIC-10 WideResNet.}\label{tab:ad_normality}
\small
\begin{tabular}{l rr rr}
\toprule
 & \multicolumn{2}{c}{IN class} & \multicolumn{2}{c}{OUT class} \\
\cmidrule(lr){2-3} \cmidrule(lr){4-5}
Dataset & med AD & rej\% & med AD & rej\% \\
\midrule
C10-RN   & 0.475 & 22.4 & 0.438 & 19.4 \\
C10-WRN  & 0.342 &  5.9 & 0.355 &  6.5 \\
C100-RN  & 1.106 & 74.3 & 0.388 & 11.1 \\
C100-WRN & 0.357 &  6.8 & 0.369 &  8.2 \\
CN10-RN  & 0.607 & 37.6 & 0.420 & 18.2 \\
CN10-WRN & 0.934 & 57.3 & 1.463 & 67.2 \\
\midrule
Loc-3    & 1.966 & 85.0 & 0.402 & 12.9 \\
Loc-4    & 1.055 & 70.5 & 0.618 & 41.1 \\
Pur-3    & 0.403 & 15.1 & 0.456 & 20.4 \\
Pur-4    & 0.400 & 15.9 & 0.437 & 18.6 \\
Tex-3    & 0.406 & 17.2 & 0.396 & 14.4 \\
Tex-4    & 0.437 & 17.3 & 0.482 & 26.7 \\
\bottomrule
\end{tabular}
\end{table}

% ------------------------------------------------------------------
\section{Statistical Significance}\label{app:significance}

This appendix reports significance tests for the two comparative claims in the main body: the TPR @ FPR$=0.01$ advantage of BaVarIA-$n$ over LiRA at $K=64$ (Table~\ref{tab:bavaria32_tpr001_K64}) and the BASE-hierarchy concordance (Table~\ref{tab:summary32_concordance}; bootstrap CIs described in Appendix~\ref{app:base_tables}). All tests resample over the 32 replicates. For the BaVarIA-$n$ vs.\ LiRA comparison we form, per testbed and setting, the 32 same-replicate paired deltas $\Delta=$ BaVarIA-$n$ $-$ LiRA and apply a paired Wilcoxon signed-rank test together with a percentile bootstrap 95\% CI of the mean $\Delta$ ($10{,}000$ resamples). Online and offline are treated as separate families, with Holm--Bonferroni correction across the 12 testbeds within each. To ensure the pairing reflects the estimator and not the shadow draw, both methods are scored on one common shadow subsample per replicate (the ``shared-shadows'' design).

\begin{table*}[t]
\centering
\footnotesize
\begin{tabular}{l cc c cc}
\toprule
 & \multicolumn{2}{c}{Online} & & \multicolumn{2}{c}{Offline} \\
\cmidrule(lr){2-3} \cmidrule(lr){5-6}
Testbed & $\Delta$ (\%pt) & 95\% CI & & $\Delta$ (\%pt) & 95\% CI \\
\midrule
CIFAR-10 / WRN & +0.63$^{\star}$ & [+0.52, +0.73] & & -1.36$^{\dagger}$ & [-1.74, -0.97] \\
CIFAR-100 / WRN & +0.85$^{\star}$ & [+0.71, +0.99] & & +1.32$^{\star}$ & [+1.04, +1.60] \\
CINIC-10 / WRN & +0.71$^{\star}$ & [+0.65, +0.78] & & +5.81$^{\star}$ & [+5.54, +6.08] \\
Location / MLP3 & +0.72$^{\star}$ & [+0.40, +1.06] & & +7.68$^{\star}$ & [+6.56, +8.88] \\
Purchase / MLP3 & +0.44$^{\star}$ & [+0.42, +0.46] & & +0.25$^{\star}$ & [+0.17, +0.33] \\
Texas / MLP3 & +0.42$^{\star}$ & [+0.29, +0.55] & & +1.82$^{\star}$ & [+1.40, +2.26] \\
CIFAR-10 / RN18 & +0.25$^{\star}$ & [+0.19, +0.30] & & -0.65$^{\dagger}$ & [-0.82, -0.48] \\
CIFAR-100 / RN18 & +0.56$^{\star}$ & [+0.44, +0.67] & & +8.30$^{\star}$ & [+8.07, +8.53] \\
CINIC-10 / RN18 & +0.53$^{\star}$ & [+0.43, +0.66] & & +5.38$^{\star}$ & [+4.98, +5.95] \\
Location / MLP4 & +0.91$^{\star}$ & [+0.38, +1.46] & & +3.47$^{\star}$ & [+2.48, +4.48] \\
Purchase / MLP4 & +0.61$^{\star}$ & [+0.59, +0.65] & & +0.40$^{\star}$ & [+0.28, +0.53] \\
Texas / MLP4 & +0.28$^{\star}$ & [+0.16, +0.40] & & +2.47$^{\star}$ & [+1.91, +3.13] \\
\bottomrule
\end{tabular}
\caption{Paired advantage $\Delta=$ BaVarIA-$n$ $-$ LiRA in TPR @ FPR$=0.01$ at $K=64$ (percentage points), over the 32 replicates (shared-shadows: both methods on one common shadow subsample per replicate). 95\% CI is a percentile bootstrap of the mean $\Delta$. $^{\star}$/$^{\dagger}$: significantly better/worse at $95\%$ confidence (paired Wilcoxon signed-rank, Holm--Bonferroni within each family). BaVarIA-$n$ significantly beats LiRA on 12/12 online and 10/12 offline, and is significantly worse on 2/12 offline (both CIFAR-10 testbeds).}
\label{tab:bavaria32_significance}
\end{table*}

BaVarIA-$n$ significantly beats LiRA on all $12/12$ testbeds online and $10/12$ offline, and is significantly worse only on the two CIFAR-10 testbeds offline---where LiRA's offline score is already strong and per-class variance is hardest to estimate.
The margins are small online---between $+0.25$ and $+0.91$ percentage points---and substantially larger offline (up to $+8.3$ pp), where LiRA's score lacks the IN-class likelihood that the NIG prior supplies.

% ------------------------------------------------------------------
\section{Distribution Family Comparison}\label{app:distributions}

Table~\ref{tab:dist_auc} (AUC) and Table~\ref{tab:dist_comparison} (TPR@0.01) compare the four distributional families of the framework at $K = 64$ (online, mean over 32 replicates): Exponential and Gamma on the loss, Beta on the confidence, and Gaussian on the rescaled log-odds, with LiRA and RMIA as references.
The Gaussian on log-odds reproduces LiRA exactly---as they are identical by definition---and is the strongest family on every testbed.

On the loss statistic, the Gamma model's extra shape parameter gives no consistent benefit over the one-parameter Exponential: its AUC is lower on five of six testbeds, and its TPR@0.01 gains, where they occur, are marginal (Table~\ref{tab:dist_comparison}). The Beta model on confidence stays below the Gaussian everywhere; against the loss-based families it is mixed in AUC but ahead at TPR@0.01 on most testbeds. Differences between families are larger at TPR@0.01 than in AUC; at TPR@0.01 the Exponential is near the $0.01$ chance level on two testbeds ($0.018$ on CIFAR-10, $0.008$ on Purchase), while the pooled RMIA ($=$BASE1), though below the Gaussian throughout, is the second-strongest score on four of six. Because the families differ in both the sufficient statistic and the functional form of the likelihood ratio, these comparisons do not separate the two contributions. They do, however, support the paper's focus on the Gaussian branch: no alternative family improves on the Gaussian on log-odds on any testbed at either metric, and the cost of a misspecified family appears far larger in the low-FPR tail than in AUC.

\begin{table}
\centering
\caption{AUC by distributional family at $K = 64$, mean over 32 replicates; best per row in bold.}\label{tab:dist_auc}
\small
\begin{tabular}{lrrrrrr}
\toprule
Dataset & EXP & GAUSS & GAMMA & BETA & LiRA & RMIA \\
\midrule
C10-WRN  & 0.695 & \textbf{0.720} & 0.651 & 0.691 & \textbf{0.720} & 0.691 \\
C100-WRN & 0.883 & \textbf{0.908} & 0.897 & 0.903 & \textbf{0.908} & 0.882 \\
CN10-WRN & 0.748 & \textbf{0.756} & 0.738 & 0.746 & \textbf{0.756} & 0.708 \\
Loc-4    & 0.885 & \textbf{0.929} & 0.852 & 0.887 & \textbf{0.929} & 0.843 \\
Pur-4    & 0.617 & \textbf{0.649} & 0.566 & 0.597 & \textbf{0.649} & 0.642 \\
Tex-4    & 0.857 & \textbf{0.895} & 0.785 & 0.865 & \textbf{0.895} & 0.862 \\
\bottomrule
\end{tabular}
\end{table}

\begin{table}
\centering
\caption{TPR@0.01 by distributional family at $K = 64$, mean over 32 replicates; best per row in bold.}\label{tab:dist_comparison}
\small
\begin{tabular}{lrrrrrr}
\toprule
Dataset & EXP & GAUSS & GAMMA & BETA & LiRA & RMIA \\
\midrule
C10-WRN  & 0.018 & \textbf{0.129} & 0.020 & 0.018 & \textbf{0.129} & 0.112 \\
C100-WRN & 0.128 & \textbf{0.361} & 0.060 & 0.330 & \textbf{0.361} & 0.326 \\
CN10-WRN & 0.168 & \textbf{0.178} & 0.048 & 0.146 & \textbf{0.178} & 0.129 \\
Loc-4    & 0.041 & \textbf{0.355} & 0.054 & 0.112 & \textbf{0.355} & 0.189 \\
Pur-4    & 0.008 & \textbf{0.053} & 0.010 & 0.012 & \textbf{0.053} & 0.049 \\
Tex-4    & 0.060 & \textbf{0.357} & 0.075 & 0.151 & \textbf{0.357} & 0.281 \\
\bottomrule
\end{tabular}
\end{table}

% ------------------------------------------------------------------
\section{BaVarIA Per-Dataset Results}\label{app:bavaria_results}

Tables~\ref{tab:bavaria32_online_auc_K8} and~\ref{tab:bavaria32_online_tpr001_K8} report per-dataset AUC and TPR@0.01 at $K = 8$, the smallest budget we consider and the regime in which per-class variance estimation is least reliable; Table~\ref{tab:bavaria32_tpr001_K64} gives the corresponding tail results at $K = 64$.
Figures~\ref{fig:offline_auc}--\ref{fig:offline_tpr_rn} show the full scaling curves across all datasets (solid = online, dashed = offline).
We discuss the online results in this section and refer to Appendix~\ref{app:offline} for an analysis of the offline results.

At $K = 8$ (Table~\ref{tab:bavaria32_online_auc_K8}), BaVarIA-$t$ achieves the best AUC on 11 of 12 testbeds, losing only on CIFAR-100/WRN.
The margins are narrow---between $+0.003$ and $+0.010$ over LiRA, largest on Texas/MLP3 ($0.854$ vs.\ $0.844$)---so the advantage at this budget is one of consistency rather than magnitude.
BaVarIA-$n$ is essentially indistinguishable from LiRA on AUC at $K = 8$ (4 wins, 2 ties and 6 losses; mean $-0.001$, worst $-0.006$)---the AUC gain at this budget comes from the Student-$t$ predictive, whose heavier tails BaVarIA-$n$'s Gaussian LLR cannot exploit.

On TPR@0.01 at the same budget (Table~\ref{tab:bavaria32_online_tpr001_K8}) the ordering differs: LiRA is best on 7 of 12 testbeds (tied on one) and BaVarIA-$t$ on none, while BaVarIA-$n$ leads on the four Location and Texas testbeds and on CIFAR-100/RN18 (for example $0.371$ vs.\ $0.344$ on Location/MLP3).
The margins are small and asymmetric---LiRA's leads are all within $0.005$, BaVarIA-$n$'s reach $0.027$---so the count alone does not settle the comparison at this budget.
The tail gains reported in Section \ref{sec:bavariaablation} arrive at moderate budgets---at $K = 64$ BaVarIA-$n$ beats LiRA on all 12 testbeds (Table~\ref{tab:bavaria32_tpr001_K64})---consistent with the peak near the $K = 64$ cutover in Figure~\ref{fig:ablation}.

\begin{table}[h]
\centering
\footnotesize
\begin{tabular}{lccc}
\toprule
Testbed & LiRA & BaVarIA-$n$ & BaVarIA-$t$ \\
\midrule
CIFAR-10 / WRN & 0.686 & 0.680 & \textbf{0.689} \\
CIFAR-100 / WRN & \textbf{0.888} & 0.886 & 0.887 \\
CINIC-10 / WRN & 0.729 & 0.729 & \textbf{0.732} \\
Location / MLP3 & 0.933 & 0.936 & \textbf{0.941} \\
Purchase / MLP3 & 0.581 & 0.576 & \textbf{0.585} \\
Texas / MLP3 & 0.844 & 0.844 & \textbf{0.854} \\
CIFAR-10 / RN18 & 0.577 & 0.574 & \textbf{0.582} \\
CIFAR-100 / RN18 & 0.968 & 0.969 & \textbf{0.971} \\
CINIC-10 / RN18 & 0.857 & 0.858 & \textbf{0.862} \\
Location / MLP4 & 0.906 & 0.908 & \textbf{0.913} \\
Purchase / MLP4 & 0.618 & 0.613 & \textbf{0.623} \\
Texas / MLP4 & 0.868 & 0.867 & \textbf{0.875} \\
\bottomrule
\end{tabular}
\caption{Online AUC, $K=8$. Mean over 32 replicates. Best per row in bold.}
\label{tab:bavaria32_online_auc_K8}
\end{table}

\begin{table}[h]
\centering
\footnotesize
\begin{tabular}{lccc}
\toprule
Testbed & LiRA & BaVarIA-$n$ & BaVarIA-$t$ \\
\midrule
CIFAR-10 / WRN & \textbf{0.092} & 0.087 & 0.070 \\
CIFAR-100 / WRN & \textbf{0.294} & 0.293 & 0.261 \\
CINIC-10 / WRN & \textbf{0.158} & 0.154 & 0.148 \\
Location / MLP3 & 0.344 & \textbf{0.371} & 0.351 \\
Purchase / MLP3 & \textbf{0.025} & 0.021 & 0.021 \\
Texas / MLP3 & 0.204 & \textbf{0.226} & 0.212 \\
CIFAR-10 / RN18 & \textbf{0.026} & 0.023 & 0.022 \\
CIFAR-100 / RN18 & 0.587 & \textbf{0.600} & 0.583 \\
CINIC-10 / RN18 & \textbf{0.231} & 0.231 & 0.217 \\
Location / MLP4 & 0.223 & \textbf{0.250} & 0.224 \\
Purchase / MLP4 & \textbf{0.040} & 0.036 & 0.034 \\
Texas / MLP4 & 0.249 & \textbf{0.271} & 0.251 \\
\bottomrule
\end{tabular}
\caption{Online TPR @ FPR$=0.01$, $K=8$. Mean over 32 replicates. Best per row in bold.}
\label{tab:bavaria32_online_tpr001_K8}
\end{table}

% ------------------------------------------------------------------
\section{Cross-Validation on Independent Data}\label{app:crossval}

To verify that our findings generalize beyond a single training pipeline, we cross-validated all methods on an independent shadow-model collection trained without data augmentation or dropout.
This collection provides 224--1,480 shadow models per dataset (vs.\ 254 in our primary benchmark).

\paragraph{Absolute vulnerability.}
Models trained without augmentation are substantially more vulnerable to MIA on image datasets: CIFAR-10 ResNet AUC increases from 0.613 to 0.874, CIFAR-100 ResNet from 0.978 to 0.993.
This is consistent with the findings of \citet{sablayrolles2019white}, who first documented this effect at scale on ImageNet (attack accuracy dropping from 90\% to 68\% with standard augmentation).
On tabular datasets, the difference is smaller since these tasks already lack augmentation.

\paragraph{Method ranking preservation.}
Despite the large shift in absolute vulnerability, the qualitative ordering is preserved across both data sources: BaVarIA and LiRA (with BASE4 $\equiv$ LiRA at $K \geq 64$) lead, then BASE3, while the pooled BASE1 ($=$ RMIA) trails; within the BaVarIA pair, BaVarIA-$t$ leads on AUC (12 of 12 testbeds at $K=64$), while at low FPR the two variants are effectively tied (BaVarIA-$n$ ahead on 7 of 12 on TPR@0.01, means within $0.001$).
At $K = 64$, BaVarIA-$n$ beats LiRA on TPR@0.01 in 10 of 12 testbeds.
The exceptions are the two CIFAR-100 testbeds, where LiRA retains a marginal edge ($|\Delta\text{TPR}| < 0.003$).
At $K = 8$ the same pattern holds: BaVarIA-$t$ leads LiRA in AUC on all 12 testbeds (mean $+0.005$), while at low FPR neither variant improves on LiRA (BaVarIA-$n$ ahead on 6 of 12, BaVarIA-$t$ on 2).

\paragraph{Convergence at large $K$.}
On the independent data, the convergence of BaVarIA to LiRA is even more apparent because the collection provides up to 1,480 shadow models.
At each testbed's largest available budget ($K = 254$; $K = 128$ for the two CINIC collections), BaVarIA-$n$ and LiRA agree in AUC to within $0.001$ on all 12 testbeds, confirming the theoretical convergence result of Appendix~\ref{app:bavaria}.

These findings support that the BaVarIA-$t$ AUC advantage at small $K$, the BASE hierarchy ordering, and the convergence at large $K$ hold across the two training pipelines and regularization settings tested.

% ------------------------------------------------------------------
\section{ELSA: Discriminative Scoring}\label{app:elsa}

For any exponential-family model, the log-likelihood ratio is a linear function of the sufficient statistics (Section~\ref{sec:framework}):
\begin{equation}\label{eq:elsa_linear}
\LLR(z) = w^\top \phi(z),
\end{equation}
where $\phi$ is a feature map determined by the distributional family.
Rather than estimating $w$ generatively (via MLE of the distributional parameters), we can estimate it discriminatively via logistic regression on shadow-model data.

\paragraph{Feature map.}
We define a universal feature map that subsumes the sufficient statistics of all distributional families considered in this paper:
\begin{equation}\label{eq:elsa_features}
\phi(p) = \begin{bmatrix} 1,\; \log(-\log p),\; \log p,\; \log(1{-}p),\;
\varphi^2,\; p,\; p^2 \end{bmatrix}^\top,
\end{equation}
where $\varphi = \log(p/(1-p))$ is the rescaled logit.
The logit $\varphi = \log p - \log(1{-}p)$ is itself a linear combination of two features already in $\phi(p)$ and cannot enter as a separate column without making the design collinear; the $\varphi$-based variants instead tie the weights on $\log p$ and $\log(1{-}p)$ to a single coefficient $w_\varphi$ with opposite signs.
Different distributional families correspond to different active subsets (or tied combinations) of~$\phi$: the Gaussian model on log-odds uses $\{1, \varphi, \varphi^2\}$, the Beta model uses $\{1, \log p, \log(1{-}p)\}$, and the Gamma model uses $\{1, \log(-\log p), \log p\}$.

\begin{table}
\centering
\caption{ELSA: AUC at $K = 64$, mean over 32 replicates (WideResNet image and MLP4 tabular testbeds). Discriminative ELSA recovers but does not beat the generative Gaussian estimators; best per column in bold.}\label{tab:elsa_auc} 
\small
\begin{tabular}{lcccccc}
\toprule
Method & C10 & C100 & CN10 & Loc & Pur & Tex \\
\midrule
LiRA        & .720 & \textbf{.908} & \textbf{.756} & \textbf{.929} & .649 & \textbf{.895} \\
BASE3       & \textbf{.723} & \textbf{.908} & .753 & .928 & \textbf{.654} & .894 \\
BASE2       & .721 & .906 & .755 & .923 & \textbf{.654} & .886 \\
RMIA        & .691 & .882 & .708 & .843 & .642 & .862 \\
\midrule
ELSA1       & .588 & .805 & .613 & .802 & .563 & .812 \\
ELSA2$\phi$ & .684 & .884 & .701 & .925 & .620 & .882 \\
ELSA3$\beta$ & .679 & .871 & .701 & .925 & .615 & .877 \\
ELSA3$\gamma$ & .682 & .879 & .701 & .925 & .618 & .880 \\
ELSA3$\phi$ & .701 & .897 & .744 & .926 & .629 & .888 \\
ELSA-full   & .699 & .893 & .742 & .925 & .628 & .885 \\
\bottomrule
\end{tabular}
\end{table}

\paragraph{Training.}
For each data point $i$, we fit weights $w_i$ by minimizing the ridge-penalized logistic loss over shadow-model observations: 
\begin{equation}\label{eq:elsa_loss}
\hat{w}_i = \arg\min_w \left[ -\sum_{k=1}^{K} \Big( m_{i,k} \log \sigma\!\big(w^\top \phi_k\big) + (1 - m_{i,k}) \log\!\big(1 - \sigma(w^\top \phi_k)\big) \Big) + \lambda \|w_{-0}\|^2 \right],
\end{equation}
where $\phi_k = \phi(p_{i,k})$, $\sigma$ is the sigmoid function, and $\lambda$ is the ridge penalty applied to all weights except the intercept~$w_0$.
The intercept is excluded from penalization following standard practice in regularized logistic regression: including it would shrink all predictions toward the $0.5$ base rate, preventing the model from learning even the marginal membership probability.

Distribution-specific ELSA variants are obtained by feature masking: setting $w_j = 0$ for features outside the active set forces the model to use only the sufficient statistics of the chosen family.
For example, ELSA2$\phi$ optimizes only $\{w_0, w_\varphi\}$, recovering a discriminatively trained analogue of the equal-variance Gaussian LLR~\eqref{eq:gauss_equal_var}, while ELSA3$\phi$ adds $w_{\varphi^2}$ to capture the full Gaussian structure~\eqref{eq:gauss_llr}.

\paragraph{Results.}
Tables~\ref{tab:elsa_auc} and~\ref{tab:elsa_tlr} report AUC and TPR@0.01 at $K = 64$, mean over 32 replicates, for the ELSA variants.
The feature maps correspond to different distributional initializations:
ELSA1 trains only the intercept, with the $\log p$ weight fixed at unity (a discriminatively recentered exponential/BASE1 score);
ELSA2$\phi$ uses the rescaled logit $\varphi$ (two features, inspired by the Gaussian model);
ELSA3$\beta$ and ELSA3$\gamma$ use three features from the Beta and Gamma sufficient statistics;
ELSA3$\phi$ adds the quadratic term $\varphi^2$;
and ELSA-full uses all seven features.

\paragraph{AUC.}
On AUC (Table~\ref{tab:elsa_auc}), discriminative training does not improve on the generative estimators: LiRA\,($=$\,BASE4) and BASE3 lead on every testbed, and each ELSA variant trails its matched generative twin (ELSA2$\phi$ vs.\ BASE2: $-0.025$ mean AUC; ELSA1 vs.\ RMIA: $-0.074$). Learning the feature map recovers the Gaussian LLR but does not beat the fixed one.

\begin{table}
\centering
\caption{ELSA: TPR@0.01 at $K = 64$, mean over 32 replicates (WideResNet and MLP4 testbeds). The generative BASE hierarchy leads; ELSA2$\phi$ only matches its BASE2 twin; best per column in bold.}\label{tab:elsa_tlr}
\small
\begin{tabular}{lrrrrrr}
\toprule
Method & C10 & C100 & CN10 & Loc & Pur & Tex \\
\midrule
LiRA        & 0.129 & 0.361 & 0.178 & 0.355 & 0.053 & \textbf{0.357} \\
BASE3       & \textbf{0.135} & \textbf{0.363} & 0.178 & \textbf{0.366} & \textbf{0.063} & 0.355 \\
BASE2       & 0.127 & 0.345 & 0.178 & 0.283 & 0.059 & 0.280 \\
RMIA        & 0.112 & 0.326 & 0.129 & 0.189 & 0.049 & 0.281 \\
\midrule
ELSA1       & 0.054 & 0.178 & 0.071 & 0.137 & 0.021 & 0.213 \\
ELSA2$\phi$ & 0.118 & 0.316 & 0.180 & 0.340 & 0.057 & 0.320 \\
ELSA3$\beta$ & 0.107 & 0.274 & 0.177 & 0.319 & 0.053 & 0.296 \\
ELSA3$\gamma$ & 0.114 & 0.301 & 0.179 & 0.329 & 0.055 & 0.309 \\
ELSA3$\phi$ & 0.119 & 0.256 & \textbf{0.183} & 0.317 & 0.054 & 0.320 \\
ELSA-full   & 0.110 & 0.213 & 0.180 & 0.303 & 0.053 & 0.295 \\
\bottomrule
\end{tabular}
\end{table}

\paragraph{TPR at low FPR.}
At TPR@0.01 (Table~\ref{tab:elsa_tlr}) the same pattern holds: the generative BASE3 and LiRA lead on most testbeds, and ELSA2$\phi$ only draws level with its BASE2 twin (mean $+0.010$) without reaching LiRA. Richer discriminative variants give no consistent gain.

\paragraph{Practical assessment.}
ELSA is best read as a diagnostic: it confirms the parametric Gaussian LLR is well-specified rather than improving on it, at the cost of per-point logistic regression and sensitivity to feature selection. For a simple, robust attack we recommend the generative estimators---BaVarIA, or the simpler BASE variants at small budgets---over discriminative feature-map learning. 

% ------------------------------------------------------------------
\section{Offline Setting Analysis}\label{app:offline}

In the online (audit) setting used throughout this paper, each target point appears in roughly half of the shadow-model training sets, providing per-point IN and OUT observations.
In the offline (practical) setting, shadow models are trained on data from the same distribution as the target model but the target point does not appear in any shadow training set~\citep{carlini2022membership}.
All shadow observations for a given target point are therefore OUT-class samples; no direct IN-class observations are available.

\paragraph{Existing offline approaches.}
LiRA~\citep{carlini2022membership} estimates $\mu_{\text{out}}$ per-point from all (OUT-only) shadows but must approximate $\mu_{\text{in}}$ via a global mean-shift heuristic: $\hat\mu_{\text{in}} = \hat\mu_{\text{out}} + \Delta$, where $\Delta$ is a population-level constant estimated from reference data.
With forced equal variances $\sigma_{\text{in}}^2 = \sigma_{\text{out}}^2 = \sigma^2$, the Gaussian LLR~\eqref{eq:gauss_llr} reduces to
\begin{equation}\label{eq:lira_offline}
\LLR_{\text{offline}}(\varphi) = \frac{\Delta}{\sigma^2}\left(\varphi - \hat\mu_{\text{out}} - \frac{\Delta}{2}\right),
\end{equation}
a linear function of the observation---an instance of the equal-variance Gaussian LLR~\eqref{eq:gauss_equal_var} with the shifted mean.
This represents a substantial degradation from the full four-parameter online LiRA, particularly at low FPR where per-point heterogeneity carries the strongest signal.

In practice, \citet{carlini2022membership} implement the offline attack via the Gaussian log-CDF: $\text{score}_i = \log\Phi\!\bigl((\varphi_{i,0} - \hat\mu_{i,\text{out}})/\hat\sigma\bigr)$, where $\Phi$ is the standard normal CDF.
When $\hat\sigma$ is a single global constant (the offline regime), this is a monotone transformation of the linear score~\eqref{eq:lira_offline} and produces the same ROC curve.
When $\hat\sigma$ varies per-point, the two formulations can differ in ranking. 

RMIA~\citep{zarifzadeh2024lowcost} uses a population-reference mechanism that does not explicitly split shadow observations into IN and OUT classes, making its score formula applicable without modification in the offline setting.
However, the offline score distribution shifts because all shadow models are OUT-only: the likelihood ratios computed against reference points are systematically biased when the target point never appeared in training.
Empirically, RMIA shows a small but consistent degradation in our experiments ($\Delta_{\text{AUC}} \approx 0.008$; Tables~\ref{tab:offline_auc}--\ref{tab:offline_tpr}), confirming that the offline setting still carries a cost even for reference-based methods. 

BASE~\citep{lassila2025base} introduces a scaling factor $\alpha \in [0,1]$ on the log-sum-exp term for the offline case:
\begin{equation}\label{eq:base_offline}
\text{BASE}_i^{\text{off}} = -\ell_{i,0} - \alpha \cdot \log\!\left(\tfrac{1}{K}\textstyle\sum_{k=1}^K e^{-\ell_{i,k}}\right),
\end{equation}
where the sum runs over OUT-only shadow models and $\alpha$ compensates for the systematically higher losses of models that never trained on the target point.
The BASE--RMIA equivalence (Proposition~\ref{prop:equiv}a) holds only in the online setting: offline, BASE applies the $\alpha$ scaling of~\eqref{eq:base_offline} while RMIA's score is unmodified, so the two attacks no longer coincide.

\paragraph{BASE hierarchy in the offline setting.}
Table~\ref{tab:offline_hierarchy} summarizes how each BASE variant adapts to the offline case.
The key pattern is that methods requiring IN-class parameters lose access to per-point IN observations and must fall back on population-level estimates.

\begin{table}
\centering
\caption{BASE hierarchy: online vs.\ offline parameter estimation.
In the offline setting, methods that estimate separate IN-class parameters collapse toward pooled estimation.}\label{tab:offline_hierarchy}
\small
\begin{tabular}{lll}
\toprule
Method & Online parameters & Offline fallback \\
\midrule
BASE1 & $\hat{z}_{i,\text{pool}}$ from all shadows & Same; optional $\alpha$ scaling~\eqref{eq:base_offline} \\
BASE2 & $\hat\mu_{i,0}, \hat\mu_{i,1}$; global $\sigma^2$ & $\hat\mu_{i,1} = \hat\mu_{i,0} + \Delta$; $\to$ const-var LLR \\
BASE3 & $\hat\mu_{i,0}, \hat\mu_{i,1}$; $\hat\sigma_i^2$ pooled & $\hat\mu_{i,1} = \hat\mu_{i,0} + \Delta$; $\to$ eq-var LLR \\
BASE4 & $\hat\mu_{i,m}, \hat\sigma_{i,m}^2$ per class & $\sigma_{\text{in}}^2 = \sigma_{\text{out}}^2$, $\mu_{\text{in}}$ via shift; $\to$ eq-var LLR \\
\bottomrule
\end{tabular}
\end{table}

BASE1 is inherently offline-friendly because it never splits shadow observations into IN and OUT classes: the pooled centering uses all available shadows regardless of membership status.
BASE2, together with BASE3 and BASE4, requires separate IN-class parameters and must estimate the missing $\mu_{\text{in}}$ via a global mean-shift heuristic (BASE2 already uses a constant variance; BASE3 estimates a single pooled variance).
Under these constraints they reduce to a constant- or equal-variance Gaussian LLR~\eqref{eq:gauss_equal_var} with the shifted mean, differing only in the $\Delta$-dependent centering point.

\paragraph{BaVarIA in the offline setting.}
BaVarIA's conjugate NIG prior provides a principled mechanism for the offline case.
With zero IN-class observations for the target point ($n_1 = 0$), the NIG posterior updates~\eqref{eq:nig_update1} reduce to the prior: $\mu'_{i,1} = \mu_{\varnothing,1}$, $\kappa'_{i,1} = \kappa_\varnothing$, $\alpha'_{i,1} = \alpha_\varnothing$, $\beta'_{i,1} = \beta_{\varnothing,1}$.
The IN-class parameters are thus set entirely by the empirical Bayes prior.

The prior estimation procedure is unchanged between online and offline settings.
Even in the offline case, the adversary controls the shadow model training and therefore knows which \emph{reference points} (the shadow models' own training data, disjoint from the target's) were IN or OUT for each shadow model.
The class-specific priors $(\mu_{\varnothing,m}, \beta_{\varnothing,m})$ are estimated from these reference-point observations (Appendix~\ref{app:bavaria}), which include both IN and OUT samples.
The offline limitation is purely per-point: for the specific target point being audited, the IN-class posterior collapses to the (well-estimated) IN-class prior, while the OUT-class posterior is fully updated from $K$ shadow observations.

This parallels what LiRA's hard-switch heuristic achieves (global variance when $K$ is small), but BaVarIA obtains it as a smooth limiting case of the same Bayesian machinery used in the online setting.
As per-point IN-class observations become available (online setting with increasing $K$), the posterior moves from population-level to per-point estimates.
The offline-to-online transition requires no separate implementation or additional hyperparameters.

\paragraph{Experimental results.}
Figures~\ref{fig:offline_auc}--\ref{fig:offline_tpr_rn} compare online and offline
performance for LiRA, RMIA, and the two BaVarIA variants across all testbeds (solid = online, dashed = offline).
Tables~\ref{tab:offline_auc} and~\ref{tab:offline_tpr} report the mean offline
performance and the degradation $\Delta$ = online $-$ offline.
All methods show positive degradation: online consistently outperforms offline, as expected.
BaVarIA-$n$ achieves the best offline AUC at $K \geq 64$ and the best offline TPR at FPR${=}0.01$ across all $K$, while BaVarIA-$t$ leads for AUC at $K{=}8$.
RMIA exhibits the smallest degradation ($\Delta_{\text{AUC}} \approx 0.008$), followed by BaVarIA-$n$ ($\Delta_{\text{AUC}} \approx 0.013$--$0.034$); LiRA shows the largest gap ($\Delta_{\text{AUC}} \approx 0.033$--$0.045$).
Overall, the offline setting remains viable across all approaches, with
absolute performance differences between methods being more practically relevant
than the online-to-offline gap.

No single method dominates: the best offline method varies across metrics and testbeds even at a fixed $K$.

% Offline AUC comparison table (compact 7-column)
\begin{table}
\centering
\caption{Offline AUC (mean over 12 testbeds, 32 replicates).
$\Delta$ = online $-$ offline. Bold: best offline value per $K$.}\label{tab:offline_auc}
\small
\begin{tabular}{l cc cc cc}
\toprule
 & \multicolumn{2}{c}{$K=8$} & \multicolumn{2}{c}{$K=64$} & \multicolumn{2}{c}{$K=254$} \\
\cmidrule(lr){2-3} \cmidrule(lr){4-5} \cmidrule(lr){6-7}
Method & Off & $\Delta$ & Off & $\Delta$ & Off & $\Delta$ \\
\midrule
LiRA & 0.755 & +0.033 & 0.772 & +0.042 & 0.775 & +0.045 \\
RMIA & 0.763 & +0.009 & 0.769 & +0.008 & 0.770 & +0.008 \\
BaVarIA-$n$ & 0.773 & +0.013 & \textbf{0.785} & +0.029 & \textbf{0.786} & +0.034 \\
BaVarIA-$t$ & \textbf{0.778} & +0.015 & 0.784 & +0.031 & 0.784 & +0.036 \\
\bottomrule
\end{tabular}
\end{table}

% Offline TPR comparison table (compact 7-column)
\begin{table}
\centering
\caption{Offline TPR at FPR${=}0.01$ (mean over 12 testbeds, 32 replicates).
$\Delta$ = online $-$ offline. Bold: best offline value.}\label{tab:offline_tpr}
\small
\begin{tabular}{l cc cc cc}
\toprule
 & \multicolumn{2}{c}{$K=8$} & \multicolumn{2}{c}{$K=64$} & \multicolumn{2}{c}{$K=254$} \\
\cmidrule(lr){2-3} \cmidrule(lr){4-5} \cmidrule(lr){6-7}
Method & Off & $\Delta$ & Off & $\Delta$ & Off & $\Delta$ \\
\midrule
LiRA & 0.174 & +0.034 & 0.216 & +0.050 & 0.233 & +0.051 \\
RMIA & 0.144 & +0.013 & 0.180 & +0.013 & 0.185 & +0.013 \\
BaVarIA-$n$ & \textbf{0.200} & +0.014 & \textbf{0.246} & +0.027 & \textbf{0.254} & +0.031 \\
BaVarIA-$t$ & 0.196 & +0.005 & 0.242 & +0.029 & 0.249 & +0.035 \\
\bottomrule
\end{tabular}
\end{table}

% ------------------------------------------------------------------
\subsection{Offline RAPID Baseline}\label{app:rapid}

RAPID \citep{he2024rapid} is an offline difficulty-calibration attack that feeds two features---the raw target loss and a calibrated score (target loss minus a reference-model mean)---to a small learned head. We reproduce it here entirely from our cached shadow outputs, with no shadow-model retraining: the calibrated feature is the offline BASE score (the same OUT-only reference used by our offline BASE, LiRA and BaVarIA-$n$), and an MLP head is trained on shadow-derived member/non-member labels leave-one-model-out---each audited target model is scored by a head trained only on the \emph{other} models, so the target's own membership never enters training---then frozen and applied to the target. This keeps RAPID a principled offline comparison alongside offline BASE, LiRA and BaVarIA-$n$.

\begin{table*}[t]
\centering
\footnotesize
\begin{tabular}{l cccc}
\toprule
Testbed & BASE & LiRA & RAPID & BaVarIA-$n$ \\
\midrule
CIFAR-10 / WRN & 0.082 & 0.087 & \textbf{0.115} & 0.074 \\
CIFAR-100 / WRN & 0.218 & 0.308 & \textbf{0.337} & 0.322 \\
CINIC-10 / WRN & 0.090 & 0.123 & 0.158 & \textbf{0.181} \\
Location / MLP3 & 0.148 & 0.327 & 0.393 & \textbf{0.412} \\
Purchase / MLP3 & 0.034 & 0.023 & \textbf{0.037} & 0.026 \\
Texas / MLP3 & 0.177 & 0.250 & 0.238 & \textbf{0.268} \\
CIFAR-10 / RN18 & 0.046 & 0.038 & \textbf{0.048} & 0.032 \\
CIFAR-100 / RN18 & 0.251 & 0.579 & 0.431 & \textbf{0.663} \\
CINIC-10 / RN18 & 0.123 & 0.223 & 0.191 & \textbf{0.277} \\
Location / MLP4 & 0.132 & 0.299 & 0.212 & \textbf{0.333} \\
Purchase / MLP4 & 0.040 & 0.038 & \textbf{0.050} & 0.042 \\
Texas / MLP4 & 0.192 & 0.294 & 0.272 & \textbf{0.321} \\
\bottomrule
\end{tabular}
\caption{Offline TPR @ FPR$=0.01$ at $K=64$, per testbed (mean over 32 replicates); best per row in bold. RAPID is an offline difficulty-calibration attack: an MLP head on two features---the raw loss and the offline BASE calibrated score (the same OUT-only reference used by BASE/LiRA/BaVarIA-$n$)---trained on shadow-derived member/non-member labels, frozen, and applied to the target.}
\label{tab:rapid_comparison}
\end{table*}

On offline TPR @ FPR $=0.01$ at $K=64$ (Table~\ref{tab:rapid_comparison}), RAPID is a competitive baseline: it improves on LiRA and BASE on most testbeds and wins outright on five of twelve (both CIFAR-10, both Purchase, and CIFAR-100/WRN). BaVarIA-$n$ nonetheless remains best overall ($7/12$ testbeds), with the largest margins on the harder testbeds (CINIC, Location, Texas, CIFAR-100/RN18) where per-class variance modelling matters most. At FPR $=0.001$ the ordering is the same except on Location/MLP3, leaving BaVarIA-$n$ and RAPID ahead on six testbeds each; on AUC (Table~\ref{tab:rapid_comparison_auc}) RAPID is strongest (best on 8 of 12), consistent with its learned head optimizing global ranking rather than the low-FPR tail. Figures~\ref{fig:rapid_scaling_wrn} and~\ref{fig:rapid_scaling_rn} trace the offline TPR@0.01 scaling against the shadow budget $K$ across all testbeds.

\begin{table*}[t]
\centering
\footnotesize
\begin{tabular}{l cccc}
\toprule
Testbed & BASE & LiRA & RAPID & BaVarIA-$n$ \\
\midrule
CIFAR-10 / WRN & 0.659 & 0.677 & \textbf{0.703} & 0.666 \\
CIFAR-100 / WRN & 0.825 & 0.856 & \textbf{0.893} & 0.847 \\
CINIC-10 / WRN & 0.670 & 0.691 & 0.745 & \textbf{0.749} \\
Location / MLP3 & 0.816 & 0.902 & \textbf{0.941} & 0.927 \\
Purchase / MLP3 & 0.614 & 0.607 & \textbf{0.622} & 0.596 \\
Texas / MLP3 & 0.822 & 0.816 & \textbf{0.856} & 0.832 \\
CIFAR-10 / RN18 & 0.628 & 0.616 & \textbf{0.630} & 0.591 \\
CIFAR-100 / RN18 & 0.892 & 0.937 & 0.943 & \textbf{0.973} \\
CINIC-10 / RN18 & 0.771 & 0.807 & 0.831 & \textbf{0.863} \\
Location / MLP4 & 0.809 & 0.887 & 0.872 & \textbf{0.902} \\
Purchase / MLP4 & 0.628 & 0.634 & \textbf{0.649} & 0.629 \\
Texas / MLP4 & 0.820 & 0.832 & \textbf{0.868} & 0.850 \\
\bottomrule
\end{tabular}
\caption{Offline AUC at $K=64$ (RAPID MLP head); best per row in bold.}
\label{tab:rapid_comparison_auc}
\end{table*}

\begin{table}[t]
\centering
\footnotesize
\begin{tabular}{llccc}
\toprule
Calibrated score & Attack & AUC & TPR@.01 & TPR@.001 \\
\midrule
BASE calibration (RMIA/BASE) & score alone & 0.746 & 0.128 & 0.041 \\
 & $+$ raw loss (RAPID) & 0.796 & 0.207 & 0.091 \\
\addlinespace
LiRA (variance-aware) & score alone & 0.772 & 0.216 & 0.093 \\
 & $+$ raw loss (RAPID) & 0.807 & 0.207 & 0.075 \\
\addlinespace
BaVarIA-$n$ (variance-aware) & score alone & 0.785 & 0.246 & 0.117 \\
 & $+$ raw loss (RAPID) & 0.797 & 0.247 & 0.118 \\
\addlinespace
\bottomrule
\end{tabular}
\caption{Mean over the 12 testbeds (offline, $K=64$): a calibrated score alone vs.\ RAPID (that score $+$ raw loss, MLP head). Adding the raw-loss feature lifts the plain BASE calibration substantially but does almost nothing once the calibrated component is already variance-aware (BaVarIA-$n$)---indicating that it gives ``diminishing returns'' once variance is modelled.}
\label{tab:rapid_varianceaware}
\end{table}

Finally, Table~\ref{tab:rapid_varianceaware} isolates RAPID's raw-loss correction. Adding it to a plain mean-only BASE(1) calibration lifts mean TPR@0.01 markedly ($0.128 \to 0.207$); adding it to BaVarIA-$n$ changes almost nothing ($+0.001$ at both tail FPRs), and for LiRA it trades the tail for the bulk (AUC $0.772 \to 0.807$ but TPR@0.001 $0.093 \to 0.075$)---the diminishing returns noted in Section~\ref{sec:related}. This may suggest that RAPID's tail gain is largely a variance correction that the upper BASE hierarchy and BaVarIA already supply by construction.

% ------------------------------------------------------------------
\section{Membership Inference under DP-SGD}\label{app:dpsgd}

This appendix accompanies Section~\ref{sec:dpsgd}. We retrain the entire pipeline---target and shadow models alike---with DP-SGD \citep{abadi2016deep}: per-sample gradient clipping at $C=10$ followed by Gaussian noise of multiplier $\sigma$, using the Opacus implementation \citep{yousefpour2021opacus}. We evaluate four testbeds (CIFAR-10/WRN, CIFAR-100/WRN, Location/MLP3, Purchase/MLP3) at $K=64$, online, over $\sigma \in \{0, 0.01, 0.1, 1.0\}$, mean over 32 replicates.

For per-sample-gradient compatibility the pipeline replaces BatchNorm with GroupNorm and uses larger batches with retuned learning rates; this same calibration is shared across all models in the grid, so within a testbed the only variable across rows is $\sigma$. The image testbeds therefore lose accuracy relative to the main paper ($-12.7$ pp on CIFAR-10, $-27.5$ pp on CIFAR-100, driven by the BatchNorm$\to$GroupNorm switch), while the plain-MLP tabular testbeds are barely affected. Absolute risk numbers here are consequently not comparable to the main-paper testbeds---we read them \emph{relative} across noise levels.

Table~\ref{tab:privacy} reports the privacy budget and utility per noise level; Tables~\ref{tab:auc} and~\ref{tab:tpr01} give AUC and TPR@0.01 across testbeds and noise levels; Figures~\ref{fig:def_baseline}--\ref{fig:atk_bavaria_t} show the corresponding ROC curves. We set $\delta \approx 1/n_{\mathrm{train}}$, and report $\varepsilon$ from the Opacus accountant (rounded down to the nearest order of magnitude when ${>}100$): at $\sigma=0.01$ and $\sigma=0.1$ the budget is vacuous ($\varepsilon > 100\mathrm{K}$ and $\varepsilon$ in the thousands, respectively), while at $\sigma=1.0$ it becomes meaningful ($\varepsilon \approx 6$--$9$) at a substantial utility cost.
In quantitative terms (Section~\ref{sec:dpsgd}), BASE/RMIA progressively overtakes the three Gaussian methods: in AUC it beats them on one testbed at $\sigma=0$, two at $\sigma=0.01$, three at $\sigma=0.1$ and all four at $\sigma=1.0$ (Table~\ref{tab:auc}); on TPR@0.01 the same trend is present but not monotone (Table~\ref{tab:tpr01}).

The intermediate regime $\sigma=0.01$ is notable: it costs little accuracy (CIFAR-10 76\%$\to$72\%, Purchase 90\%$\to$87\%; Table~\ref{tab:privacy}) yet substantially reduces attack TPR (best-attack TPR@0.01 on CIFAR-10 drops $0.156 \to 0.049$; Table~\ref{tab:tpr01})---empirical leakage falls well before the formal guarantee becomes meaningful.

\begin{table}
\centering
\caption{Privacy budget and model utility per testbed and noise level (DP-SGD $C=10$ and $\delta \approx 1/n_{\mathrm{train}}$); $\varepsilon$ from the Opacus accountant, rounded down to the nearest order of magnitude when ${>}100$.}\label{tab:privacy}
\small
\begin{tabular}{ll rrr}
\toprule
Testbed & Defence & Test Acc (\%) & $\varepsilon$ & $\delta$ \\
\midrule
\multirow{4}{*}{CIFAR-10 / WRN}
  & Baseline      & 76.1 & ---       & ---  \\
  & $\sigma=0.01$ & 71.6 & ${>}1$M   & $3.3\!\times\!10^{-5}$ \\
  & $\sigma=0.1$  & 71.2 & ${>}10$K  & $3.3\!\times\!10^{-5}$ \\
  & $\sigma=1.0$  & 54.1 & 6.7       & $3.3\!\times\!10^{-5}$ \\
\midrule
\multirow{4}{*}{CIFAR-100 / WRN}
  & Baseline      & 40.0 & ---       & ---  \\
  & $\sigma=0.01$ & 27.7 & ${>}1$M   & $3.3\!\times\!10^{-5}$ \\
  & $\sigma=0.1$  & 26.4 & ${>}10$K  & $3.3\!\times\!10^{-5}$ \\
  & $\sigma=1.0$  &  9.0 & 6.1       & $3.3\!\times\!10^{-5}$ \\
\midrule
\multirow{4}{*}{Location / MLP3}
  & Baseline      & 66.4 & ---       & ---  \\
  & $\sigma=0.01$ & 64.6 & ${>}100$K & $4.0\!\times\!10^{-4}$ \\
  & $\sigma=0.1$  & 55.8 & ${>}1$K   & $4.0\!\times\!10^{-4}$ \\
  & $\sigma=1.0$  &  7.1 & 8.9       & $4.0\!\times\!10^{-4}$ \\
\midrule
\multirow{4}{*}{Purchase / MLP3}
  & Baseline      & 89.6 & ---       & ---  \\
  & $\sigma=0.01$ & 87.3 & ${>}1$M   & $1.0\!\times\!10^{-5}$ \\
  & $\sigma=0.1$  & 86.4 & ${>}1$K   & $1.0\!\times\!10^{-5}$ \\
  & $\sigma=1.0$  & 77.3 & 5.7       & $1.0\!\times\!10^{-5}$ \\
\bottomrule
\end{tabular}
\end{table}

\begin{table}
\centering
\caption{AUC across testbeds and defense levels. Mean over 32 replicates, $K=64$, online. Best method per row in bold. BaVarIA-$n$ / BaVarIA-$t$ track or beat LiRA at every noise level (16/16 cells in AUC); BASE progressively overtakes the Gaussian methods as $\sigma$ grows, winning 4/4 testbeds at $\sigma=1.0$. The crossover mirrors the small-$K$ regime in the absence of defense---simpler distributional families dominate when per-sample signal is scarce, whether scarcity comes from few shadow models or from added DP-SGD noise.}\label{tab:auc}
\small
\begin{tabular}{ll r cccc}
\toprule
Testbed & Defense & Acc\,(\%) & LiRA & BaVarIA-$n$ & BaVarIA-$t$ & BASE \\
\midrule
\multirow{4}{*}{\shortstack[l]{CIFAR-10\\WRN}}
  & Baseline      & 76.1 & .773 & .775 & \textbf{.776} & .762 \\
  & $\sigma=0.01$ & 71.6 & .674 & \textbf{.678} & .678 & .653 \\
  & $\sigma=0.1$  & 71.2 & .641 & \textbf{.645} & .645 & .630 \\
  & $\sigma=1.0$  & 54.1 & .509 & .510 & .510 & \textbf{.526} \\
\midrule
\multirow{4}{*}{\shortstack[l]{CIFAR-100\\WRN}}
  & Baseline      & 40.0 & .741 & .743 & \textbf{.744} & .742 \\
  & $\sigma=0.01$ & 27.7 & .600 & .603 & .604 & \textbf{.620} \\
  & $\sigma=0.1$  & 26.4 & .584 & .588 & .589 & \textbf{.607} \\
  & $\sigma=1.0$  &  9.0 & .505 & .505 & .505 & \textbf{.522} \\
\midrule
\multirow{4}{*}{\shortstack[l]{Location\\MLP3}}
  & Baseline      & 66.4 & .823 & .824 & \textbf{.825} & .819 \\
  & $\sigma=0.01$ & 64.6 & .741 & .743 & \textbf{.743} & .726 \\
  & $\sigma=0.1$  & 55.8 & .622 & .626 & .627 & \textbf{.642} \\
  & $\sigma=1.0$  &  7.1 & .511 & .511 & .512 & \textbf{.535} \\
\midrule
\multirow{4}{*}{\shortstack[l]{Purchase\\MLP3}}
  & Baseline      & 89.6 & .532 & .533 & .534 & \textbf{.553} \\
  & $\sigma=0.01$ & 87.3 & .530 & .532 & .533 & \textbf{.550} \\
  & $\sigma=0.1$  & 86.4 & .522 & .523 & .524 & \textbf{.541} \\
  & $\sigma=1.0$  & 77.3 & .506 & .507 & .507 & \textbf{.522} \\
\bottomrule
\end{tabular}
\end{table}

\begin{table}
\centering
\caption{TPR at FPR$=0.01$ across testbeds and defense levels. Mean over 32 replicates, $K=64$, online. Best per row in bold. The Gaussian-family BaVarIA-$n$ matches or beats LiRA at every noise level; BASE leads at heavy noise. At $\sigma=1.0$ all attacks approach random guessing (TPR$\le 0.015$ vs.\ random at $0.01$).}\label{tab:tpr01}
\small
\begin{tabular}{ll r cccc}
\toprule
Testbed & Defense & Acc\,(\%) & LiRA & BaVarIA-$n$ & BaVarIA-$t$ & BASE \\
\midrule
\multirow{4}{*}{\shortstack[l]{CIFAR-10\\WRN}}
  & Baseline      & 76.1 & .150 & \textbf{.156} & .156 & .148 \\
  & $\sigma=0.01$ & 71.6 & .045 & \textbf{.049} & .049 & .048 \\
  & $\sigma=0.1$  & 71.2 & .032 & .037 & .036 & \textbf{.038} \\
  & $\sigma=1.0$  & 54.1 & .010 & .011 & .011 & \textbf{.013} \\
\midrule
\multirow{4}{*}{\shortstack[l]{CIFAR-100\\WRN}}
  & Baseline      & 40.0 & .082 & .087 & .086 & \textbf{.093} \\
  & $\sigma=0.01$ & 27.7 & .022 & .023 & .023 & \textbf{.028} \\
  & $\sigma=0.1$  & 26.4 & .019 & .021 & .021 & \textbf{.024} \\
  & $\sigma=1.0$  &  9.0 & .009 & .009 & .010 & \textbf{.012} \\
\midrule
\multirow{4}{*}{\shortstack[l]{Location\\MLP3}}
  & Baseline      & 66.4 & .139 & .146 & .144 & \textbf{.153} \\
  & $\sigma=0.01$ & 64.6 & .081 & \textbf{.084} & .081 & .075 \\
  & $\sigma=0.1$  & 55.8 & .031 & .033 & .032 & \textbf{.036} \\
  & $\sigma=1.0$  &  7.1 & .012 & .012 & .012 & \textbf{.015} \\
\midrule
\multirow{4}{*}{\shortstack[l]{Purchase\\MLP3}}
  & Baseline      & 89.6 & .013 & .015 & .015 & \textbf{.021} \\
  & $\sigma=0.01$ & 87.3 & .013 & .015 & .014 & \textbf{.017} \\
  & $\sigma=0.1$  & 86.4 & .012 & .013 & .013 & \textbf{.016} \\
  & $\sigma=1.0$  & 77.3 & .010 & .011 & .011 & \textbf{.012} \\
\bottomrule
\end{tabular}
\end{table}

% ------------------------------------------------------------------
\section{Scaling and ROC Curves (All Testbeds)}\label{app:curves}

This appendix collects the per-testbed scaling and ROC curves referenced throughout: BaVarIA scaling over all 12 testbeds (Figures~\ref{fig:scaling_bv_auc_wrn}--\ref{fig:scaling_rn_tpr}), the BASE-hierarchy scaling that validates the concordance scores of Table~\ref{tab:summary32_concordance} (Figures~\ref{fig:online_base_auc}--\ref{fig:online_base_tpr_rn}), the online-vs-offline scaling (Figures~\ref{fig:offline_auc}--\ref{fig:offline_tpr_rn}), and representative log--log ROC curves (Figures~\ref{fig:roc_base_k8}--\ref{fig:roc_bav_offline_k64}).
\subsection{BaVarIA scaling (all 12 testbeds)}
Online AUC and TPR @ FPR$=0.01$ vs.\ $K$ for LiRA, BaVarIA-$n$, and BaVarIA-$t$ (mean $\pm 1$ SE over 32 replicates), across all 12 testbeds. BaVarIA tracks LiRA closely, with the largest gains around the $K=64$ variance-rule cutover; BaVarIA-$t$ leads on AUC and BaVarIA-$n$ at low FPR. At the smallest budgets BaVarIA-$t$ can trail LiRA at low FPR, and BaVarIA-$n$ trails slightly on AUC.
\subsection{BASE hierarchy scaling (all 12 testbeds)}
Online AUC and TPR @ FPR$=0.01$ vs.\ $K$ for the BASE hierarchy (BASE1--BASE4). The simpler families lead at small $K$ and the richer families at large $K$; these curves validate the ordering-reversal concordance scores of Table~\ref{tab:summary32_concordance}.

\subsection{Online vs.\ offline scaling}
Online (solid) vs.\ offline (dashed) AUC and TPR @ FPR$=0.01$ as a function of $K$. Offline consistently trails online, but BaVarIA retains its advantage over LiRA in the offline setting (full analysis in Appendix~\ref{app:offline}).

\subsection{ROC curves (WRN/MLP3 testbeds)}
Representative log--log ROC curves. The BASE-hierarchy panels (Figures~\ref{fig:roc_base_k8},~\ref{fig:roc_base_k254}) show the small-$K$ to large-$K$ ordering reversal quantified in Table~\ref{tab:summary32_concordance}; the BaVarIA panels (Figures~\ref{fig:roc_bav_online_k64},~\ref{fig:roc_bav_offline_k64}) compare LiRA, BASE, BaVarIA-$n$, and BaVarIA-$t$ at $K=64$, online and offline. BaVarIA matches or exceeds LiRA on most testbeds, with the widest margin offline, where the NIG prior supplies the IN-class likelihood that LiRA's offline score lacks; LiRA retains an edge on the CIFAR-10 testbeds.

\clearpage

\begin{figure}
\centering
\includegraphics[width=\linewidth]{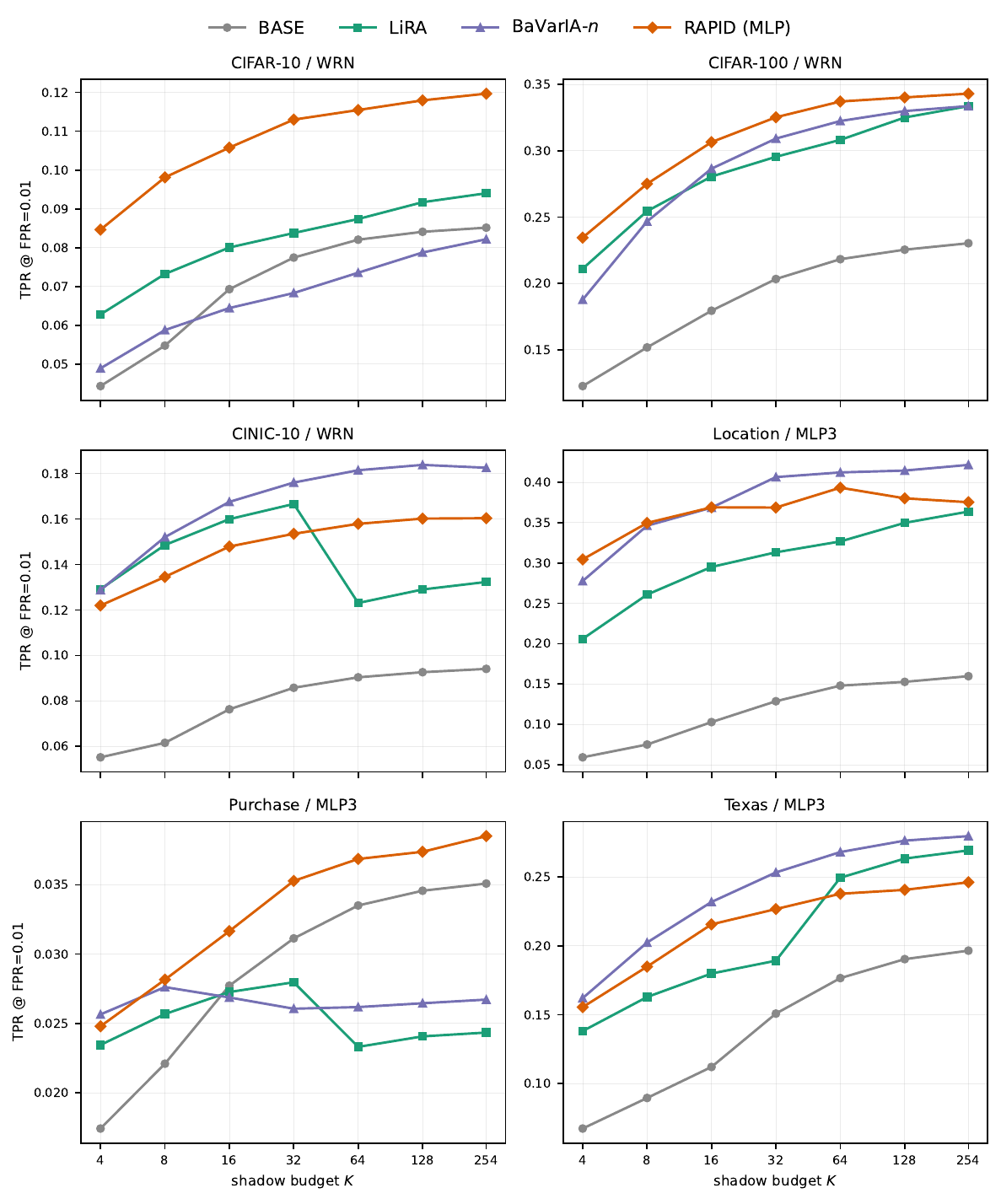}
\caption{Offline TPR @ FPR$=0.01$ vs.\ shadow budget $K$ (WRN and MLP3 testbeds), mean over 32 replicates: BASE, LiRA, BaVarIA-$n$, and RAPID (MLP head). RAPID tracks the calibrated attacks and is competitive at small $K$, while BaVarIA-$n$ pulls ahead as the budget grows.}
\label{fig:rapid_scaling_wrn}
\end{figure}

\begin{figure}
\centering
\includegraphics[width=\linewidth]{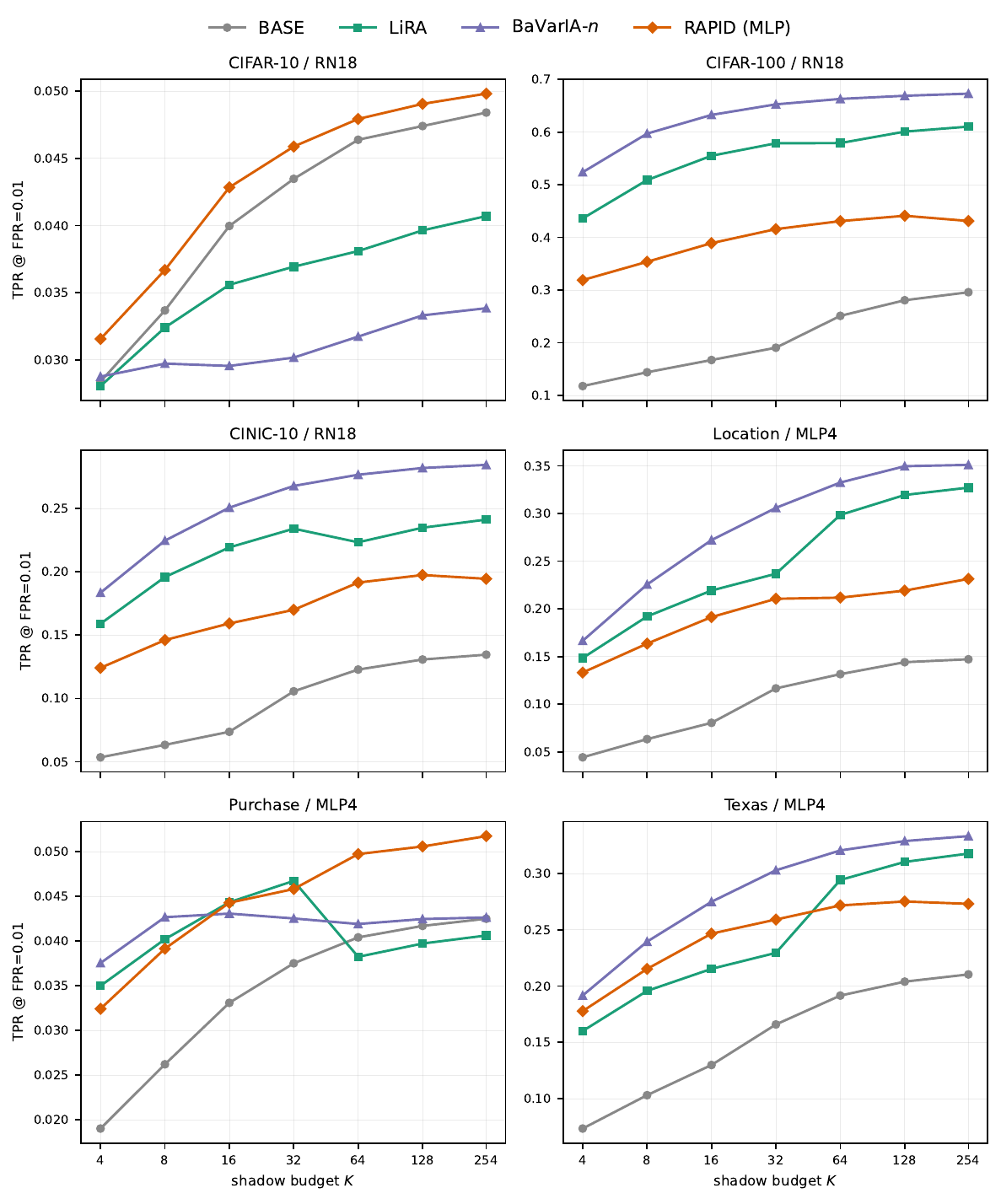}
\caption{Offline TPR @ FPR$=0.01$ vs.\ shadow budget $K$ (RN18 and MLP4 testbeds); same methods and protocol as Figure~\ref{fig:rapid_scaling_wrn}.}
\label{fig:rapid_scaling_rn}
\end{figure}

\begin{figure}
\centering
\includegraphics[width=\linewidth]{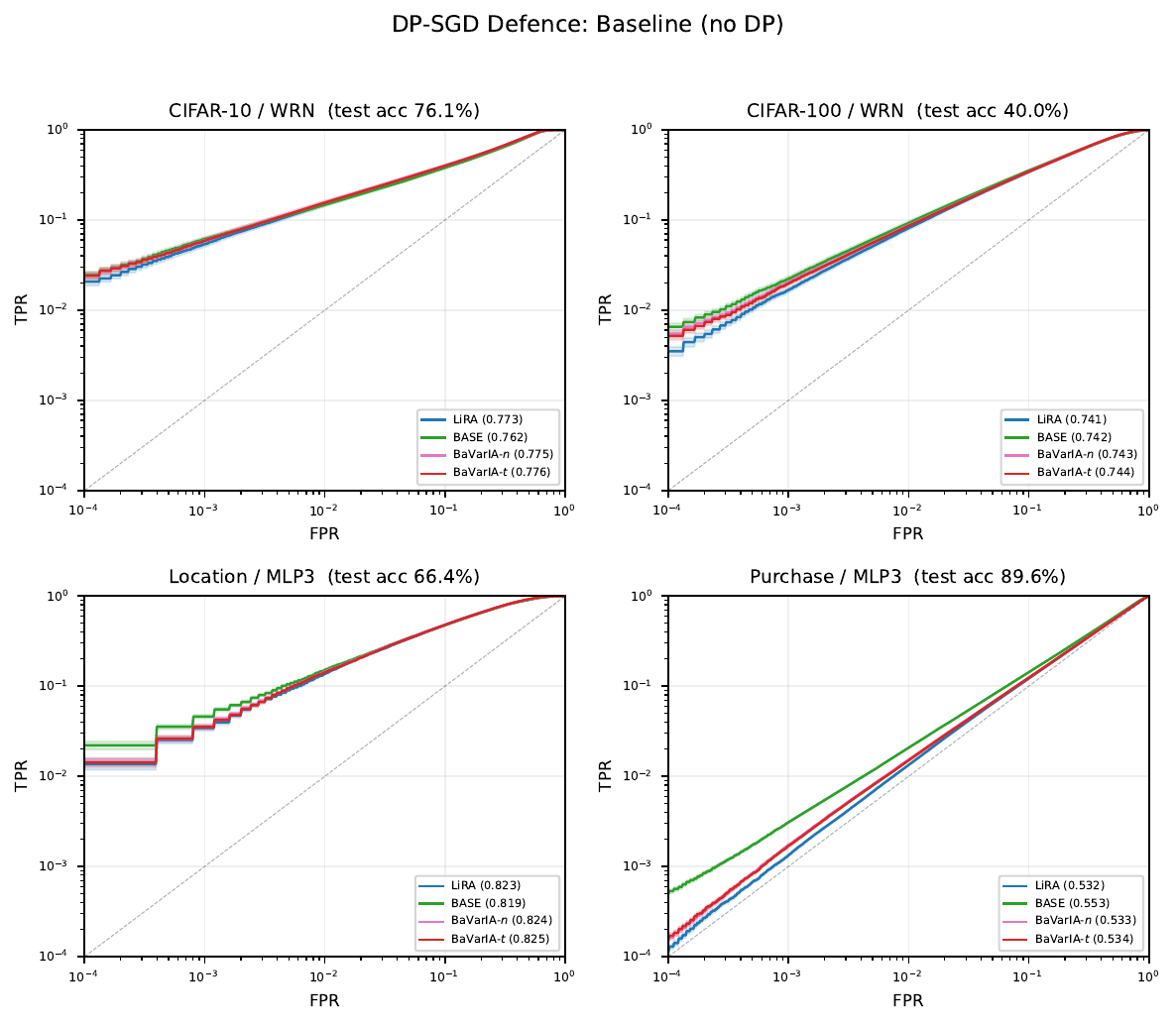}
\caption{Defense-level view, baseline (clipping;  $\sigma=0$): all four attacks (LiRA, BASE, BaVarIA-$n$, BaVarIA-$t$) overlaid on each of four testbeds. $K=64$, online, mean ROC over 32 replicates, log--log axes; legend AUC included. Panel titles include mean test accuracy across shadow models.}\label{fig:def_baseline}
\end{figure}

\begin{figure}
\centering
\includegraphics[width=\linewidth]{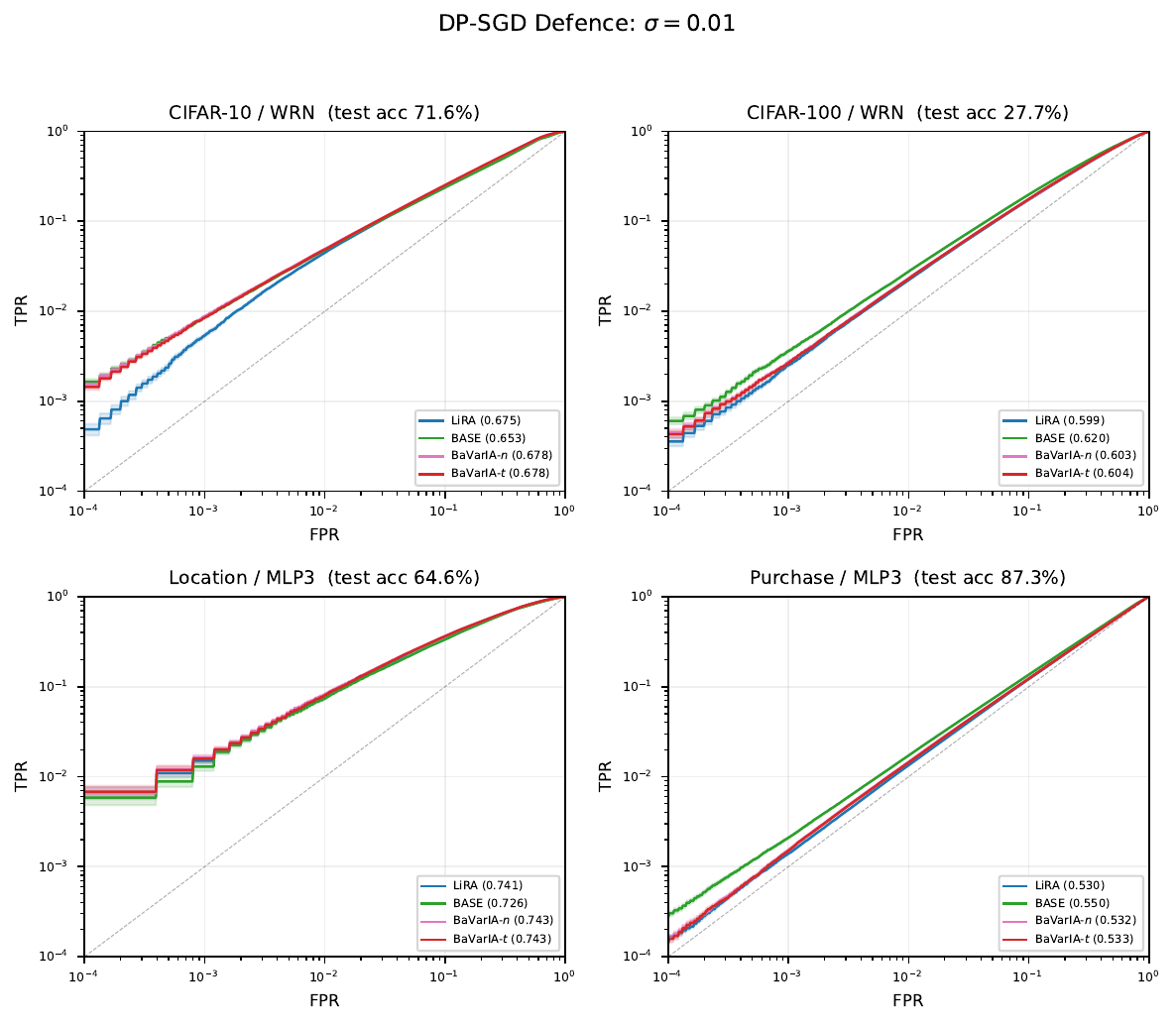}
\caption{Defense-level view, DP-SGD with $\sigma = 0.01$, $C = 10$. LiRA, BASE, BaVarIA-$n$, BaVarIA-$t$ overlaid; $K=64$, mean over 32 replicates. Privacy budget at this noise level is vacuous ($\varepsilon > 100\mathrm{K}$).}\label{fig:def_sgm001}
\end{figure}

\begin{figure}
\centering
\includegraphics[width=\linewidth]{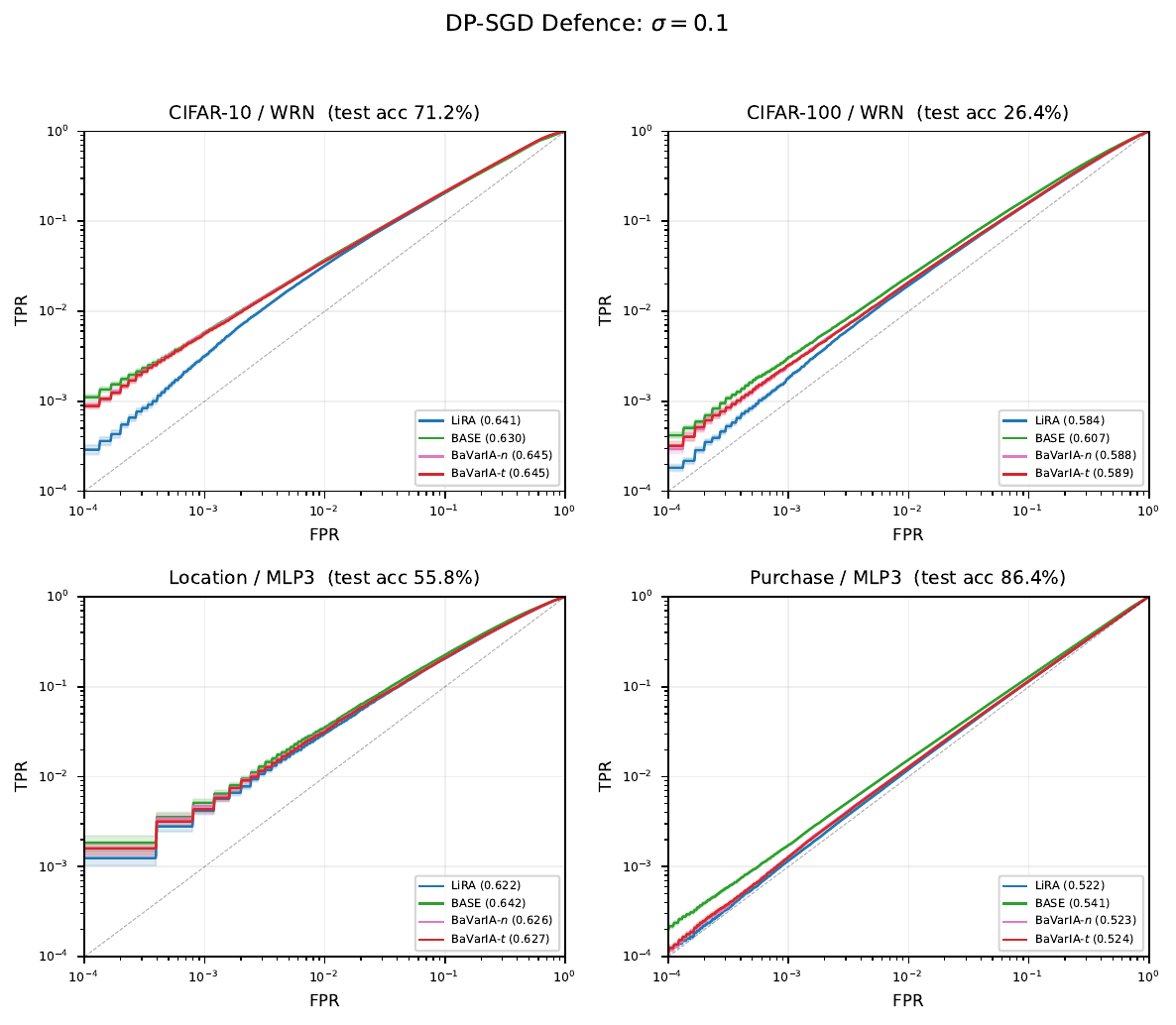}
\caption{Defense-level view, DP-SGD with $\sigma = 0.1$, $C = 10$. LiRA, BASE, BaVarIA-$n$, BaVarIA-$t$ overlaid; $K=64$, mean over 32 replicates.}\label{fig:def_sgm01}
\end{figure}

\begin{figure}
\centering
\includegraphics[width=\linewidth]{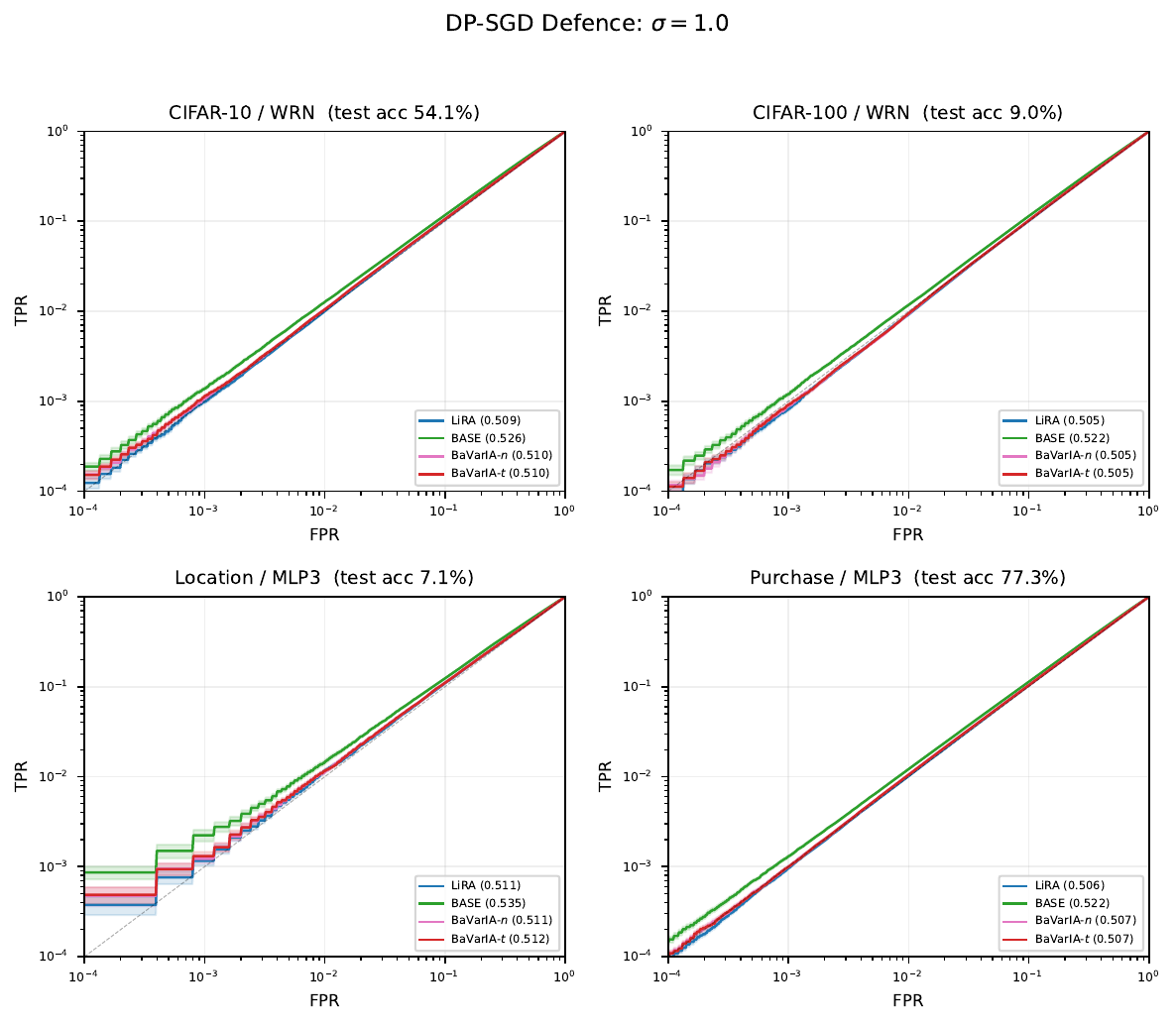}
\caption{Defense-level view, DP-SGD with $\sigma = 1.0$, $C = 10$ (meaningful privacy, $\varepsilon \approx 6$--$9$). LiRA, BASE, BaVarIA-$n$, BaVarIA-$t$ overlaid; $K=64$, mean over 32 replicates. All attacks approach random guessing at this noise level.}\label{fig:def_sgm1}
\end{figure}

\begin{figure}
\centering
\includegraphics[width=\linewidth]{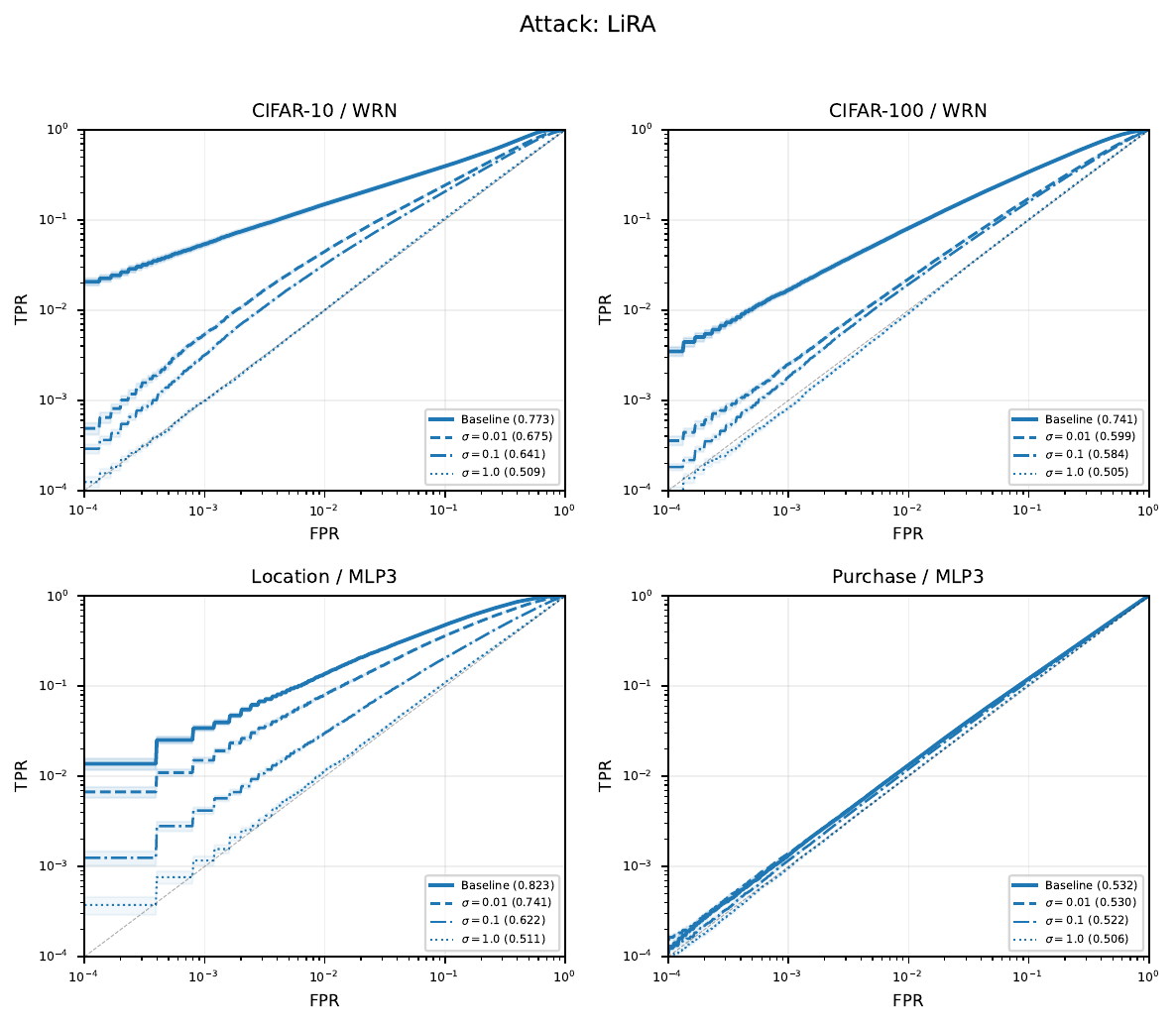}
\caption{Attack-method view, LiRA across defense levels $\sigma \in \{0,\,0.01,\,0.1,\,1.0\}$ on each testbed. Linestyle distinguishes defense level; $K=64$, online, mean over 32 replicates, log--log axes.}\label{fig:atk_lira}
\end{figure}

\begin{figure}
\centering
\includegraphics[width=\linewidth]{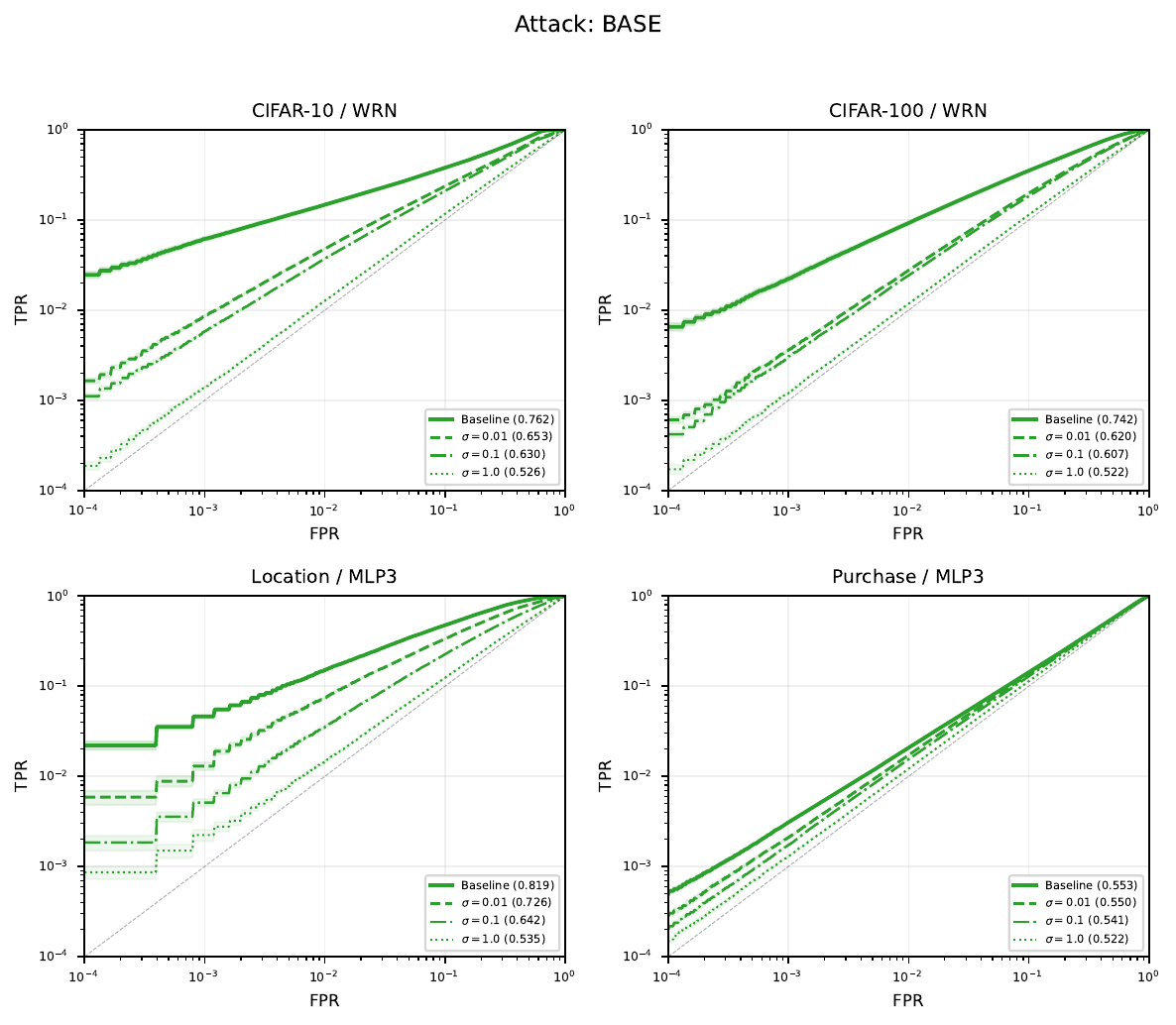}
\caption{Attack-method view, BASE across defense levels $\sigma \in \{0,\,0.01,\,0.1,\,1.0\}$. Linestyle distinguishes defense level; $K=64$, online, mean over 32 replicates.}\label{fig:atk_base}
\end{figure}

\begin{figure}
\centering
\includegraphics[width=\linewidth]{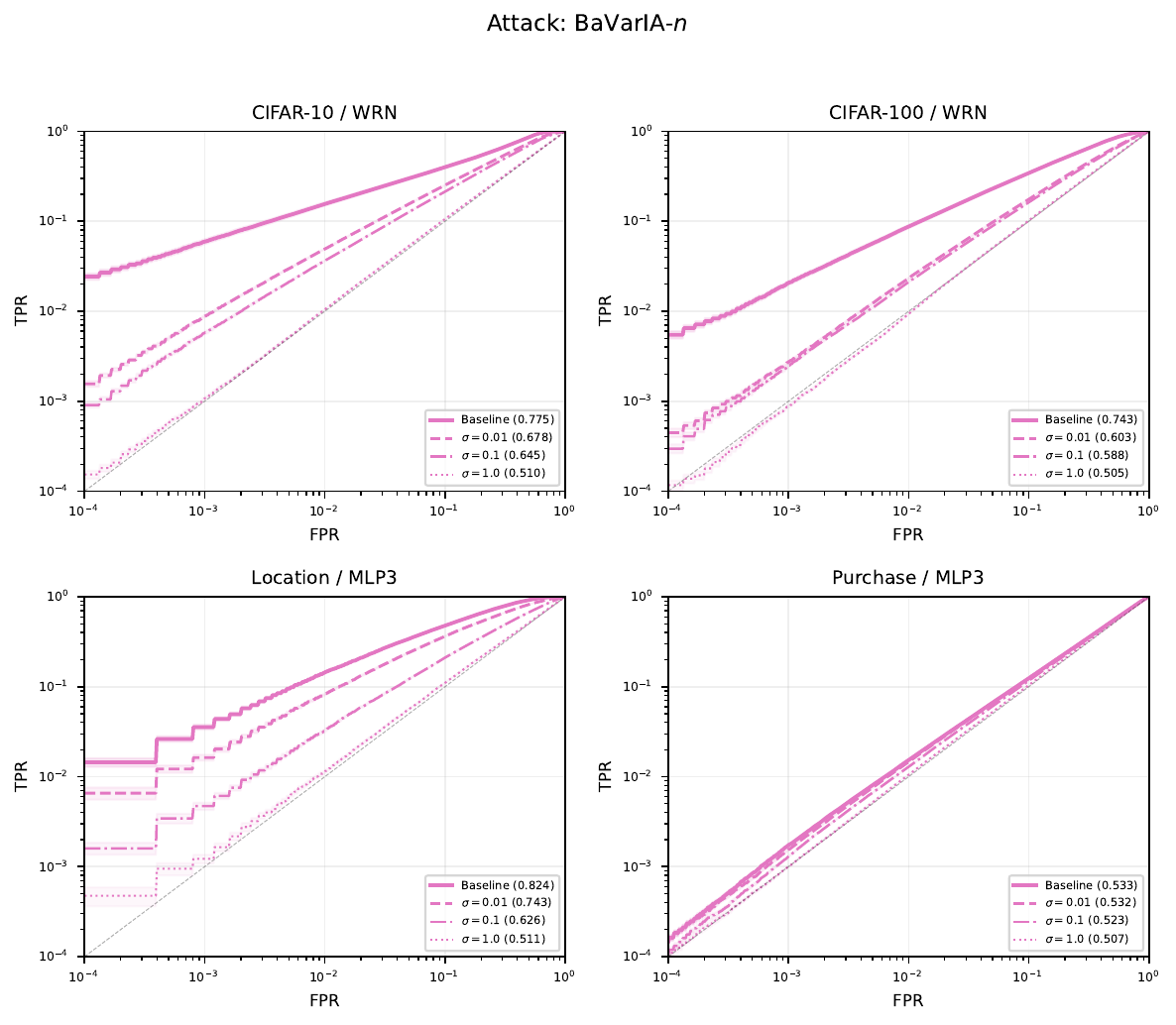}
\caption{Attack-method view, BaVarIA-$n$ across defense levels $\sigma \in \{0,\,0.01,\,0.1,\,1.0\}$. Linestyle distinguishes defense level; $K=64$, online, mean over 32 replicates.}\label{fig:atk_bavarian}
\end{figure}

\begin{figure}
\centering
\includegraphics[width=\linewidth]{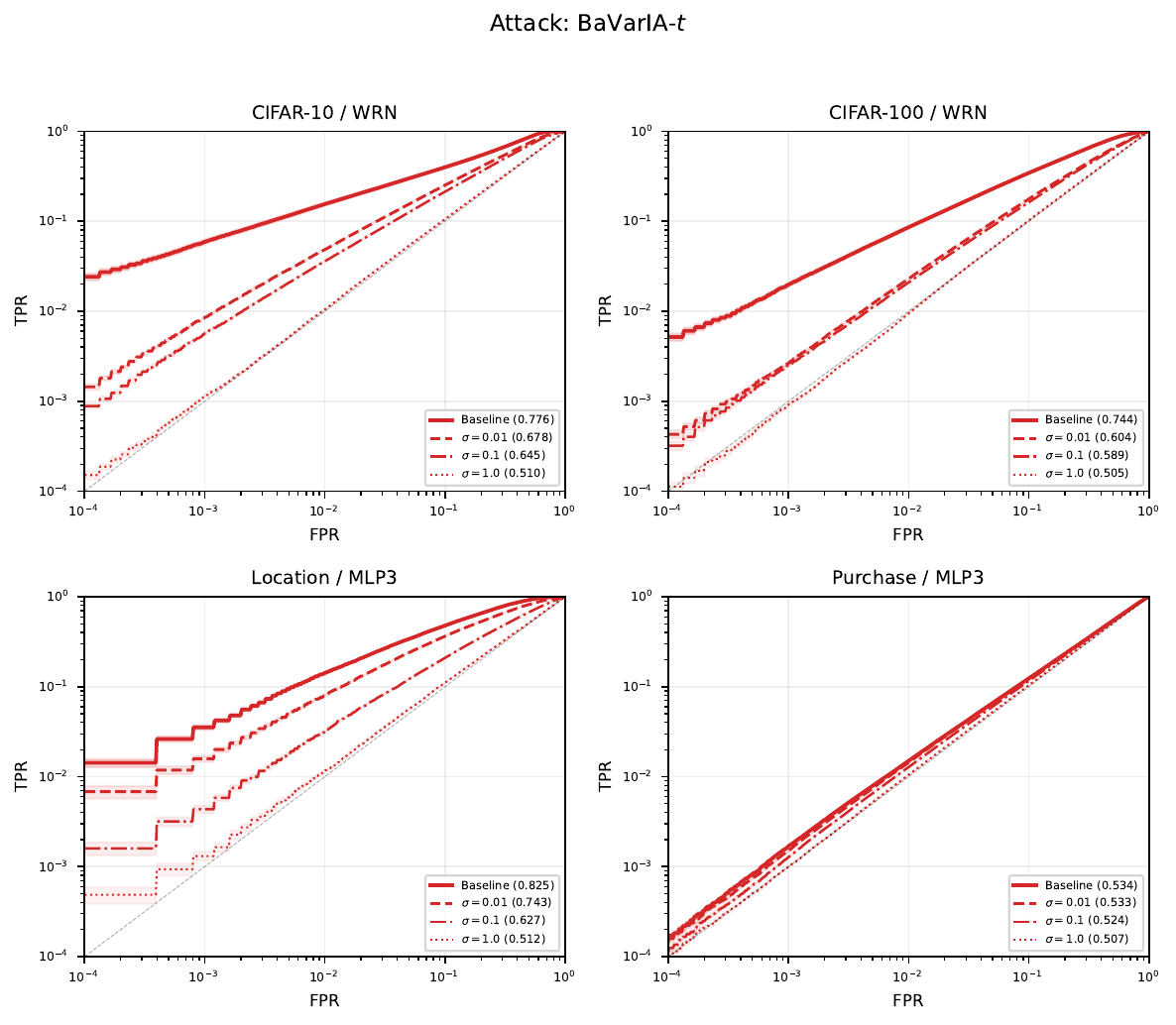}
\caption{Attack-method view, BaVarIA-$t$ across defense levels $\sigma \in \{0,\,0.01,\,0.1,\,1.0\}$. Linestyle distinguishes defense level; $K=64$, online, mean over 32 replicates.}\label{fig:atk_bavaria_t}
\end{figure}

\begin{figure}
\centering
\includegraphics[width=0.95\linewidth]{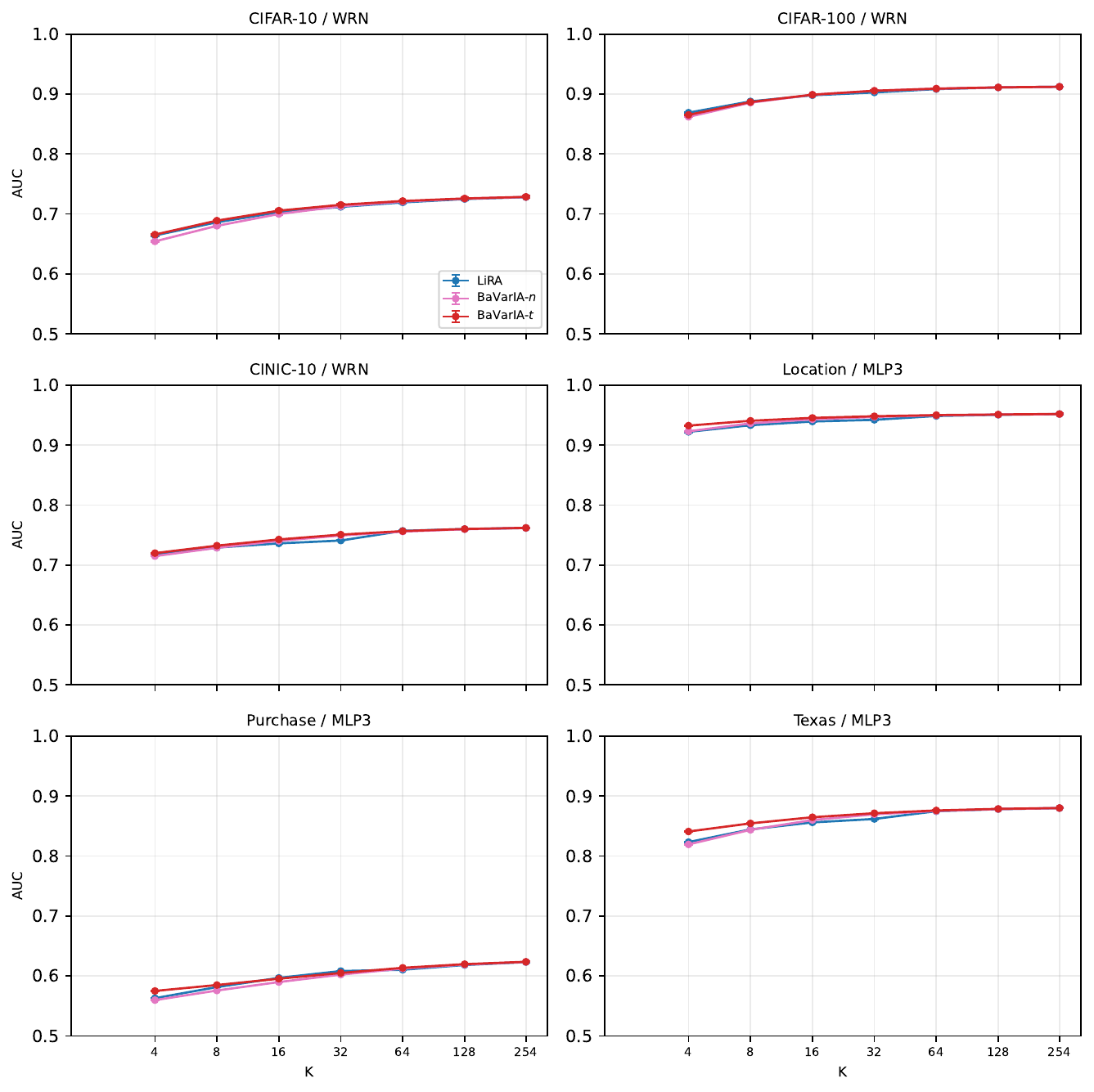}
\caption{Online AUC vs.\ $K$ (WRN/MLP3 testbeds).}\label{fig:scaling_bv_auc_wrn}
\end{figure}
\begin{figure}
\centering
\includegraphics[width=0.95\linewidth]{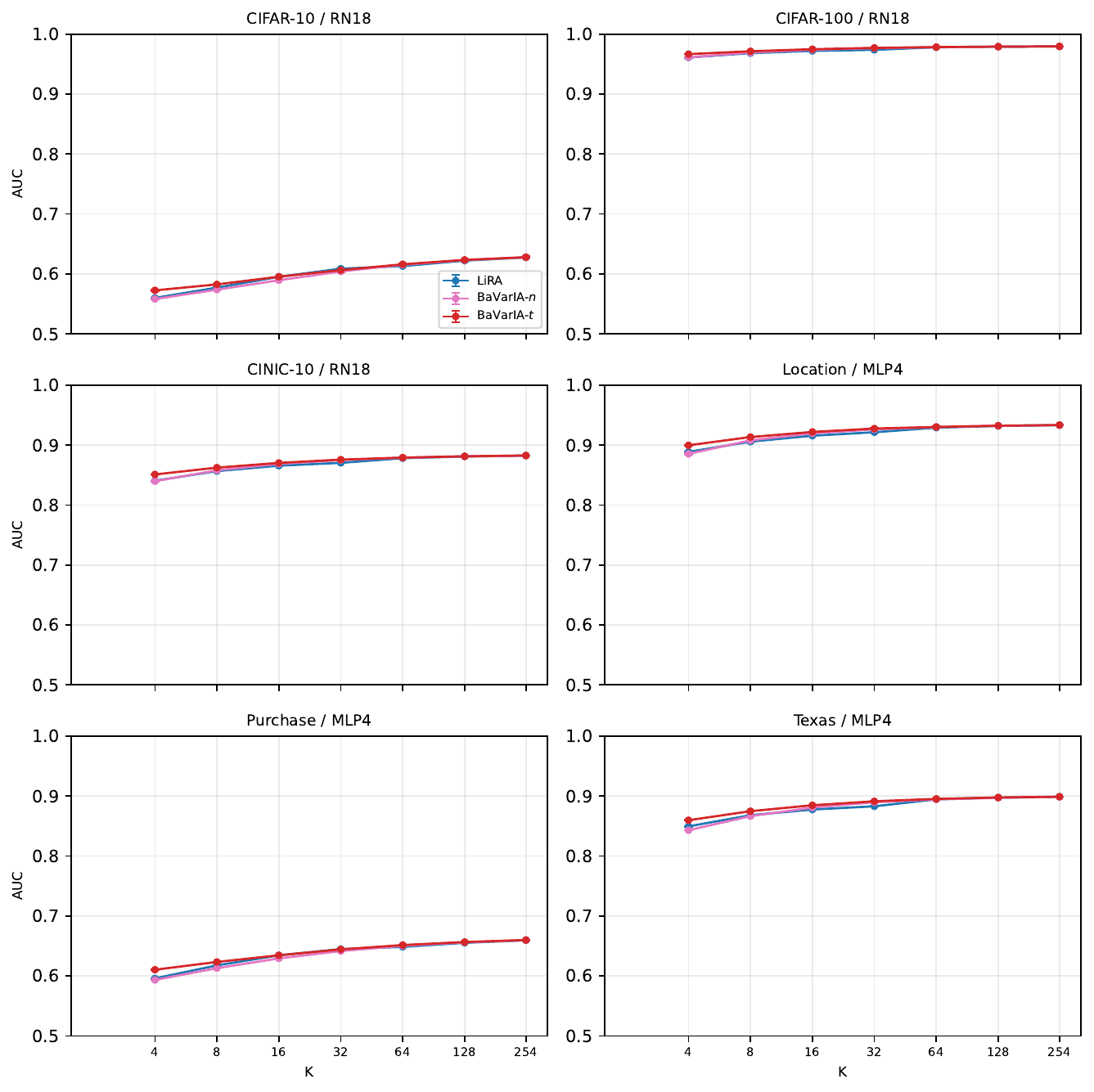}
\caption{Online AUC vs.\ $K$ (RN18/MLP4 testbeds).}\label{fig:scaling_rn_auc}
\end{figure}
\begin{figure}
\centering
\includegraphics[width=0.95\linewidth]{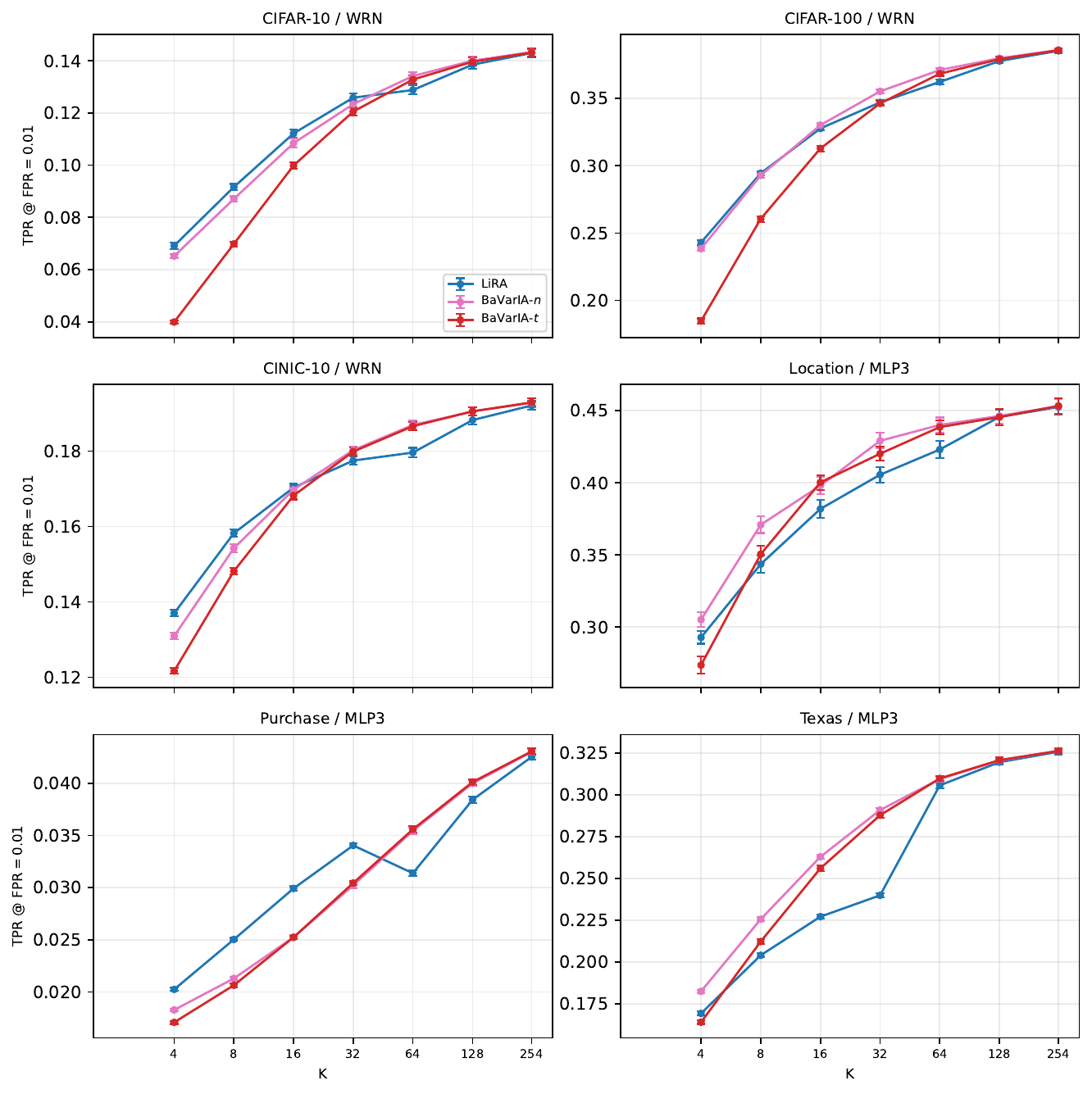}
\caption{Online TPR @ FPR$=0.01$ vs.\ $K$ (WRN/MLP3 testbeds).}\label{fig:scaling_bv_tpr_wrn}
\end{figure}
\begin{figure}
\centering
\includegraphics[width=0.95\linewidth]{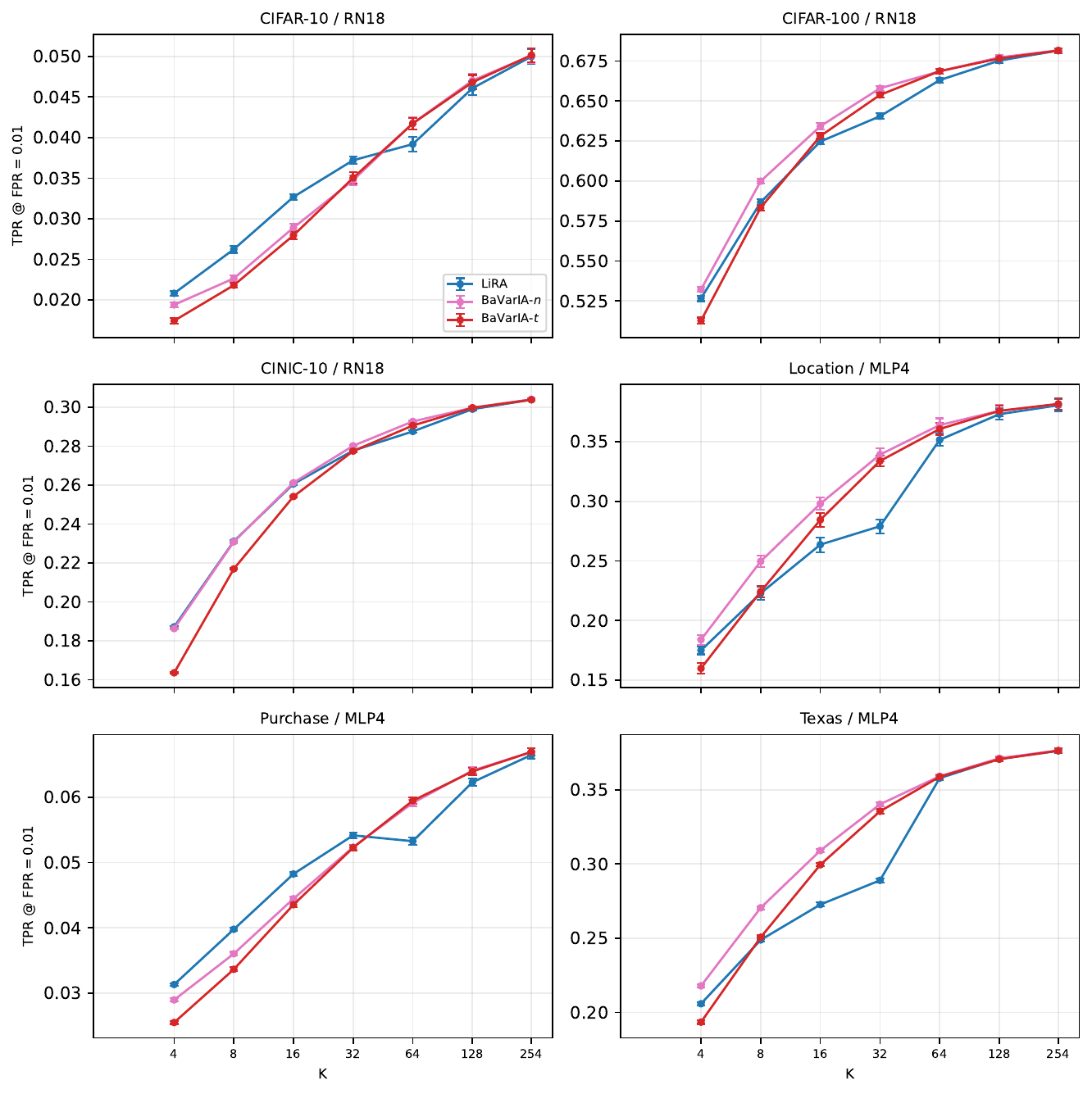}
\caption{Online TPR @ FPR$=0.01$ vs.\ $K$ (RN18/MLP4 testbeds).}\label{fig:scaling_rn_tpr}
\end{figure}

\begin{figure}
\centering
\includegraphics[width=0.95\linewidth]{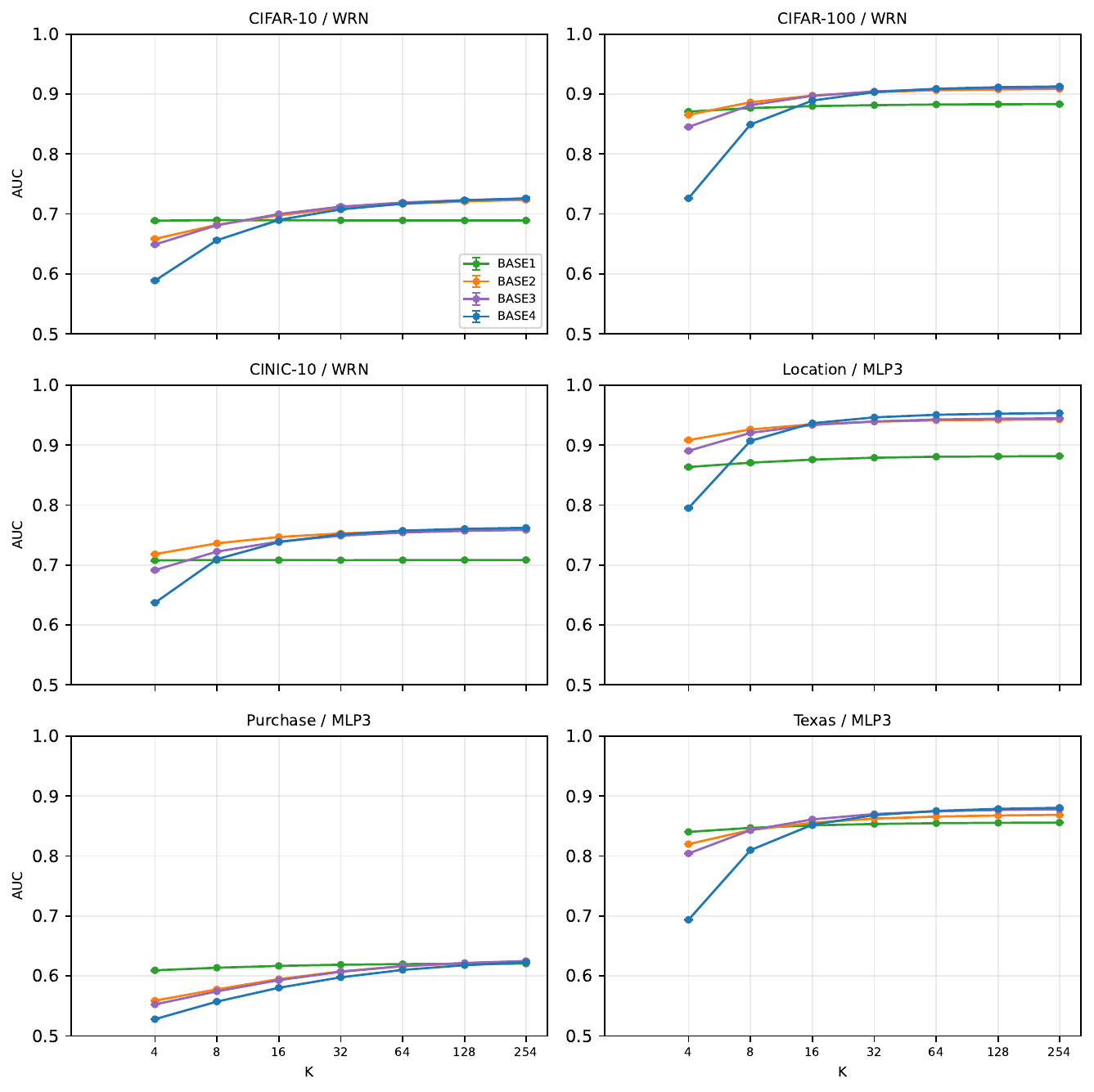}
\caption{Online AUC vs.\ $K$ for the BASE hierarchy (WRN/MLP3 testbeds). BASE3/BASE4 overtake the simpler families at moderate $K$; BASE2 tracks BASE3.}\label{fig:online_base_auc}
\end{figure}
\begin{figure}
\centering
\includegraphics[width=0.95\linewidth]{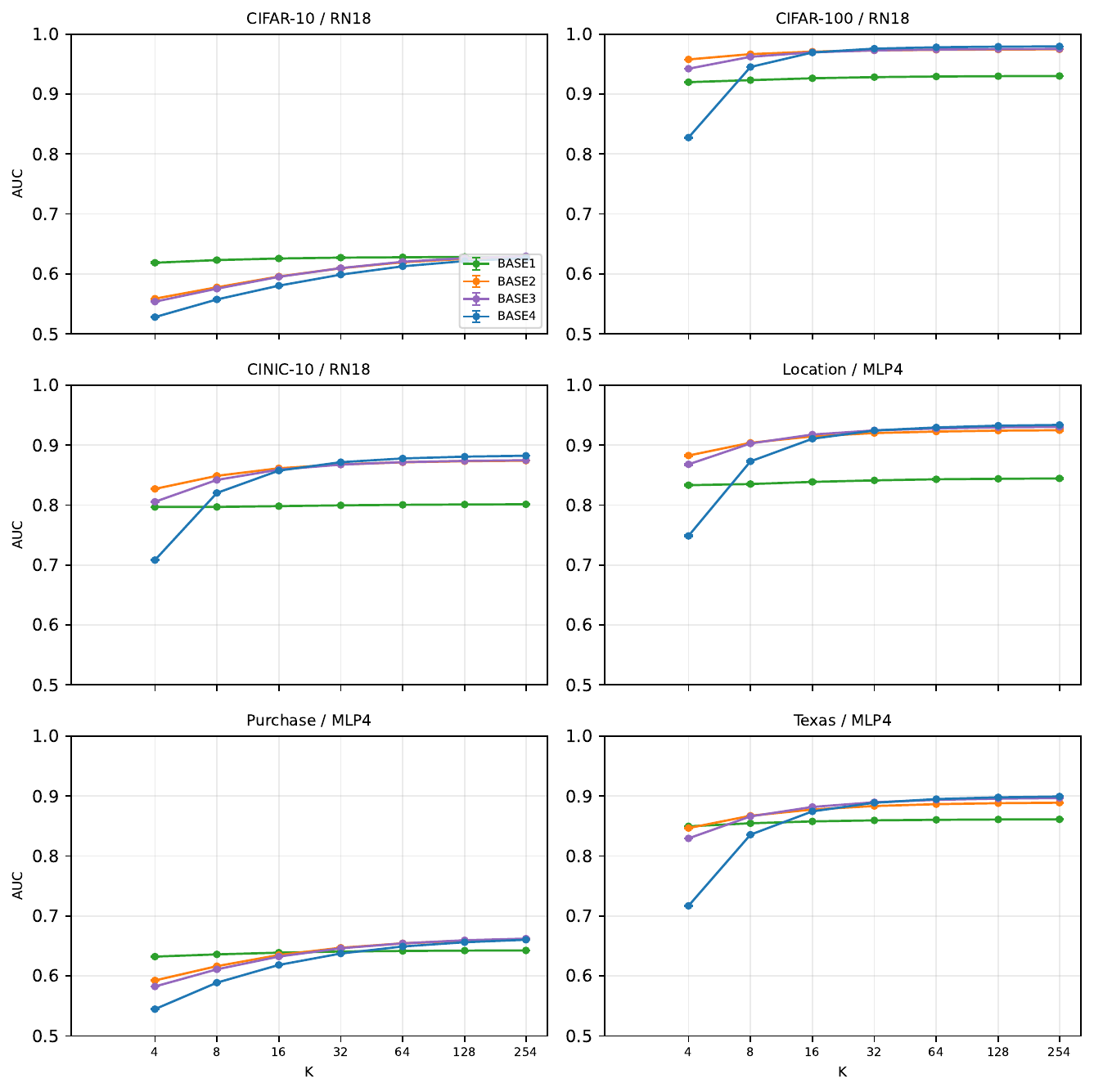}
\caption{Online AUC vs.\ $K$ for the BASE hierarchy (RN18/MLP4 testbeds).}\label{fig:online_base_auc_rn}
\end{figure}
\begin{figure}
\centering
\includegraphics[width=0.95\linewidth]{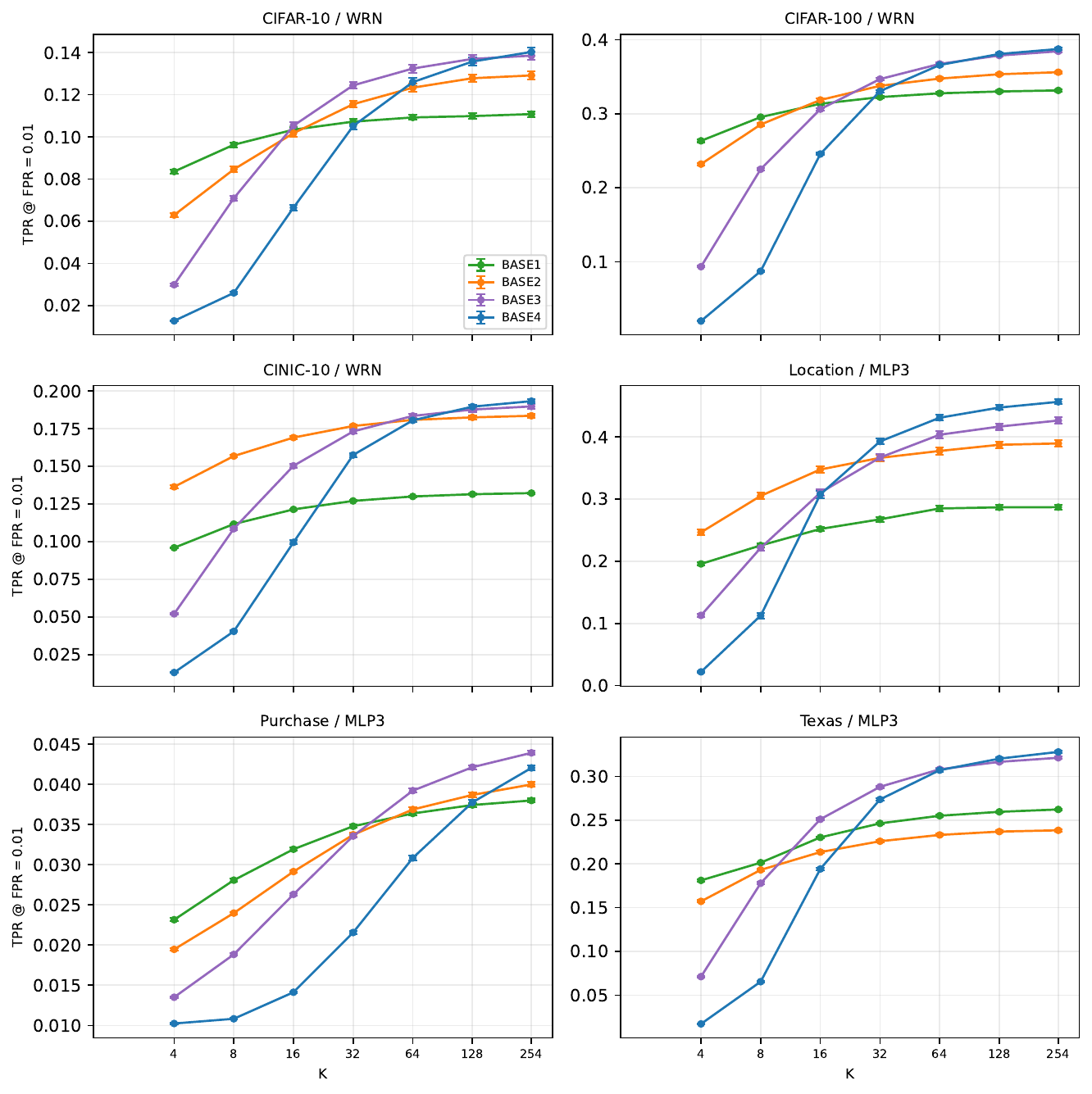}
\caption{Online TPR at FPR${=}0.01$ vs.\ $K$ for the BASE hierarchy (WRN/MLP3 testbeds).}\label{fig:online_base_tpr}
\end{figure}
\begin{figure}
\centering
\includegraphics[width=0.95\linewidth]{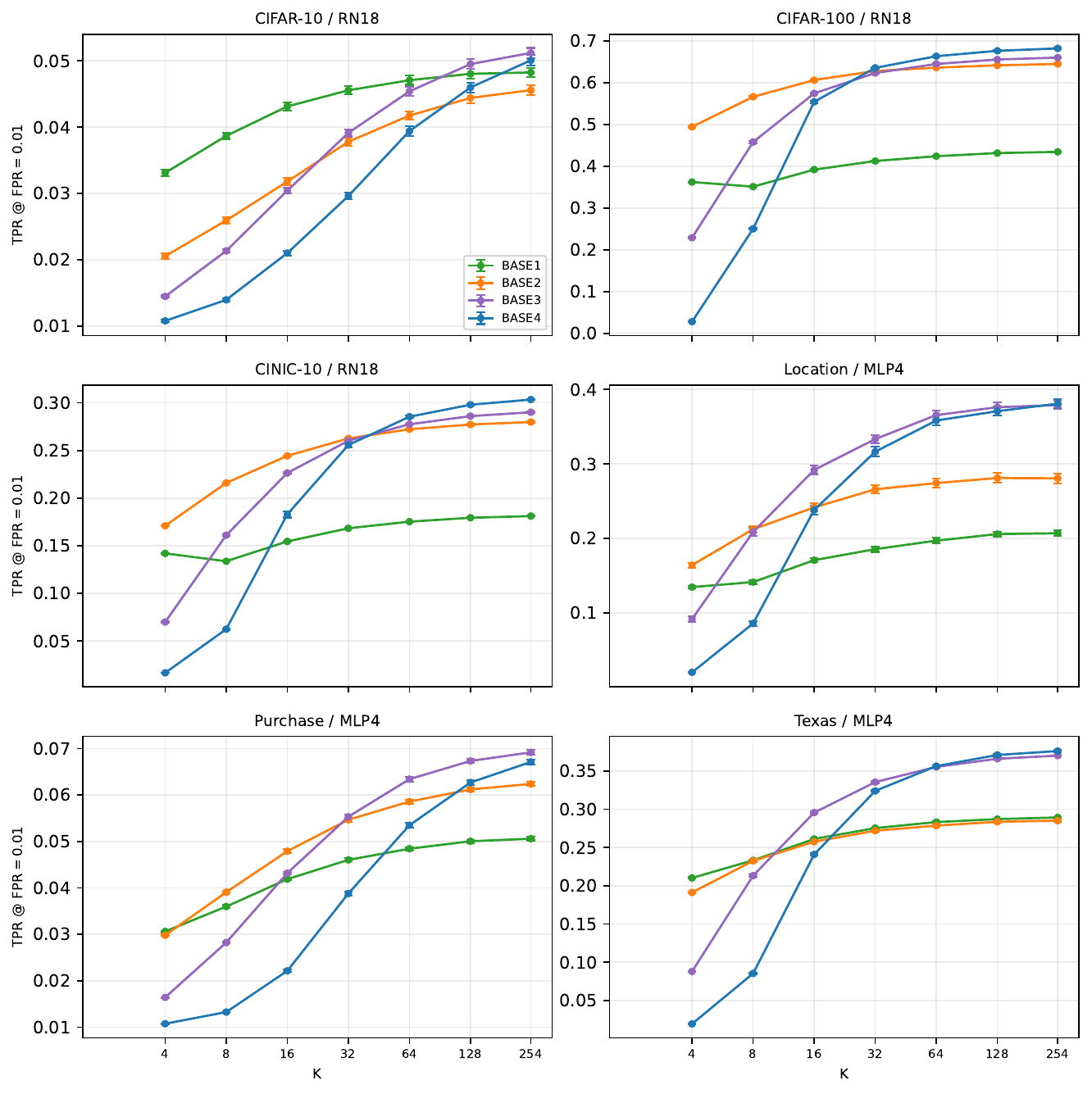}
\caption{Online TPR at FPR${=}0.01$ vs.\ $K$ for the BASE hierarchy (RN18/MLP4 testbeds).}\label{fig:online_base_tpr_rn}
\end{figure}

\begin{figure}
\centering
\includegraphics[width=0.95\linewidth]{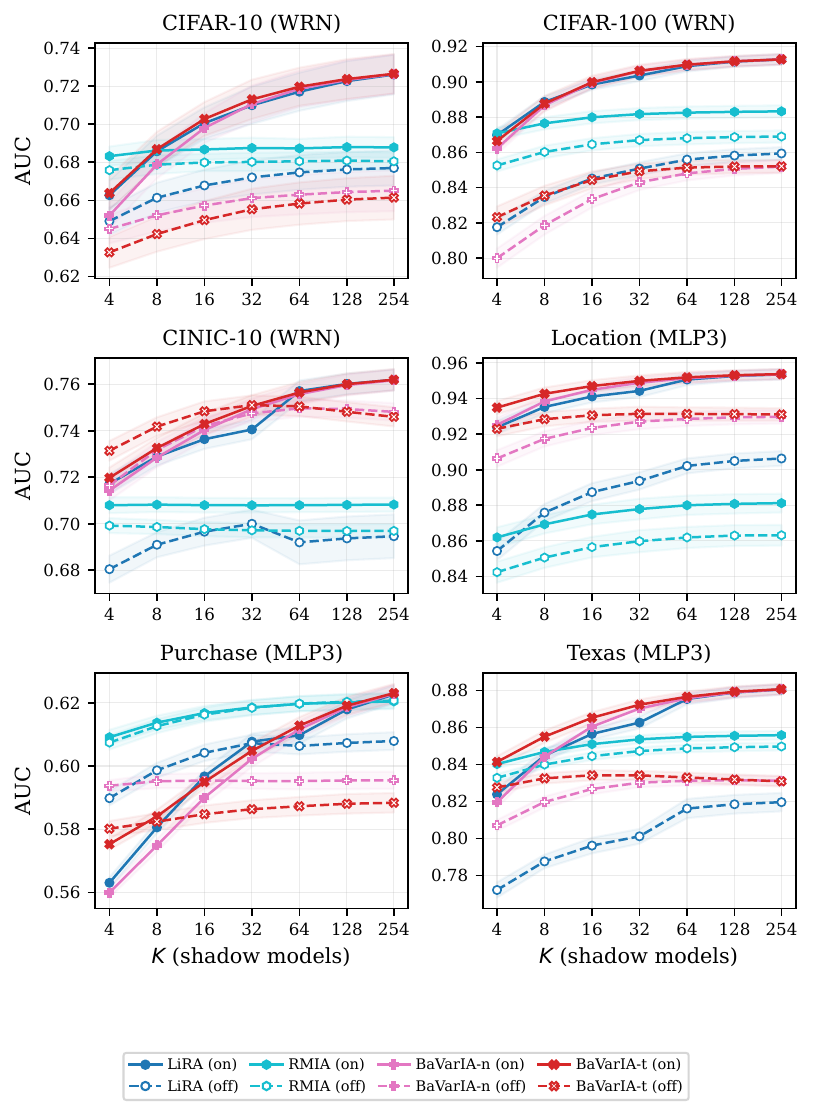}
\caption{Online vs.\ offline AUC as a function of $K$ (WRN/MLP3 testbeds). Solid = online, dashed = offline.}\label{fig:offline_auc}
\end{figure}
\begin{figure}
\centering
\includegraphics[width=0.95\linewidth]{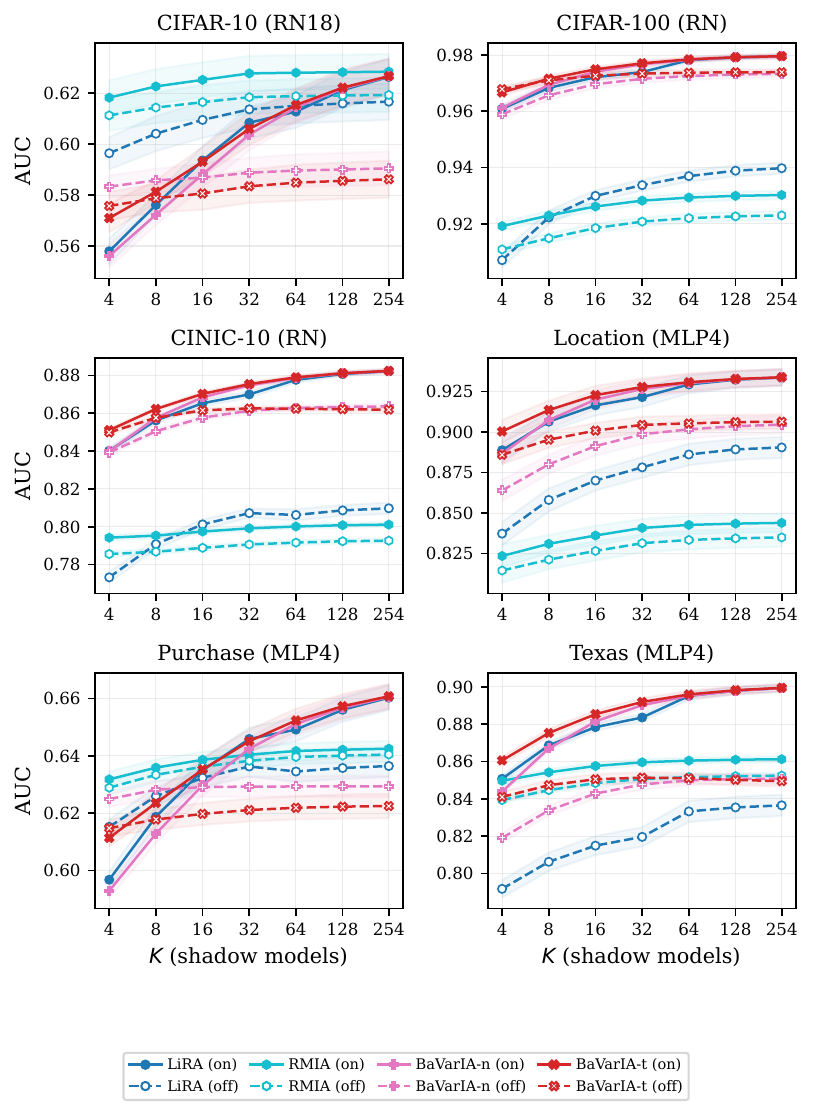}
\caption{Online vs.\ offline AUC as a function of $K$ (RN18/MLP4 testbeds).}\label{fig:offline_auc_rn}
\end{figure}
\begin{figure}
\centering
\includegraphics[width=0.95\linewidth]{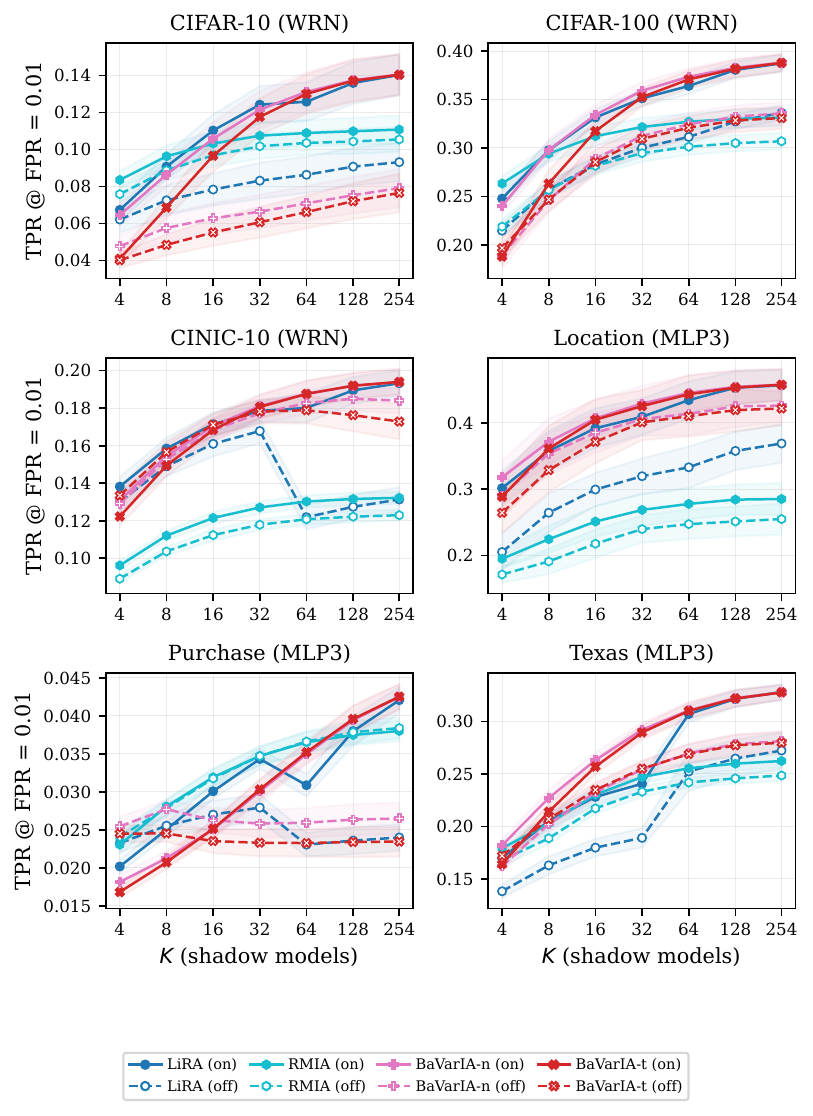}
\caption{Online vs.\ offline TPR at FPR${=}0.01$ as a function of $K$ (WRN/MLP3 testbeds). Solid = online, dashed = offline.}\label{fig:offline_tpr}
\end{figure}
\begin{figure}
\centering
\includegraphics[width=0.95\linewidth]{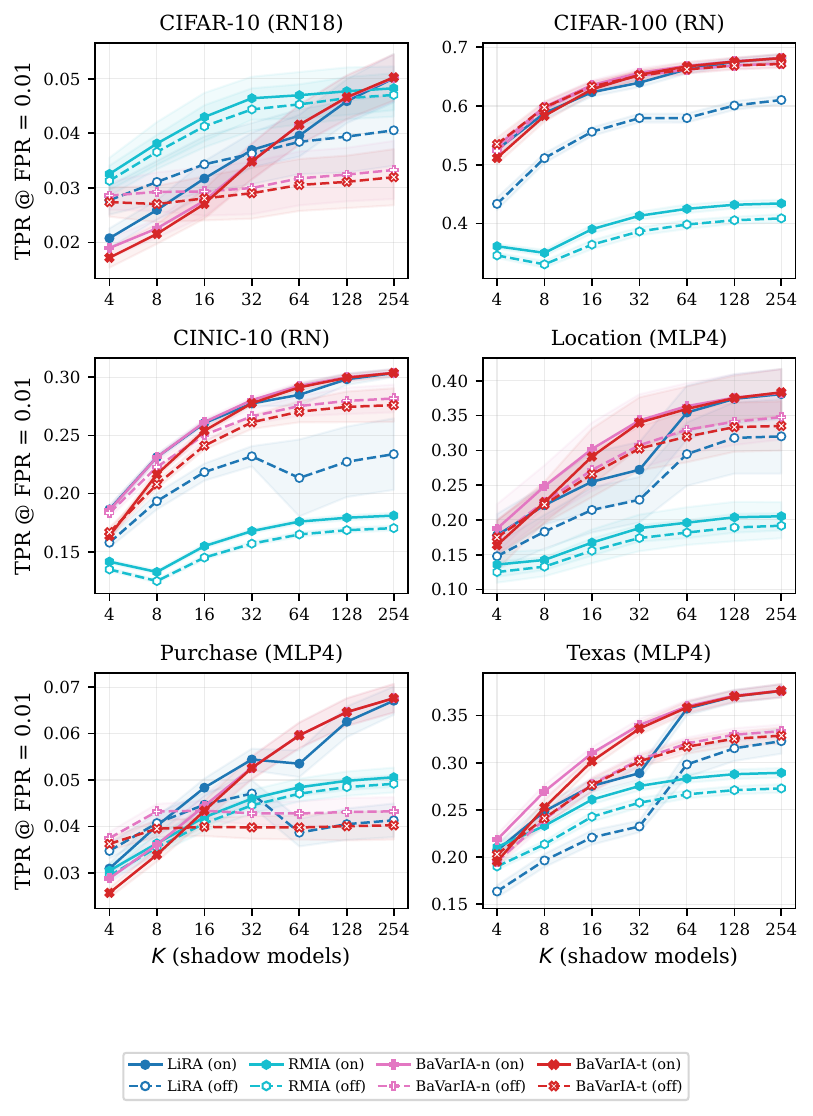}
\caption{Online vs.\ offline TPR at FPR${=}0.01$ as a function of $K$ (RN18/MLP4 testbeds).}\label{fig:offline_tpr_rn}
\end{figure}

\begin{figure}
\centering
\includegraphics[width=0.95\linewidth]{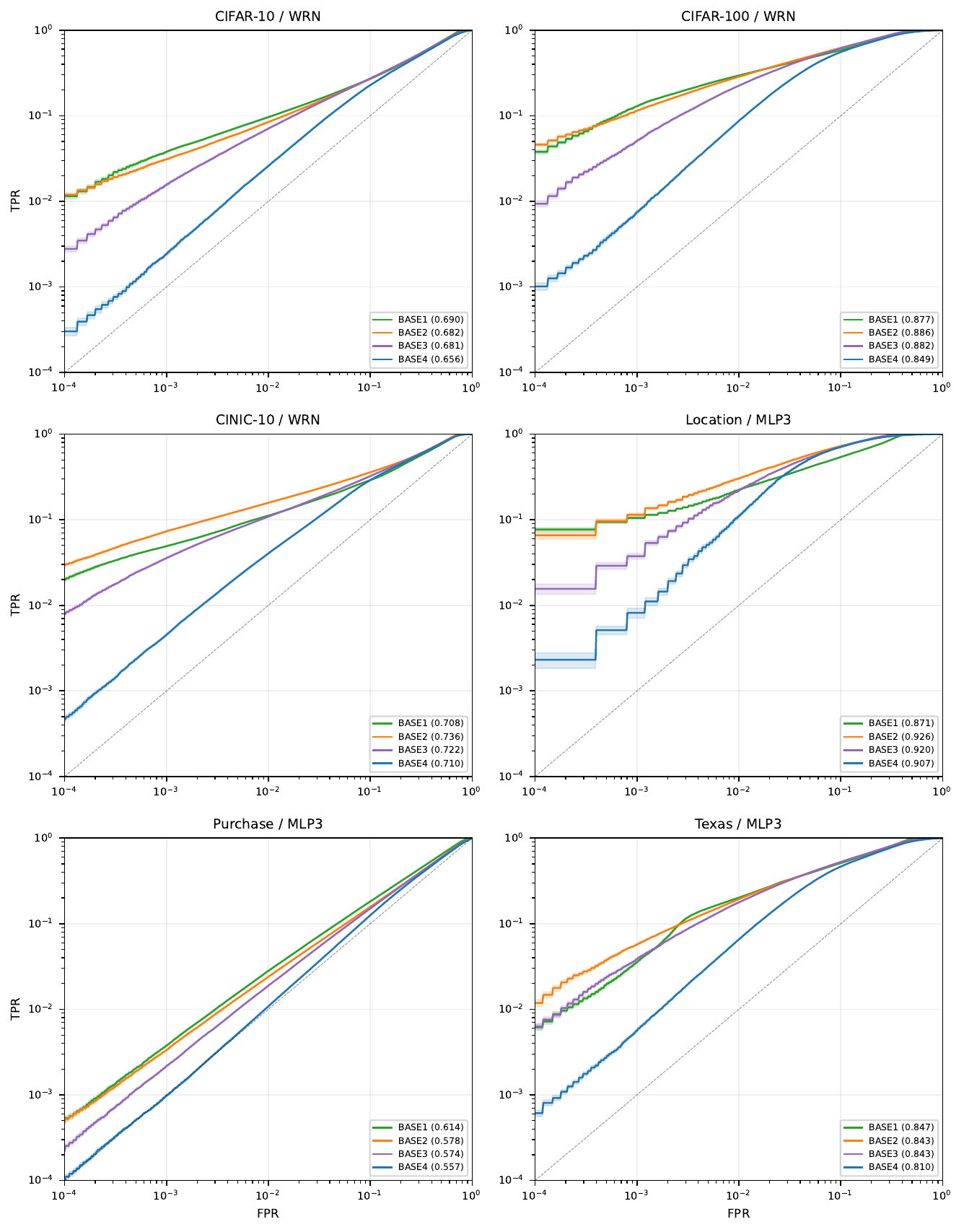}
\caption{BASE hierarchy ROC at $K=8$ (WRN/MLP3 testbeds).}\label{fig:roc_base_k8}
\end{figure}
\begin{figure}
\centering
\includegraphics[width=0.95\linewidth]{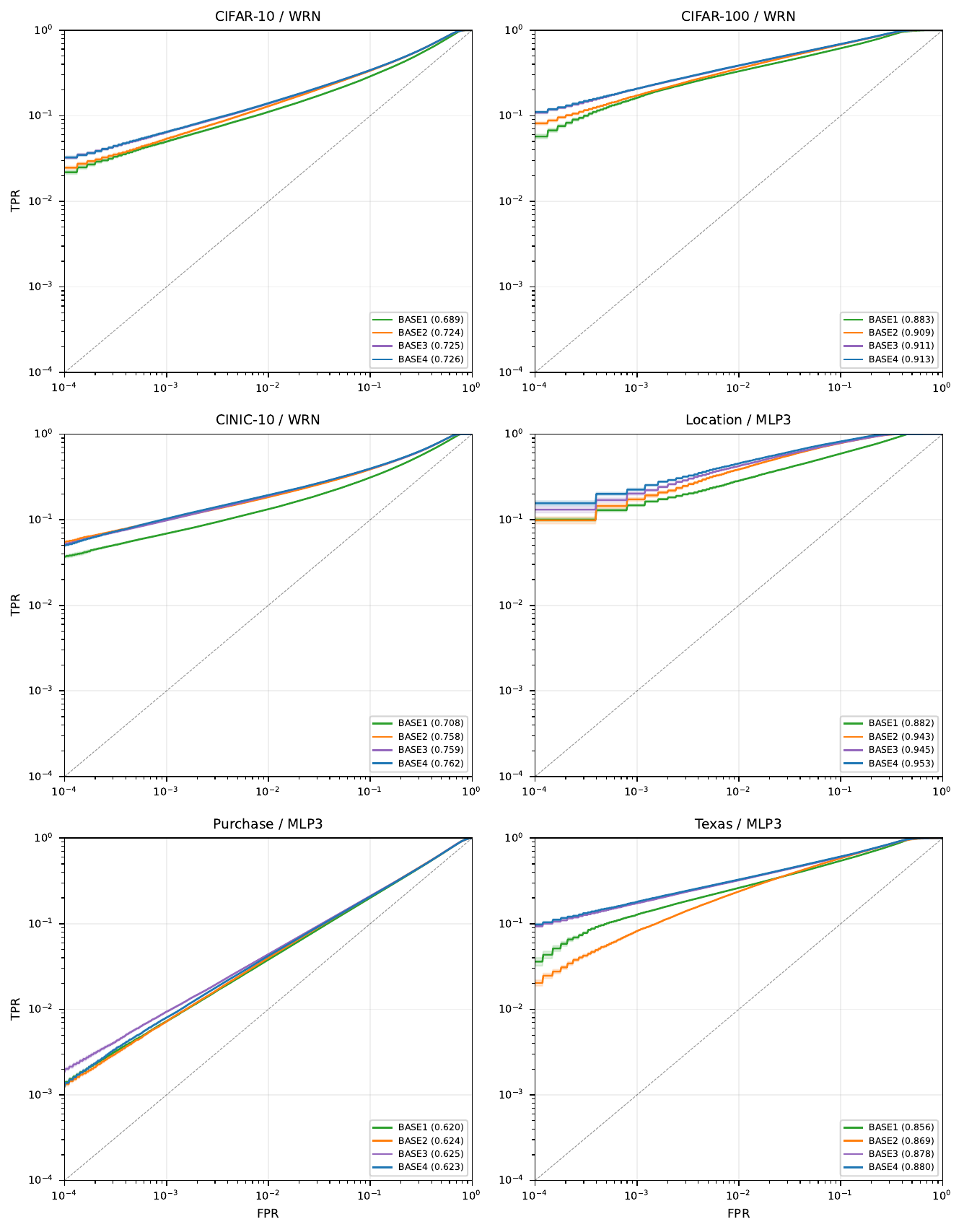}
\caption{BASE hierarchy ROC at $K=254$ (WRN/MLP3 testbeds), ordering reversed relative to $K=8$.}\label{fig:roc_base_k254}
\end{figure}
\begin{figure}
\centering
\includegraphics[width=0.95\linewidth]{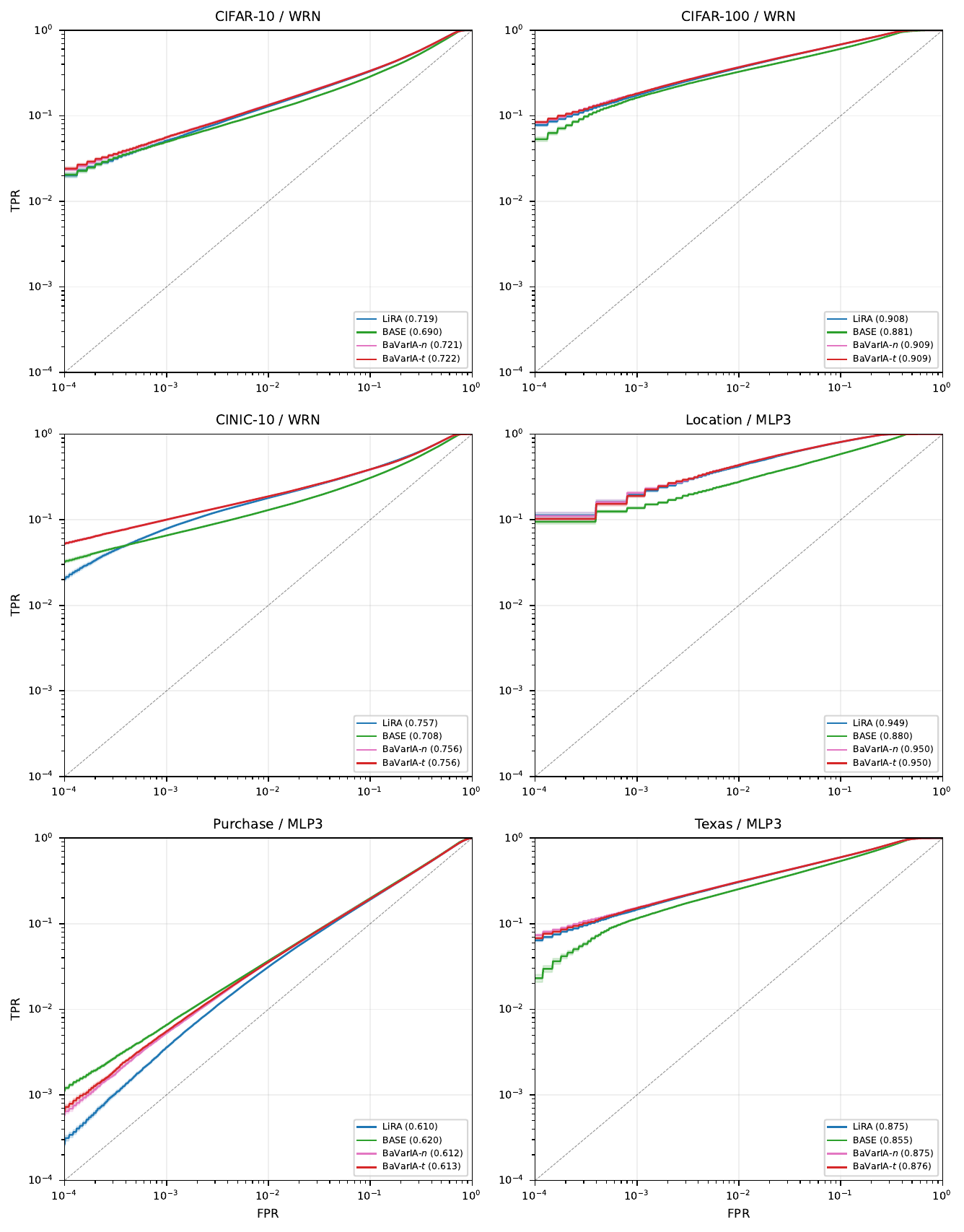}
\caption{BaVarIA online ROC at $K=64$ (WRN/MLP3 testbeds).}\label{fig:roc_bav_online_k64}
\end{figure}
\begin{figure}
\centering
\includegraphics[width=0.95\linewidth]{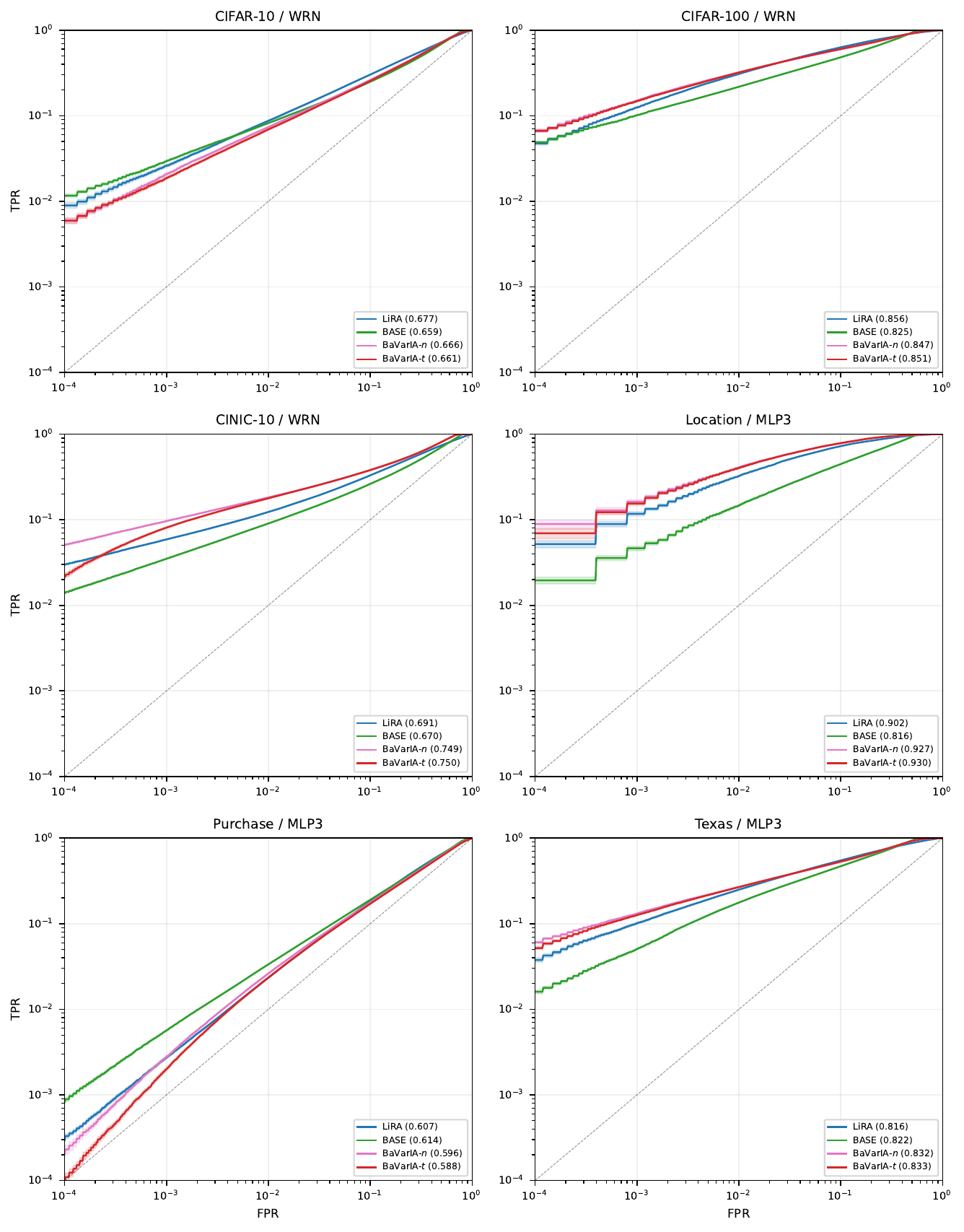}
\caption{BaVarIA offline ROC at $K=64$ (WRN/MLP3 testbeds).}\label{fig:roc_bav_offline_k64}
\end{figure}

\end{document}